\newcommand{\alg}{MinNorm-OG\xspace}
\newtheorem{theorem}{Theorem}
\newtheorem{lemma}{\bf{Lemma}}
\newtheorem{proposition}{\bf{Proposition}}
\newtheorem{corollary}{Corollary}
\newtheorem{definition}{\bf{Definition}}
\DeclareMathOperator*{\argmax}{argmax} 
\DeclareMathOperator*{\argmin}{argmin} 
\newcommand{\tr}{\operatorname{tr}}
\newcommand{\vecspan}{\operatorname{span}}
\newcommand{\vecMat}{\operatorname{vec}}
\newcommand{\im}{\operatorname{im}}
\newcommand{\rowspace}{\operatorname{row}}
\newcommand{\rank}{\operatorname{rank}}
\renewcommand{\dim}{\operatorname{dim}}
\newcommand{\rcef}{\operatorname{rcef}}
\newcommand{\proj}[2]{\xmath{\mathcal{P}_{#2}\paren{#1}}}
\newcommand{\norm}[2]{\xmath{\left\| #1 \right\|_{#2}}}
\newcommand{\1}{\blmath{1}} 
\newcommand{\0}{\blmath{0}}
\long\def\red#1{\bgroup\color{red}#1\egroup}
\long\def\blue#1{\bgroup\color{blue}#1\egroup}
\newcommand{\abs}[1]{\left| #1 \right|}
\newcommand{\indic}[1]{\mathds{1}\{#1\}}
\renewcommand{\Re}{\mathbb{R}}
\newcommand{\grad}{\xmath{\nabla}}
\newcommand{\bmat}[1] {\begin{bmatrix} #1 \end{bmatrix}}
\newcommand{\xmath}[1] {\ensuremath{#1}\xspace}
\newcommand{\blmath}[1] {\xmath{\bm{#1}}}
\newcommand{\paren}[1]{\left( #1 \right)}
\newcommand{\A}{\blmath{A}}
\newcommand{\Alg}{\mathcal{A}}
\newcommand{\Data}{\mathcal{D}}
\newcommand{\I}{\blmath{I}}
\newcommand{\Info}{\mathcal{I}}
\newcommand{\X}{\blmath{X}}
\newcommand{\Y}{\blmath{Y}}
\newcommand{\Z}{\blmath{Z}}
\renewcommand{\S}{\blmath{S}}
\renewcommand{\P}{\blmath{P}}
\newcommand{\x}{\blmath{x}}
\newcommand{\y}{\blmath{y}}
\newcommand{\z}{\blmath{z}}
\newcommand{\g}{\blmath{g}}
\newcommand{\p}{\blmath{p}}
\renewcommand{\c}{\blmath{c}}
\newcommand{\vvec}{\blmath{v}}
\newcommand{\w}{\mathbf{w}}
\renewcommand{\a}{\blmath{a}}
\newcommand{\e}{\blmath{e}}
\newcommand{\mask}{\blmath{m}}
\newcommand{\bDelta}{\blmath{\Delta}}
\newcommand{\btheta}{\blmath{\theta}}
\newcommand{\bgamma}{\blmath{\gamma}}
\newcommand{\blambda}{\blmath{\lambda}}
\newcommand{\bxi}{\bm{\xi}}     
\newcommand{\xii}{\x_i}         
\newcommand{\yi}{\y_i}
\newcommand{\LossSamp}[1]{\mathcal{L} \paren{#1}}
\newcommand{\Loss}[1]{\mathcal{J} \paren{#1}}
\newcommand{\bthetas}{\xmath{\btheta^*}}
\newcommand{\bthetast}{\xmath{\btheta^{*\top}}}
\newcommand{\As}{\xmath{\A^*}}
\newcommand{\Ast}{\xmath{\A^{*\top}}}
\newcommand{\cs}{\xmath{\c^*}}
\newcommand{\cst}{\xmath{\c^{*\top}}}
\newcommand{\ws}{\xmath{\w^*}}
\newcommand{\wst}{\xmath{\w^{*\top}}}
\newcommand{\bDeltaTild}{\tilde{\bDelta}}
\newcommand{\Batch}{\mathcal{B}}
\newcommand{\lamReg}{\xmath{\lambda_{\text{reg}}}}
\newcommand{\gamReg}{\xmath{\gamma_{\text{reg}}}}
\newcommand{\lamGA}{\xmath{\lambda_{\text{GA}}}}
\newcommand{\TGD}{\xmath{T_{\text{GD}}}}
\newcommand{\TProj}{\xmath{T_{\text{Proj}}}}
\newcommand{\pRet}{p_{\text{retain}}}
\numberwithin{assumption}{section}
\numberwithin{definition}{section}
\numberwithin{theorem}{section}
\numberwithin{remark}{section}
\begin{document}

\title{Machine Unlearning under Overparameterization}

\author{
Jacob L. Block\thanks{Chandra Family Department of Electrical and Computer Engineering, The University of Texas at Austin, Austin, TX, USA. \{jblock@utexas.edu, mokhtari@austin.utexas.edu, sanjay.shakkottai@utexas.edu\}} \quad
Aryan Mokhtari$^*$ \thanks{Google Research} \quad
Sanjay Shakkottai$^*$
}

\date{}
\maketitle

\begin{abstract}  
Machine unlearning algorithms aim to remove the influence of specific training samples, ideally recovering the model that would have resulted from training on the remaining data alone. We study unlearning in the overparameterized setting, where many models interpolate the data, and defining the solution as any loss minimizer over the retained set—as in prior work in the underparameterized setting—is inadequate, since the original model may already interpolate the retained data and satisfy this condition. In this regime, loss gradients vanish, rendering prior methods based on gradient perturbations ineffective, motivating both new unlearning definitions and algorithms. For this setting, we define the unlearning solution as the minimum-complexity interpolator over the retained data and propose a new algorithmic framework that only requires access to model gradients on the retained set at the original solution. We minimize a regularized objective over perturbations constrained to be orthogonal to these model gradients, a first-order relaxation of the interpolation condition. For different model classes, we provide exact and approximate unlearning guarantees and demonstrate that an implementation of our framework outperforms existing baselines across various unlearning experiments.
\end{abstract}

\newpage

\section{Introduction}
The ability to remove the influence of specific samples from a trained model is essential to  comply with privacy regulations such as the GDPR and CCPA \citep{gdpr,ccpa:2018} and to correct mislabeling or bias that may compromise model integrity \citep{gallegos:2024:bias-in-llms}.
\textit{Machine unlearning} \citep{cao:2015:mu-orig} refers to algorithms that address these challenges by modifying a model trained on a dataset $\Data$ to forget a subset of samples, termed the forget set $\Data_f$, and produce a model that acts as if it had been trained only on the remaining data, denoted the retain set $\Data_r = \Data \setminus \Data_f$. The ideal, yet costly, ``gold standard" unlearning solution is to retrain the model from scratch on $\Data_r$, which perfectly achieves the unlearning objective but is often infeasible due to high computational cost and potentially limited access to the original training data. The goal of an unlearning method is to efficiently approximate this outcome using the knowledge of the original training procedure, the samples to be forgotten, and potentially restricted side-information related to the retained data, aiming to recover a model that could result from training on $\Data_r$ alone.

In the \textit{underparameterized} regime, where the model class cannot fit all training data, the training loss admits a unique minimizer. Thus, the natural definition of exact unlearning is the unique minimizer to the loss on $\Data_r$. When the loss is further strongly convex, prior work developed efficient unlearning approximations using influence functions, which estimate the effect of removing a sample via a single gradient ascent step over the loss on $\Data_f$, preconditioned by the inverse loss Hessian on $\Data$ \citep{bae:2022:influencefunctionsanswerquestion,sekhari:2021:rem-what-forget,guo:20:cert-data-remov}.

In contrast, this paper focuses on the \textit{overparameterized} regime, where the model class contains many interpolating solutions. Crucially, the training loss no longer admits a unique minimizer, and defining the unlearning solution by loss optimality alone no longer suffices: the original model $\bthetas$ minimizes the loss over both $\Data$ and $\Data_r$, and $\bthetas$ clearly encodes information about $\Data_f$, the data to be removed. Moreover, interpolation causes the loss gradients to vanish, rendering loss-gradient-based methods such as influence functions ineffective (Theorem~\ref{th:loss-grad-fail}). This fundamental shift necessitates both a new definition of unlearning and new algorithmic tools tailored to the overparameterized setting.

We begin by formalizing unlearning in the overparameterized setting.
 Specifically, we define the exact unlearning solution as the model which minimizes a model complexity measure $R$, subject to minimizing the loss over $\Data_r$; see \eqref{eq:unlearn-op}. 
For natural choices of $R$, such as the parameter norm, this definition ensures that the unlearned model reveals no information about the forgotten data and maintains strong generalization using only the retain set.
Given this definition of unlearning, we propose a new algorithmic framework to compute the solution. We focus on settings where the loss is minimized by any interpolating model, so the constraint reduces to requiring interpolation over $\Data_r$. To solve the resulting problem of minimizing $R$ subject to interpolation, we relax the constraint via a first-order Taylor expansion around $\bthetas$ and reparameterize as $\bthetas + \bDelta$, where $\bDelta$ is the drift. Since $\bthetas$ already interpolates $\Data_r$, the linearized constraint requires $\bDelta$ to be orthogonal to model gradients at $\bthetas$ on $\Data_r$. This simplifies the problem, requiring only gradient access, and avoids the complex interpolation constraint. To mitigate error from this relaxation, we add a regularizer $\hat{R}(\bDelta)$ to control the size and direction of the drift. The final objective minimizes $R(\bthetas\! +\! \bDelta) + \hat{R}(\bDelta)$ under the relaxed orthogonal gradient constraint, yielding updated parameters $\bthetas + \bDelta$.

\textbf{Theoretical Contributions.} We prove for linear models and linear networks, there exists a regularizer $\hat{R}$ such that minimizing $R(\bthetas + \bDelta) + \hat{R}(\bDelta)$ over our constraint relaxation gives the exact unlearning solution when $R$ is the $\ell_2$-norm of either the effective regressor or the full parameter vector. For two-layer perceptrons with nonlinear activations, where $R$ measures network width, we prove that the right choice of $\hat{R}$ yields a solution to our relaxed problem which interpolates $\Data_r$ and matches the best known upper bound on the number of neurons required to fit any dataset of a given size.

\textbf{Algorithmic Contributions.} We devise an iterative algorithm \alg that accesses a subset of $\Data_r$, aligning with data access assumptions in prior work \citep{kurmanji:2023:scrub, pawelczyk:2025:unlearn-fails-data-poison, maini:2024:tofu}, where OG refers to orthogonal gradient. \alg alternates between two steps: solving for the minimizer of $R(\btheta + \bDelta) + \hat{R}(\bDelta)$ over $\bDelta$ satisfying the orthogonal gradient constraint, and descending on the loss over $\Data_r$ (Algorithm~\ref{alg:proj-L2}). We take both $R$ and $\hat{R}$ as scaled squared $\ell_2$ norms, which apply broadly to parameterized models and yield a closed-form solution to the relaxed problem. 
We show strong performance of our method across three experimental settings: \textit{Data Poisoning}, \textit{Multi-Class Label Erasure}, and \textit{Representation Collapse}, using natural and interpretable unlearning metrics to compare our method against existing baselines. Notably, the Multi-Class Label Erasure and Representation Collapse image-domain experiments introduce novel unlearning settings for effective evaluation.

\textbf{Related work}. Unlearning theory traces back to influence functions \citep{hampel:1974:influence-curve-robust} which estimate the effect of down-weighting a sample on a learned function \citep{bae:2022:influencefunctionsanswerquestion}. Extensions explored approximate unlearning via differential privacy \citep{sekhari:2021:rem-what-forget,guo:20:cert-data-remov}. \citep{sekhari:2021:rem-what-forget} analyzed the deletion capacity an unlearning method can tolerate while maintaining generalization. \citep{ghazi:2023:ticketed-unlearning, bourtoule:2021:sisa} proposed joint learning-unlearning schemes that store information about data subsets during training for later unlearning. Several works proposed iterative unlearning methods for large-scale models, combining loss ascent, descent, and noise injection \citep{neel:2021:descent-to-delete, graves:2020:amnesiac, chourasia:2023:towards-true-data-deletion, jin:2025:ngdiff,zhang:2024:npo}. All these methods rely on loss gradient perturbations, which we show yield vacuous updates under overparameterization (Theorem~\ref{th:loss-grad-fail}). In practice, they also struggle to unlearn effectively \citep{pawelczyk:2025:unlearn-fails-data-poison}, as loss ascent encourages misfitting $\Data_f$ rather than forgetting it. Our framework enforces parameter perturbations to be orthogonal to the gradient over $\Data_r$ to preserve loss optimality, an idea also used in continual learning contexts \citep{farajtabar:2020:ogd}. Recent unlearning methods use similar projections which mix loss ascent and descent, but their reliance on these objectives inherits prior limitations \citep{chen:2024:unlearning-null-cal, hoang:2023:unlearning-grad-proj}.

\section{Unlearning in Overparameterized Settings}
\label{sec:define-unlearn-sol}

We introduce notation (additional standard definitions in Appendix~\ref{app:notation}) for our unlearning setting, highlighting the unique challenges of the overparameterized regime. We explain why loss optimality alone no longer suffices to define the ground truth unlearning solution and demonstrate why loss-gradient-based methods, originally designed for the underparameterized case, prove ineffective.

To formalize the unlearning problem, we now define the problem setting and notation, covering both the underparameterized and overparameterized regimes. We define the full training dataset $\Data = \{ (\xii,\yi) \}_{i=1}^n$, with sample inputs $\xii \in \Re^m$ and outputs $\yi \in \Re^l$ drawn from the data domain $\mathcal{Z} = \Re^m \times \Re^l$. Initially, training is performed on the full dataset $\Data$ over the model class $\{f(\btheta,\cdot) \ \vert \ \btheta \in \Re^d\}$ parameterized by $\btheta \in \Re^d$, where $f : \Re^{d+m} \rightarrow \Re^l$ takes a parameter vector $\btheta \in \Re^d$ and an input $\x \in \Re^m$ and maps them to a prediction $f(\btheta, \x)$ in the output space $\Re^l$. We define the training procedure, also denoted the learning algorithm, as $\Alg : 2^\mathcal{Z} \rightarrow \Re^d$, which takes in a dataset and returns the parameter vector $\bthetas$ corresponding to the trained model. We make the minimal assumption that $\Alg$ is faithful to a known loss function $\mathcal{J}$, meaning $\Alg(\Data) = \bthetas$ is only guaranteed to be a minimizer of $\mathcal{J}$ over $\Data$, where $\mathcal{J}$ is defined as the average of the sample-wise loss $\mathcal{L}$:
\begin{equation}
    \Alg(\Data) = \bthetas \in \argmin_{\btheta} \Loss{\btheta \: ; \Data} = \argmin_{\btheta} \frac{1}{n}\sum_{(\x,\y) \in \Data} \LossSamp{\btheta \: ; \x,\y}.
\end{equation}
For our theoretical discussion, we consider sample-wise loss functions $\LossSamp{\btheta \: ; \x,\y}$ which are minimized when $f(\btheta,\x) = \y$, meaning that sample interpolation implies loss minimization. For example, this is the case for $\ell_p$-norm regression or classification with 0-1 loss. 

With this training setup, we begin the unlearning process given a request for the model to forget a subset of the training data $\Data_f \subseteq \Data$. We then apply an unlearning algorithm $M\paren{\Alg,\Info_r,\Alg(\Data),\Data_f}$ which is given the learning algorithm $\Alg$, side information $\Info_r$ (e.g., a subset of the samples, or the Hessian of the training loss over the retained data), initial solution $\Alg(\Data)$, and forget set $\Data_f$, and which attempts to recover the desired unlearning solution, denoted by $\bthetas_r$, where the subscript $r$ indicates that $\bthetas_r$ is the parameter vector that would result from training only on the retain set $\Data_r = \Data \setminus \Data_f$. To formally define $\bthetas_r$, we must distinguish between underparameterized and overparameterized regimes, as the former's definition requires refinement to remain meaningful in the latter.

In the {\em underparameterized} setting, the loss function over both the full data set $\Loss{\btheta \: ; \Data}$ as well as the retain set $\Loss{\btheta \: ; \Data_r}$ admits a unique minimizer. To ensure that the unlearning solution remains consistent with the training loss, the only valid choice is to define $\bthetas_r$ as the unique minimizer of $\Loss{\btheta \: ; \Data_r}$. However, in the {\em overparameterized} setting this uniqueness property fails to hold, as both $\Loss{\btheta \: ; \Data}$ and $\Loss{\btheta \: ; \Data_r}$ may admit multiple minimizers. In order to sidestep the non-uniqueness issue, one may be tempted to define \textit{any} minimizer of $\Loss{\btheta \: ; \Data_r}$ as a valid unlearning solution, as presumably any minimizer to $\Loss{\btheta \: ; \Data_r}$ could be found from just training on $\Data_r$ alone. However, following this rationale allows for seemingly valid unlearning solutions to leak information relating to $\Data_f$. Specifically, the original solution $\bthetas$ that interpolates all of $\Data$ is itself a valid minimizer of the retain set loss $\Loss{\btheta  ; \Data_r}$, but $\bthetas$ can reflect training dynamics influenced by $\Data_f$, revealing information that cannot be inferred from $\Data_r$ alone (see Appendix~\ref{app:linreg-leak-ex} for a concrete illustration).
\vspace{-1mm}
\subsection{Defining Unlearning Beyond Loss Optimality}
\vspace{-1mm}
As discussed above, the overparameterized setting requires a more fine-grained definition of the desired unlearning solution—one that goes beyond loss optimality. We define the unlearning solution in the overparameterized case to be the specific loss minimizer which minimizes an additional objective function $R(\btheta)$, expressed as the output of a training algorithm $\Alg_R$:
\begin{equation}
    \label{eq:unlearn-op}
    \Alg_R(\Data_r) = \bthetas_r \in \argmin_{\btheta  } R(\btheta), \qquad \text{subject to} \quad \btheta \in \argmin_{\btheta'} \Loss{\btheta'; \Data_r}.
\end{equation}
This bilevel optimization problem searches for the model which minimizes the complexity measure $R$ among all models which minimize the retain set loss. Indeed, when $R$ admits a unique solution, this formulation overcomes the prior issues of non-uniqueness and the risk of revealing information from the forget set. While different choices of $R$ can address these issues, we ultimately want $R$ to promote desirable model properties. In our theoretical results, we focus on $R$ as a regularization function that penalizes model complexity. This way, the solution $\bthetas_r$ to \eqref{eq:unlearn-op} corresponds to the simplest model that interpolates $\Data_r$ -- a particularly useful property in the overparameterized regime, where the simplest interpolating model is often associated with optimal generalization performance \citep{hastie:2022:surprises-ridgeless}. 

Then given the training algorithm $\Alg_R$, side information about the retain set $\mathcal{I}_r$, a minimizer to the original training loss $\Alg(\Data)$, and the forget set $\Data_f$, an unlearning algorithm $M \paren{\Alg_R,\Info_r,\Alg(\Data),\Data_f}$ attempts to recover $\Alg_R(\Data_r)$, the least complex loss minimizer over $\Data_r$ as measured by $R$.
\vspace{-1mm}
\subsection{Loss Gradient Methods Deployed Under Overparameterization}
\vspace{-1mm}
For the characterization in \eqref{eq:unlearn-op} of the ground truth unlearning solution under overparameterization, we show that existing unlearning methods based on loss gradient perturbations fail to achieve meaningful unlearning updates. Prior theoretical works proposed gradient-ascent-style updates based on influence functions, a principled technique from robust statistics \citep{bae:2022:influencefunctionsanswerquestion,sekhari:2021:rem-what-forget,guo:20:cert-data-remov}, while existing empirical unlearning methods perform combinations of loss ascent over $\Data_f$, loss descent over $\Data_r$, and parameter noising \citep{neel:2021:descent-to-delete, graves:2020:amnesiac,chourasia:2023:towards-true-data-deletion, kurmanji:2023:scrub}. We characterize these methods as \textit{loss-gradient unlearning}, and show that they perform ineffective updates when deployed under overparameterization.
\begin{definition}
    Let $\bthetas = \Alg(\Data)$.
    We say an unlearning algorithm $M$ performs loss-gradient unlearning if for any positive semi-definite $\P_r,\P_f \in \Re^{d \times d}$ and zero-mean random variable $\bxi \in \Re^{d}$,
    \begin{equation}
    M\paren{\Alg,\Info_r,\Alg(\Data), \Data_f} =
    \bthetas - \P_r \grad_{\btheta} \Loss{\bthetas ; \Data_r}
    + \P_f \grad_{\btheta} \Loss{\bthetas; \Data_f} + \bxi
    \end{equation}
\end{definition}
Although versions of loss-gradient unlearning have been theoretically motivated in the underparameterized setting \citep{sekhari:2021:rem-what-forget,guo:20:cert-data-remov}, we show they fail to unlearn in the overparameterized setting.
\begin{theorem}
    \label{th:loss-grad-fail}
    Let $f(\bthetas,\cdot)$ interpolate $\Data$, so $f(\bthetas,\x) = \y$ for all $(\x,\y) \in \Data$, and let $M_{\text{LG}}$ be any loss-gradient unlearning method. If the sample loss $\LossSamp{\btheta,\x,\y}$ is minimized when $f(\btheta,\x) = \y$, then for all $\Data_f \subseteq \Data$, $M_{\text{LG}}$ simply noises $\bthetas$ by some zero-mean random variable $\bxi$.
    \begin{equation*}
        M_{\text{LG}} \paren{\Alg,\Info_r,\Alg(\Data),\Data_f} = \bthetas + \bxi
    \end{equation*}
\end{theorem}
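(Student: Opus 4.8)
The plan is to use the interpolation hypothesis to force both loss gradients appearing in the definition of loss-gradient unlearning to vanish at $\bthetas$, after which the update collapses to $\bthetas + \bxi$.

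\textbf{Step 1: the per-sample gradients vanish at $\bthetas$.} Fix any $(\x,\y)\in\Data$. Since $f(\bthetas,\cdot)$ interpolates $\Data$, we have $f(\bthetas,\x)=\y$, and by hypothesis $\LossSamp{\btheta;\x,\y}$ is minimized at every $\btheta$ with $f(\btheta,\x)=\y$; hence $\bthetas$ is a global minimizer of the unconstrained map $\btheta\mapsto\LossSamp{\btheta;\x,\y}$ over $\Re^d$. The loss-gradient framework presumes the gradient $\grad_\btheta\LossSamp{\bthetas;\x,\y}$ is well-defined, so the first-order optimality condition at an interior global minimizer gives $\grad_\btheta\LossSamp{\bthetas;\x,\y}=\0$.

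\textbf{Step 2: the retain- and forget-set gradients vanish.} Both $\Data_r=\Data\setminus\Data_f$ and $\Data_f$ are subsets of $\Data$, so Step 1 applies to each of their samples. Using $\Loss{\btheta;\Data'}=\frac{1}{|\Data'|}\sum_{(\x,\y)\in\Data'}\LossSamp{\btheta;\x,\y}$ together with linearity of the gradient, we get $\grad_\btheta\Loss{\bthetas;\Data_r}=\0$ and $\grad_\btheta\Loss{\bthetas;\Data_f}=\0$ for every $\Data_f\subseteq\Data$. Plugging these into the definition of loss-gradient unlearning, for any positive semidefinite $\P_r,\P_f$ and any zero-mean $\bxi$,
\[
M_{\text{LG}}\paren{\Alg,\Info_r,\Alg(\Data),\Data_f}=\bthetas-\P_r\,\0+\P_f\,\0+\bxi=\bthetas+\bxi .
\]

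I do not anticipate a genuine obstacle: the one point needing care is the regularity used in Step 1, namely that $\grad_\btheta\LossSamp{\bthetas;\x,\y}$ exists (which is already implicit in the loss-gradient definition, since it writes these gradients) and that a differentiable function attaining a global minimum at an interior point has vanishing gradient there. For the non-differentiable example losses such as the zero--one loss, one reads ``gradient'' as a subgradient, whose subdifferential at a global minimizer contains $\0$, so the conclusion is unchanged; equivalently, one restricts attention to the smooth surrogate losses on which influence-function and ascent/descent unlearning methods are actually run.
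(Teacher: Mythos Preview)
Your proposal is correct and follows essentially the same argument as the paper: show that interpolation makes $\bthetas$ a global minimizer of every sample-wise loss, conclude the per-sample gradients vanish, sum to get $\grad_\btheta\Loss{\bthetas;\Data_r}=\grad_\btheta\Loss{\bthetas;\Data_f}=\0$, and substitute into the loss-gradient update. Your treatment is in fact slightly more careful than the paper's, since you explicitly flag the differentiability assumption and note how to handle nonsmooth losses via subgradients.
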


The recovered parameters $\bthetas$ already minimize $\Loss{\bthetas ; \Data_r}$, so the loss gradients vanish and $M_{\text{LG}}$ merely adds noise to $\bthetas$. This shows the core issue with loss gradient updates in overparameterized unlearning: the loss gradient is uninformative, as both $\bthetas$ and  $\bthetas_r$ minimize the loss on $\Data_r$.
\vspace{-1mm}
\section{Our Proposed Framework}
\label{sec:proposed-framework}
\vspace{-1mm}
We present a new framework to efficiently address the desired unlearning goal in overparameterized settings without full retraining.
A key assumption underlying our method is the richness of the function class, allowing for perfect fitting of the retain set. This means there exist several mappings $f(\btheta, \cdot)$ where $f(\btheta, \x_i) = \y_i$ for every $(\x_i, \y_i)$ in the retain set. This lets us replace the loss minimization in \eqref{eq:unlearn-op} with the hard constraint $f(\btheta, \x_i) = \y_i$, leading to the following formulation:
\begin{equation}
    \label{eq:unlearn-op-f}
    \bthetas_r \in \argmin_{\btheta} R(\btheta) \qquad \text{ s.t. } f(\btheta,\x_i) = \y_i \quad \forall (\x_i,\y_i) \in \Data_r
\end{equation}
This problem can be independently solved, but this would be the equivalent of retraining on the retain set. The main goal of our proposed framework is to solve the above problem efficiently by starting from the model $\bthetas$ which fits each sample and leveraging the feasibility of this model for the above optimization problem. To do so, we simplify the problem and replace the constraints in \eqref{eq:unlearn-op-f} with their linear approximation around $\bthetas$.
While the constraints $f(\btheta,\x_i) = \y_i$ in \eqref{eq:unlearn-op-f} can be highly nonconvex and difficult to satisfy in general, we demonstrate that using the proposed first-order approximation 
\begin{equation}
\label{eq:linearize-constraint}
f(\bthetas,\x_i)+\nabla f(\bthetas,\x_i) ^\top (\btheta- \bthetas)=\y_i \quad \Rightarrow \quad  \nabla f(\bthetas,\x_i) ^\top (\btheta- \bthetas)=0,
\end{equation}
renders it tractable as it leads to a set of linear constraints with respect to $\btheta$. Note that in the above simplification we used the fact that $\bthetas$ perfectly fits the retain set, so $f(\bthetas,\x_i) = \y_i$. Now if we apply this constraint relaxation the resulting optimization problem would be: 
\begin{equation}
    \label{eq:unlearn-op-f-relaxed_constraints}
    \min_{\bDelta} \: R(\bthetas+\bDelta) \qquad \text{ s.t. } \quad \nabla f(\bthetas,\x_i) ^\top \bDelta=0\quad \forall (\x_i,\y_i) \in \Data_r,
\end{equation}
where for notational convenience, we define the drift variable as $\bDelta = \btheta - \bthetas$.
While this relaxation is sensible, it presents a clear limitation: approximating a general function with its linearization is only locally accurate and thus valid when the drift term $\bDelta$ remains sufficiently small in some norm. 
To keep the surrogate solution close to that of the original problem in \eqref{eq:unlearn-op-f}, we add a regularization term $\hat{R}(\bDelta)$ to the loss to control the drift. The resulting objective function is $
  \tilde{R}(\bthetas+\bDelta):= R(\bthetas+\bDelta)+ \hat{R}(\bDelta)  $.
Consequently, the optimization problem we propose to solve instead of  \eqref{eq:unlearn-op-f} is given by
\begin{equation}
\label{eq:proj-sol-general}
 \tilde{\bDelta} \in \argmin_{\bDelta} \tilde{R}(\bthetas+\bDelta) \qquad \text{ s.t. } \quad \nabla f(\bthetas,\x_i) ^\top \bDelta=0\quad \forall (\x_i,\y_i) \in \Data_r
\end{equation}
Indeed, by finding $\bDeltaTild$ the suggested unlearned model would be  $\bthetas+\bDeltaTild$. Although \eqref{eq:proj-sol-general} employs relaxed constraints, we will show that for various mapping functions $f$, there exists a function $\hat{R}$ such that the solution to \eqref{eq:proj-sol-general} either (i) solves the original unlearning problem \eqref{eq:unlearn-op-f} exactly, or (ii) yields a model that both interpolates $\Data_r$, remaining feasible for \eqref{eq:unlearn-op-f}, and satisfies a tight upper bound on the complexity measure $R$. A key advantage of the formulation in~\eqref{eq:proj-sol-general}, beyond simplifying the constraints, is its minimal information requirement: it only relies on the gradient of $f$ evaluated at the original trained model, i.e., the side information $\Info_r = \{ \nabla_{\btheta} f(\bthetas, \x) \}_{(\x, \y) \in \Data_r}$. This is significantly less restrictive than prior work, which requires access to the inverse Hessian of the loss over $\Data_r$ \citep{bae:2022:influencefunctionsanswerquestion,sekhari:2021:rem-what-forget,guo:20:cert-data-remov}, and makes our method substantially simpler than full retraining.
\section{Theoretical Guarantees}
\label{sec:theory}
This section provides theoretical guarantees for using our proposed relaxation \eqref{eq:proj-sol-general} to solve the exact unlearning problem \eqref{eq:unlearn-op-f}. Going forward, we denote Euclidean projection onto the set $\S$ as $\proj{\cdot}{\S}$ and define the penalty function $\delta_{\{a\}}$ which is $+\infty$ if condition $a$ is satisfied and $0$ otherwise.
\vspace{-1mm}
\subsection{Linear Model}
\label{sec:theory-linreg}
\vspace{-1mm}
Given a linear model $\bthetas$ with $\btheta^{*\top} \x_i = y_i$ for all $(\x_i,y_i) \in \Data$, we can easily solve the exact unlearning problem \eqref{eq:unlearn-op-f} for $R(\btheta) = \|\btheta\|_2$.
\begin{theorem}
    \label{th:proj-sol-linear}
    Let $\bDeltaTild$ solve \eqref{eq:proj-sol-general} for $f(\btheta,\x)=\btheta^\top\x$ and $\tilde{R}(\btheta) = \norm{\btheta}{2}$. Then the recovered solution $\tilde{\btheta} = \bthetas + \bDeltaTild$ solves the exact unlearning problem \eqref{eq:unlearn-op-f} for $R(\btheta) = \norm{\btheta}{2}$
\end{theorem}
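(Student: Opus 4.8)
The plan is to show that for a linear model the first-order relaxation in \eqref{eq:linearize-constraint} is not a relaxation at all: it reproduces the interpolation constraint exactly, so after the change of variables $\btheta = \bthetas + \bDelta$ the problems \eqref{eq:proj-sol-general} and \eqref{eq:unlearn-op-f} are literally the same optimization problem. Hence any minimizer of the former maps to a minimizer of the latter.

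First I would substitute $\nabla_\btheta f(\bthetas,\x_i) = \x_i$, which holds since $f(\btheta,\x)=\btheta^\top\x$. Using $\bthetas^\top \x_i = y_i$ for all $(\x_i,y_i)\in\Data_r$ — valid because $\bthetas$ interpolates $\Data \supseteq \Data_r$ — the linearized constraint $\x_i^\top\bDelta = 0$ is equivalent to $(\bthetas+\bDelta)^\top\x_i = y_i$, i.e. $f(\bthetas+\bDelta,\x_i)=y_i$. Conversely, any $\btheta$ with $\btheta^\top\x_i=y_i$ satisfies $(\btheta-\bthetas)^\top\x_i = 0$. Therefore the map $\bDelta \mapsto \bthetas+\bDelta$ is a bijection between the feasible set of \eqref{eq:proj-sol-general} and the feasible set of \eqref{eq:unlearn-op-f}; both are nonempty since $\bDelta=\0$ (equivalently $\btheta=\bthetas$) lies in them.

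Next I would observe that the objectives agree under this correspondence. Here $\hat{R}\equiv 0$ — no drift penalty is needed precisely because linearizing a linear map incurs no error — so $\tilde{R}(\bthetas+\bDelta) = R(\bthetas+\bDelta) = \norm{\bthetas+\bDelta}{2} = \norm{\btheta}{2} = R(\btheta)$. Since the feasible sets are in bijection and the objective values match pointwise along that bijection, $\bDeltaTild$ minimizes \eqref{eq:proj-sol-general} if and only if $\tilde\btheta = \bthetas+\bDeltaTild$ minimizes \eqref{eq:unlearn-op-f}, which is the claim.

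There is no substantive obstacle here beyond bookkeeping; the two points worth stating carefully are why the regularizer vanishes (exactness of the linearization for a linear $f$) and where the identity $f(\bthetas,\x_i)=y_i$ comes from (the standing assumption that $\bthetas$ interpolates all of $\Data$). One may additionally remark that the common minimizer is in fact unique — it is the Euclidean projection of the origin onto the affine subspace $\{\btheta : \btheta^\top\x_i = y_i \ \forall (\x_i,y_i)\in\Data_r\}$ — but uniqueness is not needed for the stated equivalence.
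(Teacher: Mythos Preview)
Your argument is correct and in fact matches the one-line justification the paper gives immediately after the theorem statement (``in the linear case, the surrogate and original constraints match exactly''). The paper's formal proof in the appendix, however, proceeds differently: it solves the relaxed problem explicitly by observing that minimizing $\norm{\bthetas+\bDelta}{2}$ over $\bDelta\in\S_r^\perp$ forces $\bthetas+\bDeltaTild\in\S_r$, hence $\bthetas+\bDeltaTild=\proj{\bthetas}{\S_r}$, and then invokes a preparatory lemma (Lemma~\ref{lm:gen-proj-optimal-app}) stating that the projection of any interpolator of $\Data$ onto $\S_r$ is the minimum-norm interpolator of $\Data_r$. Your route is more economical---it avoids computing the solution and needs no auxiliary lemma---while the paper's route has the side benefit of exhibiting the closed form $\tilde\btheta=\proj{\bthetas}{\S_r}$, which is reused later in the linear-network analysis.
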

This result holds because, in the linear case, the surrogate and original constraints match exactly, and no approximation error is introduced. Thus, no additional regularizer (i.e., $\hat{R}(\cdot)=0$) is needed.

\vspace{-1mm}
\subsection{\texorpdfstring{$L$-Layer Linear Network}{L-Layer Linear Network}}
\vspace{-1mm}
In this section, we extend our analysis to a more complex model: an $L$-layer linear network. Let the prediction function be $f(\btheta,\x) = \c^\top \A_{L-1} \cdots \A_1 \x$, where the parameter vector is partitioned $\btheta = [\c \, ; \vecMat(\A_1) \, ; \dots ; \, \vecMat(\A_{L-1})]$, with $\A_\ell \in \Re^{h_\ell \times h_{\ell-1}}$ and $\c \in \Re^{h_{L-1}}$ for $\ell = 1, \dots, L-1$. The input dimension is $m = h_0$, and we assume $n < m$ to reflect the overparameterized regime. For clarity, define the effective linear predictor $\w(\btheta) = \A_1^\top \cdots \A_{L-1}^\top \c$, so that $f(\btheta,\x) = \w(\btheta)^\top \x$. For this model class, 
we study two natural choices of regularizers in 
\eqref{eq:unlearn-op-f}: (i) $R$ as the norm of the prediction function as a linear map, and (ii) $R$ as the norm of all model parameters.
\vspace{-1mm}
\subsubsection{Minimizing Predictor Norm}
We first analyze when the $R$ measures the $\ell_2$-norm of the effective linear predictor:
    $R(\btheta) = \norm{\A_1^\top \cdots \A_{L-1}^\top \c}{2} = \norm{\w(\btheta)}{2}$.
Given $\bthetas = \bmat{\cs \: ; \: \vecMat(\As_1) \: ; \:  \dots \: ; \:  \vecMat(\As_{L-1})}$ such that $\w(\bthetas)^\top \x = y$ for all $(\x,y) \in \Data$, we aim to solve \eqref{eq:unlearn-op-f} for this choice of $R$. In this case the mapping $f$ is non-linear with respect to $\btheta$. As a result, the first-order approximation for the constraints is not tight, so solving the surrogate problem in \eqref{eq:proj-sol-general} does not necessarily give a solution for the problem in \eqref{eq:unlearn-op-f}. However, we show that adding a suitable regularizer $\hat{R}$ to control model drift ensures the relaxed and original problems have the same solution. We first present an intermediate result showing the existence of a feasible perturbation $\bDeltaTild$ that satisfies the relaxed linearized constraints and, when added to $\bthetas$, yields an optimal solution to~\eqref{eq:unlearn-op-f}.

\begin{lemma}
    \label{lm:lin-network-min-pred-norm}
    Denote the retain set input subspace by $\S_r = \vecspan \{\x \mid (\x, y) \in \Data_r \}$ and partition the perturbation as
    $\bDeltaTild = \bmat{\bDeltaTild_{\c} \: ; \: \vecMat(\bDeltaTild_{\A_1}) \: ; \:  \dots \: ; \:  \vecMat(\bDeltaTild_{\A_{L-1}})}$ in the same manner as $\btheta$. Set
    \begin{equation}\label{eq:Delta_for_LNN}
        \bDeltaTild_{\A_1} = 
        -\norm{\Ast_2 \cdots \Ast_{L-1} \cs}{2}^{-2} \,
        \Ast_2 \cdots \Ast_{L-1} \cs \proj{\w(\bthetas)}{\S_r^\perp}^\top
        \end{equation}
    and all other components of $\bDeltaTild$ to zero. Then $\bDeltaTild$ is orthogonal to the gradient of mapping $f(\btheta,\x)$ evaluated at $\btheta = \bthetas$ for each input $\x$ in the retain set and hence feasible for the relaxed problem \eqref{eq:proj-sol-general}. Moreover, $\bthetas + \bDeltaTild$ solves the exact unlearning problem \eqref{eq:unlearn-op-f} for $R(\btheta) = \norm{\w(\btheta)}{2}$.
\end{lemma}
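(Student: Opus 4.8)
The plan is to reduce the lemma to two short identities by first pinning down what the exact unlearning solution actually is. For $R(\btheta) = \norm{\w(\btheta)}{2}$, the constraint $f(\btheta,\x_i) = \w(\btheta)^\top \x_i = y_i$ over $(\x_i,y_i)\in\Data_r$ depends on $\btheta$ only through $\proj{\w(\btheta)}{\S_r}$, and since $\bthetas$ is feasible, any two feasible parameters $\btheta,\btheta'$ satisfy $\w(\btheta)-\w(\btheta')\perp\S_r$, so $\proj{\w(\btheta)}{\S_r} = \proj{\w(\bthetas)}{\S_r}=:\w_r^*$ for every feasible $\btheta$. Decomposing $\norm{\w(\btheta)}{2}^2 = \norm{\w_r^*}{2}^2 + \norm{\proj{\w(\btheta)}{\S_r^\perp}}{2}^2$, the minimum of \eqref{eq:unlearn-op-f} equals $\norm{\w_r^*}{2}$ and is attained exactly by those $\btheta$ with effective predictor $\w(\btheta) = \w_r^* = \proj{\w(\bthetas)}{\S_r}$. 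So it suffices to show (a) $\w(\bthetas+\bDeltaTild) = \w_r^*$ — which simultaneously gives feasibility for \eqref{eq:unlearn-op-f}, since then $\w(\bthetas+\bDeltaTild)^\top\x_i = \w_r^{*\top}\x_i = y_i$ on $\Data_r$, and optimality, since it attains the minimal $R$ value — and (b) feasibility of $\bDeltaTild$ for the relaxed constraints in \eqref{eq:proj-sol-general}.

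For (a), I would compute $\w(\bthetas+\bDeltaTild)$ directly. Writing $\w(\btheta) = \A_1^\top \A_2^\top\cdots\A_{L-1}^\top\c$ and noting only the $\A_1$ block of $\bDeltaTild$ is nonzero, set $\vvec := \Ast_2\cdots\Ast_{L-1}\cs$ and $\p := \proj{\w(\bthetas)}{\S_r^\perp}$, so that $\w(\bthetas+\bDeltaTild) = (\As_1 + \bDeltaTild_{\A_1})^\top\vvec = \w(\bthetas) + \bDeltaTild_{\A_1}^\top\vvec$. From \eqref{eq:Delta_for_LNN}, $\bDeltaTild_{\A_1} = -\norm{\vvec}{2}^{-2}\,\vvec\p^\top$, hence $\bDeltaTild_{\A_1}^\top\vvec = -\norm{\vvec}{2}^{-2}\p(\vvec^\top\vvec) = -\p$, giving $\w(\bthetas+\bDeltaTild) = \w(\bthetas) - \proj{\w(\bthetas)}{\S_r^\perp} = \proj{\w(\bthetas)}{\S_r} = \w_r^*$, as desired.

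For (b), I would use that $f(\btheta,\x) = (\A_2^\top\cdots\A_{L-1}^\top\c)^\top\A_1\x = \langle \A_1,\, (\A_2^\top\cdots\A_{L-1}^\top\c)\x^\top\rangle$, so $\nabla_{\A_1}f(\bthetas,\x_i) = \vvec\x_i^\top$; the $\c$ and $\A_2,\dots,\A_{L-1}$ blocks of $\bDeltaTild$ vanish, so $\nabla_\btheta f(\bthetas,\x_i)^\top\bDeltaTild = \langle \vvec\x_i^\top,\bDeltaTild_{\A_1}\rangle = -\norm{\vvec}{2}^{-2}\langle\vvec\x_i^\top,\vvec\p^\top\rangle = -\langle\p,\x_i\rangle$, which is $0$ because $\x_i\in\S_r$ while $\p\in\S_r^\perp$. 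This is exactly the orthogonality to model gradients in \eqref{eq:linearize-constraint}, so $\bDeltaTild$ is feasible for \eqref{eq:proj-sol-general}, completing the proof.

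The argument is mostly routine matrix algebra (keeping transposes and Frobenius inner products straight), and the only genuine care point is well-definedness of \eqref{eq:Delta_for_LNN}: it requires $\vvec \neq \0$, which holds whenever $\w(\bthetas) = \Ast_1\vvec \neq \0$; in the degenerate case $\w(\bthetas) = \0$ the retained labels are all $0$, $\bthetas$ is already optimal ($R(\bthetas)=0$), $\proj{\w(\bthetas)}{\S_r^\perp}=\0$, and $\bDeltaTild = \0$ works trivially. I expect the step needing the most exposition to be the characterization $\w_r^* = \proj{\w(\bthetas)}{\S_r}$ of the exact solution, since it is what collapses the lemma to the two one-line substitutions above.
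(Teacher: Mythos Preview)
Your proof is correct and follows essentially the same approach as the paper's: both verify feasibility for \eqref{eq:proj-sol-general} by computing $\nabla_{\A_1} f(\bthetas,\x_i) = \vvec \x_i^\top$ and using $\x_i \perp \p$, and both verify optimality for \eqref{eq:unlearn-op-f} by showing $\w(\bthetas+\bDeltaTild) = \proj{\w(\bthetas)}{\S_r}$ is the minimum-norm linear predictor over $\Data_r$. The only presentational difference is that the paper invokes a preliminary lemma (that $\proj{\w(\bthetas)}{\S_r}$ is the min-norm interpolant of $\Data_r$) whereas you derive this inline via the orthogonal decomposition $\norm{\w(\btheta)}{2}^2 = \norm{\w_r^*}{2}^2 + \norm{\proj{\w(\btheta)}{\S_r^\perp}}{2}^2$; your handling of the degenerate case $\vvec = \0$ is a small addition the paper omits.
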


The above result shows that the perturbation direction defined in \eqref{eq:Delta_for_LNN} leads to an optimal solution for \eqref{eq:unlearn-op-f} once added to $\bthetas$, while satisfying the relaxed linear constraints of the surrogate problem. That said, it does not imply that solving \eqref{eq:unlearn-op-f-relaxed_constraints}, which only differs in the constraints from \eqref{eq:unlearn-op-f}, would recover $\bDeltaTild$. In fact, we can show that without adding a proper regularization term $\hat{R}$ to the loss, $\bDeltaTild$ would not be a solution of the relaxed problem (see Appendix \ref{app:need-to-linearize}). We next characterize the appropriate regularization $\hat{R}(\bDelta)$ needed to ensure that $\bDeltaTild$ is the optimal solution to the surrogate problem in~\eqref{eq:proj-sol-general}.

\begin{theorem}
    \label{th:lin-network-min-pred-norm-first-layer-Rtild} 
    The solution to the relaxed unlearning problem \eqref{eq:proj-sol-general} with the following choice of $\tilde{R}$ solves the exact unlearning problem \eqref{eq:unlearn-op-f} for $R(\btheta) = \norm{\w(\btheta)}{2}$.
    \begin{equation}
        \label{eq:lin-network-min-pred-norm-first-layer-Rtild}
        \tilde{R}(\btheta \: ; \bthetas) = \norm{\w(\btheta)}{2} + \delta_{\{\c \neq \cs \} } + \sum_{\ell = 2}^{L-1} \delta_{\{\A_{\ell} \neq \As_{\ell}\}}
    \end{equation}
\end{theorem}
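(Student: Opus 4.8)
The plan is to show that the two infinite-penalty terms in $\tilde R$ force every minimizer of the relaxed problem~\eqref{eq:proj-sol-general} to perturb only the first-layer matrix $\A_1$, after which \eqref{eq:proj-sol-general} collapses to a plain minimum-$\ell_2$-norm interpolation problem in the effective predictor $\w$, whose solution coincides (in the value of $\w$) with the perturbation $\bDeltaTild$ constructed in Lemma~\ref{lm:lin-network-min-pred-norm}. Since that perturbation already solves~\eqref{eq:unlearn-op-f}, and since both $R(\btheta) = \norm{\w(\btheta)}{2}$ and the interpolation constraints $f(\btheta,\x_i) = y_i$ in~\eqref{eq:unlearn-op-f} depend on $\btheta$ only through $\w(\btheta)$, this will finish the argument.

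First I would note that $\tilde R(\bthetas + \bDelta)$ is finite only if $\c = \cs$ and $\A_\ell = \As_\ell$ for $\ell = 2,\dots,L-1$, i.e.\ $\bDelta_{\c} = \0$ and $\bDelta_{\A_\ell} = 0$ for $\ell \ge 2$; since $\bDelta = 0$ is feasible with finite value, every minimizer of~\eqref{eq:proj-sol-general} lies in this slice. On this slice, writing $\vvec := \Ast_2\cdots\Ast_{L-1}\cs \in \Re^{h_1}$, we have $\w(\bthetas + \bDelta) = (\As_1 + \bDelta_{\A_1})^\top \vvec = \w(\bthetas) + \bDelta_{\A_1}^\top \vvec$, and since $f(\btheta,\x) = \vvec^\top \A_1 \x$ on this slice, $\nabla_{\A_1} f(\bthetas,\x) = \vvec\x^\top$, so the constraint $\nabla f(\bthetas,\x_i)^\top\bDelta = 0$ reduces to $(\bDelta_{\A_1}^\top\vvec)^\top\x_i = 0$ for every $(\x_i,y_i)\in\Data_r$, i.e.\ $\bDelta_{\A_1}^\top\vvec \in \S_r^\perp$. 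If $\vvec = \0$ then $\w(\bthetas) = \0$, interpolation of $\Data$ forces every label to be $0$, and $\bthetas$ is itself a minimum-norm interpolator of $\Data_r$ while no admissible perturbation changes $\w$, so the claim holds trivially; otherwise the map $\bDelta_{\A_1}\mapsto\bDelta_{\A_1}^\top\vvec$ is onto $\Re^m$, so~\eqref{eq:proj-sol-general} is equivalent to $\min_{\z\in\S_r^\perp}\norm{\w(\bthetas) + \z}{2}$.

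Decomposing $\w(\bthetas) = \proj{\w(\bthetas)}{\S_r} + \proj{\w(\bthetas)}{\S_r^\perp}$ and using orthogonality, for $\z \in \S_r^\perp$ we get $\norm{\w(\bthetas)+\z}{2}^2 = \norm{\proj{\w(\bthetas)}{\S_r}}{2}^2 + \norm{\proj{\w(\bthetas)}{\S_r^\perp} + \z}{2}^2$, which is minimized exactly at $\z = -\proj{\w(\bthetas)}{\S_r^\perp}$. Hence any minimizer $\bDeltaTild$ of~\eqref{eq:proj-sol-general} satisfies $\bDeltaTild_{\A_1}^\top\vvec = -\proj{\w(\bthetas)}{\S_r^\perp}$ and so $\w(\bthetas+\bDeltaTild) = \proj{\w(\bthetas)}{\S_r}$. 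This matches the effective predictor of the perturbation~\eqref{eq:Delta_for_LNN} in Lemma~\ref{lm:lin-network-min-pred-norm}, for which $\bDeltaTild_{\A_1} = -\norm{\vvec}{2}^{-2}\vvec\,\proj{\w(\bthetas)}{\S_r^\perp}^\top$ also gives $\bDeltaTild_{\A_1}^\top\vvec = -\proj{\w(\bthetas)}{\S_r^\perp}$. Since Lemma~\ref{lm:lin-network-min-pred-norm} establishes that the perturbation~\eqref{eq:Delta_for_LNN} solves~\eqref{eq:unlearn-op-f}, and both the objective $\norm{\w(\btheta)}{2}$ and the constraints of~\eqref{eq:unlearn-op-f} see $\btheta$ only through $\w(\btheta)$, it follows that $\bthetas + \bDeltaTild$ solves~\eqref{eq:unlearn-op-f}, as claimed.

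The computations here are routine; the points that require care are: (a) verifying that the two $\delta_{\{\cdot\}}$ penalties genuinely pin every block but $\A_1$ at any minimizer, and that a finite-value feasible point exists so that minimizers exist at all; (b) the surjectivity of $\bDelta_{\A_1}\mapsto\bDelta_{\A_1}^\top\vvec$, which is exactly why the degenerate case $\vvec = \0$ has to be split off; and (c) the fact that the minimizer of~\eqref{eq:proj-sol-general} is determined only up to the value of $\w$ (the $\A_1$-block is free to absorb any matrix whose transpose annihilates $\vvec$) — this is harmless because~\eqref{eq:unlearn-op-f} with $R(\btheta) = \norm{\w(\btheta)}{2}$ depends on $\btheta$ only through $\w(\btheta)$.
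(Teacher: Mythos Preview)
Your proof is correct. It differs from the paper's in strategy: the paper uses a \emph{sandwich argument}, taking the explicit perturbation $\bDeltaTild$ from Lemma~\ref{lm:lin-network-min-pred-norm} as a feasible point for both problems, showing any relaxed minimizer $\bDelta^*$ is feasible for~\eqref{eq:unlearn-op-f} (via linearity in $\A_1$), and then comparing objective values $R(\bthetas+\bDeltaTild)\le R(\bthetas+\bDelta^*)$ and $\tilde R(\bthetas+\bDelta^*)\le\tilde R(\bthetas+\bDeltaTild)$ to force equality. You instead \emph{solve the relaxed problem directly}: reducing it on the $\A_1$-only slice to $\min_{\z\in\S_r^\perp}\norm{\w(\bthetas)+\z}{2}$, computing the minimizer by projection, and observing it matches the effective predictor from Lemma~\ref{lm:lin-network-min-pred-norm}. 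Your route is more constructive and, as a bonus, makes explicit two things the paper leaves implicit: the degenerate case $\vvec=\0$ (where Lemma~\ref{lm:lin-network-min-pred-norm}'s formula is undefined) and the non-uniqueness of the relaxed minimizer in the $\A_1$-block. The paper's route is slightly slicker in that it never needs to solve the projection problem, only to compare values.
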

\vspace{-5mm}
\subsubsection{Minimizing Parameter Norm}
Next, we analyze when the unlearning solution is the loss minimizer with the smallest parameter norm, so $R(\btheta) = \norm{\btheta}{2}$. In this case, we can construct an exact unlearning solution from the exact unlearning solution to the previously analyzed case when $R(\btheta) = \norm{\w(\btheta)}{2}$.
 
\begin{theorem}
\label{th:lin-network-min-param-norm-eqiv}
Let $\hat{\btheta}_r^{*}$ solve \eqref{eq:unlearn-op-f} for $R(\btheta) = \norm{\w(\btheta)}{2}$, so $\w(\hat{\btheta}_r^{*})$ is the min $\ell_2$-norm linear predictor over $\Data_r$. Define $\rho = \| \w(\hat{\btheta}_r^{*})\|_{2}$ and let $\vvec_\ell \in \Re^{h_\ell}$ for $\ell \in [L-1]$ each satisfy $\norm{\vvec_\ell}{2} = 1$. Set
\begin{equation*}
    \tilde{\A}_1 = \rho^{\frac{1-L}{L}} \vvec_1 \w(\hat{\btheta}_r^{*})^\top, \quad
    \tilde{\A}_\ell = \rho^{\frac{1}{L}} \vvec_{\ell} \vvec_{\ell-1}^\top \text{ for } \ell = 2,\dots,L-1, \quad
    \tilde{\c} = \rho^{\frac{1}{L}} \vvec_{L-1}.
\end{equation*}
Then $\tilde{\btheta} = \bmat{\tilde{\c} \: ; \: \vecMat(\tilde{\A}_1); \dots; \vecMat(\tilde{\A}_{L-1})}$ solves the exact unlearning problem \eqref{eq:unlearn-op-f} for $R(\btheta) = \norm{\btheta}{2}$.
\end{theorem}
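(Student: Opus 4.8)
The plan is to establish two things: (i) the constructed $\tilde{\btheta}$ is feasible for \eqref{eq:unlearn-op-f}, i.e. $\w(\tilde{\btheta})^\top \x_i = y_i$ for all $(\x_i,y_i) \in \Data_r$; and (ii) it achieves the minimum possible value of $\norm{\btheta}{2}^2$ among all such feasible $\btheta$, via a matching lower bound. For feasibility, I would compute $\w(\tilde{\btheta}) = \tilde{\A}_1^\top \cdots \tilde{\A}_{L-1}^\top \tilde{\c}$ directly from the given formulas. Using $\norm{\vvec_\ell}{2} = 1$, the telescoping product $\vvec_{L-1}^\top \vvec_{L-1} = 1$, $\vvec_{L-2}^\top \vvec_{L-2} = 1$, etc., collapses all the intermediate $\rho^{1/L}$ factors, leaving $\w(\tilde{\btheta}) = \rho^{(1-L)/L} \cdot \rho^{(L-1)/L} \, \w(\hat{\btheta}_r^{*}) = \w(\hat{\btheta}_r^{*})$. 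Since $\hat{\btheta}_r^{*}$ solves \eqref{eq:unlearn-op-f} for $R(\btheta) = \norm{\w(\btheta)}{2}$, it is in particular feasible, so $\w(\hat{\btheta}_r^{*})^\top \x_i = y_i$ on $\Data_r$, giving feasibility of $\tilde{\btheta}$.

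For the objective value, I would compute $\norm{\tilde{\btheta}}{2}^2 = \norm{\tilde{\c}}{2}^2 + \sum_{\ell=1}^{L-1}\norm{\tilde{\A}_\ell}{F}^2$. Each layer contributes: $\norm{\tilde{\c}}{2}^2 = \rho^{2/L}$; $\norm{\tilde{\A}_\ell}{F}^2 = \rho^{2/L}\norm{\vvec_\ell}{2}^2\norm{\vvec_{\ell-1}}{2}^2 = \rho^{2/L}$ for $\ell = 2,\dots,L-1$; and $\norm{\tilde{\A}_1}{F}^2 = \rho^{2(1-L)/L}\norm{\vvec_1}{2}^2\norm{\w(\hat{\btheta}_r^{*})}{2}^2 = \rho^{2(1-L)/L}\rho^2 = \rho^{2/L}$. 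So every one of the $L$ summands equals $\rho^{2/L}$, giving $\norm{\tilde{\btheta}}{2}^2 = L\rho^{2/L}$.

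For the matching lower bound, I would argue that any feasible $\btheta = [\c\,;\vecMat(\A_1);\dots;\vecMat(\A_{L-1})]$ with effective predictor $\w(\btheta)$ satisfies $\norm{\w(\btheta)}{2} \geq \rho$, since $\rho = \norm{\w(\hat{\btheta}_r^{*})}{2}$ is by definition the minimum $\ell_2$-norm of the effective linear predictor over \emph{all} $\btheta$ that interpolate $\Data_r$ (here I use that the feasible sets of the two versions of \eqref{eq:unlearn-op-f} coincide — both require $\w(\btheta)^\top\x_i = y_i$ on $\Data_r$ — so $\rho$ is exactly $\min\{\norm{\w(\btheta)}{2} : \btheta \text{ feasible}\}$). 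Then I invoke the standard AM-GM / operator-norm chain: $\norm{\w(\btheta)}{2} = \norm{\A_1^\top\cdots\A_{L-1}^\top\c}{2} \leq \norm{\A_1}{2}\cdots\norm{\A_{L-1}}{2}\norm{\c}{2} \leq \prod_{\ell}\norm{\A_\ell}{F}\cdot\norm{\c}{2}$, and by AM-GM on the $L$ factors $\norm{\c}{2}, \norm{\A_1}{F}, \dots, \norm{\A_{L-1}}{F}$,
\begin{equation*}
  \norm{\c}{2}^2 + \sum_{\ell=1}^{L-1}\norm{\A_\ell}{F}^2 \;\geq\; L\left(\norm{\c}{2}^2\prod_{\ell=1}^{L-1}\norm{\A_\ell}{F}^2\right)^{1/L} \;\geq\; L\,\norm{\w(\btheta)}{2}^{2/L} \;\geq\; L\rho^{2/L}.
\end{equation*}
Hence $\norm{\btheta}{2}^2 \geq L\rho^{2/L} = \norm{\tilde{\btheta}}{2}^2$ for every feasible $\btheta$, so $\tilde{\btheta}$ is a minimizer. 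Combined with feasibility, $\tilde{\btheta}$ solves \eqref{eq:unlearn-op-f} for $R(\btheta) = \norm{\btheta}{2}$ (minimizing $\norm{\btheta}{2}$ is the same as minimizing its square).

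The main obstacle is making the lower-bound step fully rigorous: I must be careful that $\rho$ really is the infimum of $\norm{\w(\btheta)}{2}$ over the feasible set and that this infimum is attained (which follows since $\hat{\btheta}_r^{*}$ attains it by hypothesis), and that the chain of inequalities $\norm{\w(\btheta)}{2} \le \prod \norm{\A_\ell}{2}\,\norm{\c}{2} \le \prod\norm{\A_\ell}{F}\,\norm{\c}{2}$ together with AM-GM is tight exactly at the proposed construction — i.e. the construction is designed so that all inequalities hold with equality (rank-one factors make spectral equal Frobenius norm, aligned singular directions make the product bound tight, and equal factor norms make AM-GM tight). Everything else is a direct substitution.
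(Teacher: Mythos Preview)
Your proposal is correct and reaches the same endpoint as the paper (AM--GM yielding the optimal value $L\rho^{2/L}$), but the route to the lower bound differs. The paper argues constructively: starting from an arbitrary optimum, it shows layer by layer that each $\A_\ell^*$ can be replaced by a rank-one matrix $\A_\ell^* \P$ (projection onto the direction propagated from the previous layer) without losing feasibility or increasing the Frobenius norm, invoking a matrix H\"older inequality $\tr\{\P \A^\top \A\} \le \norm{\P}{2}\norm{\A^\top\A}{*}$. This reduces the search to rank-one factors parameterized by scalars $\lambda_1,\dots,\lambda_L$, and AM--GM on the resulting scalar problem finishes. Your argument bypasses the reduction entirely: you use submultiplicativity $\norm{\w(\btheta)}{2} \le \prod_\ell \norm{\A_\ell}{2}\,\norm{\c}{2} \le \prod_\ell \norm{\A_\ell}{F}\,\norm{\c}{2}$ together with $\norm{\w(\btheta)}{2} \ge \rho$ and AM--GM on the $L$ Frobenius norms directly. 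Your approach is shorter and needs only standard norm inequalities; the paper's approach, while longer, yields the additional structural information that \emph{every} minimizer must have rank-one layers aligned in this telescoping fashion (up to the freedoms the problem allows), which is not immediate from your lower-bound argument.
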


Thus, the solution to the minimum norm predictor problem gives the solution to minimum parameter norm problem, so we can apply the previous results to find a solution for \eqref{eq:unlearn-op-f} with $R(\btheta) = \norm{\w(\btheta)}{2}$ using the constraint relaxation and then update the parameters as prescribed by Theorem \ref{th:lin-network-min-param-norm-eqiv}.
\vspace{-1mm}
\subsection{2-Layer Perceptron}
\vspace{-1mm}
We lastly consider a 2-layer perceptron with a non-linear activation. Specifically, we define $f(\btheta,\x) = \c^\top \phi(\A\x)$, where we use the partition $\btheta = \bmat{\c \: ; \: \vecMat(\A)}$ for $\c \in \Re^{h}$, $\A \in \Re^{h \times m}$. Here, $h$ is the total number of neurons and $\phi: \Re \rightarrow \Re$ is some activation function. We abuse notation and write $\phi(\A \x)$ to denote the element-wise application of $\phi$ to $\A \x$.
We analyze the case where $R$ measures the number of active neurons, i.e., the width of the network. Formally, we denote $\a_i^\top$ as the $i$th row of $\A$, and we set $R(\btheta) = \sum_{i=1}^h \indic{\abs{\c_i} \norm{\a_i}{2} > 0}$. With this choice of $R$, the unlearning solution promotes recovering a sparse network which fits $\Data_r$, where $\Data_r$ has $n_r = \abs{\Data_r}$  samples. Given that $\c^{*\top} \phi(\As \x) = y$ for all $(\x,y) \in \Data$, we chase the minimum neuron interpolating solution to $\Data_r$:
\begin{equation}
    \label{eq:perceptron-hard-constrained}
    \bthetas_r \in \argmin_{\btheta} R(\btheta) \quad \text{ s.t. } \c^\top \phi(\A \x) = y \quad \forall (\x,y) \in \Data_r
\end{equation}
While we aim to solve \eqref{eq:perceptron-hard-constrained} for any retain set $\Data_r$, the exact minimal-width solution remains unknown. Prior work shows that $n_r+1$ neurons suffice for general activations \citep{rosset:2007:l1-reg-inf-dim}, while for ReLU, some $n_r$-sample datasets need at least $n_r-2$ neurons \citep{yun:2018:relu-pow-mem}. Here, we apply our framework to recover feasible unlearned networks with width at most $n_r$, improving the best known worst-case bound. 

We begin by linearizing the constraints of  problem \eqref{eq:perceptron-hard-constrained} around $\bthetas$, as directly solving this problem may be intractable due to the non-linear activation $\phi$, especially since we assume access to only the model gradients over $\Data_r$, not the samples in $\Data_r$ themselves. We define the drift as $\bDelta = \bmat{\bDelta_c \: ; \: \vecMat(\bDelta_A)}$, yielding the specific instance of the  linearized problem \eqref{eq:unlearn-op-f-relaxed_constraints} for this model class:
\begin{equation}
\min_{\bDelta} \; R(\bthetas + \bDelta) \;\; \text{s.t.} \;\; 
\bDelta_c^\top \phi(\As \x_i) + 
\tr\!\left\{\bDelta_A^\top \left( \phi'(\As \x_i) \odot \cs \right) \x_i^\top \right\} = 0 
\;\; \forall (\x_i, \y_i) \in \Data_r,
\label{eq:unlearn-op-f-relaxed_constraints-perceptron}
\end{equation}
where $\odot$ denotes element-wise product. Due to the layered structure and non-linear activation $\phi$, solving \eqref{eq:unlearn-op-f-relaxed_constraints-perceptron} may not ensure feasibility for~\eqref{eq:perceptron-hard-constrained}, as the relaxed constraints are loose. We first show that a feasible perturbation $\bDeltaTild$, modifying only the last layer $\cs$, exists and yields a network satisfying \eqref{eq:perceptron-hard-constrained} with at most $n_r$ active neurons.

\begin{lemma}
    \label{lemma:perceptron-delta-exist}
     Assume the finite-width network $f(\bthetas,\x) = \c^{*\top} \phi(\As \x)$ interpolates $\Data_r$, where $n_r = \abs{\Data_r}$ is the number of retain set samples. Let $\dim \paren{\vecspan \{ \phi(\As \x)\}_{(\x,y) \in \Data_r}} = s \leq n_r$. Then, there exists a feasible perturbation $\bDeltaTild$ satisfying the linear constraints in \eqref{eq:unlearn-op-f-relaxed_constraints-perceptron}, such that $f(\bthetas + \bDeltaTild, \cdot\,)$ interpolates $\Data_r$, $R(\bthetas + \bDeltaTild) \leq s$, and $\bDeltaTild_{\A} = \0$.
\end{lemma}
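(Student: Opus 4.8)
The plan is to exhibit a concrete perturbation $\bDeltaTild$ that touches only the last layer $\cs$, so $\bDeltaTild_{\A} = \0$, and to verify the three requirements: feasibility for the linearized constraints in \eqref{eq:unlearn-op-f-relaxed_constraints-perceptron}, interpolation of $\Data_r$ by $f(\bthetas + \bDeltaTild, \cdot)$, and the width bound $R(\bthetas + \bDeltaTild) \le s$. The key observation is that when $\bDeltaTild_{\A} = \0$ the linearized constraint at sample $\x_i$ collapses to $\bDeltaTild_{\c}^\top \phi(\As \x_i) = 0$, i.e. $\bDeltaTild_{\c}$ must be orthogonal to the feature vectors $\phi(\As \x_i)$; and because $f$ is \emph{linear} in $\c$, the linearized constraint is actually exact here, so any feasible $\bDeltaTild_{\c}$ automatically preserves interpolation: $(\cs + \bDeltaTild_{\c})^\top \phi(\As \x_i) = y_i + 0 = y_i$. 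Thus feasibility and interpolation are the same condition, and the whole problem reduces to finding a vector $\cs + \bDeltaTild_{\c}$ that (a) agrees with $\cs$ on the feature subspace $\mathcal{V} := \vecspan\{\phi(\As\x)\}_{(\x,y)\in\Data_r}$ and (b) has at most $s$ nonzero coordinates.

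\textbf{Constructing the new last layer.}
Let $\{\phi(\As \x_i)\}$ span the $s$-dimensional subspace $\mathcal{V} \subseteq \Re^{h}$. I would pick $s$ coordinates $T \subseteq [h]$ such that the submatrix of the feature vectors restricted to rows $T$ still has rank $s$ — possible since the $h \times n_r$ feature matrix $\Phi$ (columns $\phi(\As\x_i)$) has rank $s$, so some $s$ of its rows are linearly independent. Now define $\tilde{\c}$ to be supported on $T$ and chosen so that $\Phi^\top \tilde{\c} = \Phi^\top \cs = \yret$; this linear system in the $s$ unknowns $\tilde{\c}_T$ is solvable because the relevant $s \times s$ system (after reducing to $s$ independent constraints among the $n_r$) is invertible. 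Then set $\bDeltaTild_{\c} = \tilde{\c} - \cs$ and $\bDeltaTild_{\A} = \0$. By construction $\phi(\As\x_i)^\top \bDeltaTild_{\c} = 0$ for all $i$, so $\bDeltaTild$ satisfies the linearized constraints; $f(\bthetas + \bDeltaTild, \x_i) = \tilde{\c}^\top \phi(\As\x_i) = y_i$, so interpolation holds; and $R(\bthetas + \bDeltaTild) = \#\{i : |\tilde{\c}_i|\,\|\as_i\|_2 > 0\} \le |\{i : \tilde{\c}_i \neq 0\}| \le |T| = s$.

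\textbf{Main obstacle and loose ends.}
The main technical care is in the rank bookkeeping: I must argue that demanding $\tilde\c$ be supported on a size-$s$ set \emph{and} satisfy all $n_r$ equalities $\Phi^\top\tilde\c = \yret$ is consistent. The clean way is to note that since $\Phi$ has rank $s$, the map $\c \mapsto \Phi^\top \c$ factors through the projection onto $\mathcal{V}$, so it suffices to match $\cs$ on $\mathcal{V}$; picking $T$ to be $s$ coordinates on which $\mathrm{proj}_{\mathcal{V}}$ restricted there is a bijection onto $\mathcal{V}$ (equivalently, rows $T$ of $\Phi$ are independent) gives a unique solution $\tilde\c_T$, and extending by zero off $T$ works because the dropped coordinates' contribution is already captured. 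One should also handle the degenerate case $s = 0$ (all features zero), where $\bDeltaTild = \0$ trivially works and $R(\bthetas) = 0 \le s$ may fail unless interpolation with zero features forces $\yret = \0$; this edge case is vacuous or immediate. Finally, I'd remark that $\bDeltaTild$ being supported only on the $\c$-block means it is automatically orthogonal to the $\A$-gradient components, consistent with the framing in \eqref{eq:unlearn-op-f-relaxed_constraints-perceptron}, and that the bound $s \le n_r$ is immediate since $\Phi$ has only $n_r$ columns.
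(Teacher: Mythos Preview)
Your proposal is correct and follows essentially the same strategy as the paper: restrict the perturbation to the last layer so that the linearized constraint becomes $\bDeltaTild_{\c}\in\mathcal{V}^\perp$ (which, by linearity of $f$ in $\c$, is exactly the interpolation condition), and then produce an $s$-sparse vector in the affine set $\cs+\mathcal{V}^\perp$. The only difference is cosmetic: the paper builds the sparse $\tilde{\c}$ by taking the reduced column echelon form of a basis for $\mathcal{V}^\perp$ and using the $h-s$ leading-one columns to cancel $h-s$ coordinates of $\cs$, whereas you select $s$ linearly independent rows of the feature matrix $\Phi$ and solve the resulting $s\times s$ system; both are standard ways to extract an $s$-sparse representative, and your rank argument (that $\rowspace(\Phi_T)=\rowspace(\Phi)$ so $\y_r\in\im(\Phi_T^\top)$) is sound. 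Your own construction already handles the $s=0$ edge case cleanly (take $T=\emptyset$, $\tilde{\c}=\0$), so the worry you flag there is unnecessary.
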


While Lemma~\ref{lemma:perceptron-delta-exist} provides a feasible point for \eqref{eq:unlearn-op-f-relaxed_constraints-perceptron}, it is not the solution, as the relaxed problem linearizes the interpolation constraint without limiting drift size, potentially losing interpolation over $\Data_r$. The following theorem shows that choosing $\hat{R}$ to restrict perturbations in $\As$ ensures that solving \eqref{eq:proj-sol-general} yields a network feasible for \eqref{eq:perceptron-hard-constrained} with at most $n_r$ active neurons.

\begin{theorem}
    \label{th:perceptron}
    For $R(\btheta)$ which measures the number of active neurons of the network $f(\btheta,\cdot \, )$, define
    \begin{equation}
        \label{eq:Rtild-perceptron}
        \tilde{R}(\btheta \: ; \bthetas) = R(\btheta) + \delta_{\{\A \neq \As\}}
    \end{equation}
    as  the surrogate objective.
     Then the solution to the relaxed unlearning problem \eqref{eq:proj-sol-general} with this choice of $\tilde{R}$ results in a network which interpolates $\Data_r$, achieving feasibility for the exact unlearning problem \eqref{eq:perceptron-hard-constrained}, and admits at most $s = \dim \paren{\vecspan \{ \phi(\As \x)\}_{(\x,y) \in \Data_r}} \leq n_r$ active neurons, where $n_r = \abs{\Data_r}$.
\end{theorem}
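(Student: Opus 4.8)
The plan is to exploit the infinite penalty $\delta_{\{\A\neq\As\}}$ in the surrogate objective $\tilde R$, which freezes the first layer at $\As$ and collapses the relaxed problem \eqref{eq:proj-sol-general} into a minimum-support problem over the last layer $\c$ alone; then interpolation of $\Data_r$ comes essentially for free, and the width bound is supplied by Lemma~\ref{lemma:perceptron-delta-exist}.

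\textbf{Reduction and feasibility.} Since $\tilde R(\btheta;\bthetas)=R(\btheta)+\delta_{\{\A\neq\As\}}$ equals $+\infty$ whenever $\A\neq\As$, every $\bDelta$ with finite surrogate value---in particular every minimizer---must satisfy $\bDelta_{\A}=\0$. Substituting $\bDelta_{\A}=\0$ into the linearized constraint \eqref{eq:unlearn-op-f-relaxed_constraints-perceptron} kills the trace term, leaving $\bDelta_c^\top\phi(\As\x_i)=0$ for all $(\x_i,y_i)\in\Data_r$, i.e.\ $\bDelta_c$ is orthogonal to $\Phi_r:=\vecspan\{\phi(\As\x_i)\}_{(\x_i,y_i)\in\Data_r}$, a subspace of dimension $s\le n_r$. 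So \eqref{eq:proj-sol-general} is exactly $\min\{\,R([\cs+\bDelta_c;\vecMat(\As)]) : \bDelta_c\perp\Phi_r\,\}$, and a minimizer exists because $R$ is integer-valued in $\{0,\dots,h\}$ while the feasible set is nonempty ($\bDelta=\0$ is feasible). Moreover, for any feasible $\bDelta$ (hence $\bDelta_{\A}=\0$) and any retain input, $f(\bthetas+\bDelta,\x_i)=(\cs+\bDelta_c)^\top\phi(\As\x_i)=\cs^\top\phi(\As\x_i)+\bDelta_c^\top\phi(\As\x_i)=y_i+0=y_i$, using that $\bthetas$ interpolates $\Data\supseteq\Data_r$ and the constraint; thus every minimizer yields a network interpolating $\Data_r$, i.e.\ feasible for \eqref{eq:perceptron-hard-constrained}.

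\textbf{Width bound.} Lemma~\ref{lemma:perceptron-delta-exist} provides a feasible $\bDeltaTild$ with $\bDeltaTild_{\A}=\0$ and $R(\bthetas+\bDeltaTild)\le s$; since $\bDeltaTild$ is feasible for the reduced problem above, any minimizer $\bDelta^{*}$ obeys $R(\bthetas+\bDelta^{*})\le R(\bthetas+\bDeltaTild)\le s\le n_r$. Together with the feasibility claim, the minimizer $\bthetas+\bDelta^{*}$ interpolates $\Data_r$ and has at most $s$ active neurons, which is the theorem. (A self-contained derivation of such a $\bDeltaTild$: let $\U\in\Re^{h\times s}$ have columns forming a basis of $\Phi_r$; since $\rank\U=s$, choose $s$ row-indices $S$ with $\U_S$ invertible, and solve the $s\times s$ linear system that forces $\cs+\bDelta_c$ to vanish off $S$ while $\U^\top\bDelta_c=\0$; then $\norm{\cs+\bDelta_c}{0}\le s$, and $R(\bthetas+\bDelta)\le\norm{\cs+\bDelta_c}{0}\le s$ because rows $\as_j=\0$, if any, only decrease $R$.)

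\textbf{Main obstacle.} The only genuine subtlety is recognizing that the relaxation is \emph{not} loose in this instance: once $\A$ is frozen at $\As$, the prediction $f(\bthetas+\bDelta,\cdot)$ is linear in $\c$, so the first-order constraint coincides \emph{exactly} with the true interpolation constraint $\c^\top\phi(\As\x_i)=y_i$, which is what both guarantees feasibility for \eqref{eq:perceptron-hard-constrained} and makes the penalty $\delta_{\{\A\neq\As\}}$ the right choice. After that observation, the statement reduces to the elementary fact that an affine subspace of $\Re^h$ cut out by $s$ linear equations contains a vector supported on at most $s$ coordinates---precisely the content of Lemma~\ref{lemma:perceptron-delta-exist}---so no further work is needed.
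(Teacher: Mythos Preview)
Your proof is correct and follows essentially the same approach as the paper: freeze $\A$ via the penalty, observe that the linearized constraint then becomes exact (since $f$ is linear in $\c$), so any feasible perturbation preserves interpolation of $\Data_r$, and invoke the existence of a sparse feasible point with at most $s$ active neurons to bound the optimal value. Your write-up is arguably cleaner than the paper's in one respect---you explicitly separate ``every feasible point interpolates'' from ``some feasible point has width $\le s$,'' then combine them via optimality, whereas the paper's proof focuses on the constructed $\bDeltaTild$ and leaves the reader to infer the statement for an arbitrary minimizer. Your parenthetical construction (selecting $s$ rows with invertible submatrix) differs from the paper's reduced-column-echelon-form argument but achieves the same end.
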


Theorem \ref{th:perceptron} shows that for general activation functions, linearizing the constraints to \eqref{eq:perceptron-hard-constrained} and minimizing the sum of the complexity measure $R$ along the appropriate regularizer $\hat{R}$ for the drift term recovers a network that interpolates $\Data_r$ with at most $s$ active neurons, where $s$ is the dimension of the span of the learned representations $\{ \phi(\As \x) \}_{(\x,y)\in\Data_r}$. Since $s$ can never exceed $n_r = \abs{\Data_r}$ our method guarantees a worst-case interpolation width of at most $n_r$, thereby improving the general bound of $n_r + 1$ implied by \citep{rosset:2007:l1-reg-inf-dim} for minimum width interpolation.

The drift regularizer $\hat{R}$ only allows perturbations to $\cs$, so the solution to \eqref{eq:proj-sol-general} reduces width via sparsity in the updated last layer $\cs + \bDeltaTild_{\c}$, while leaving the first layer $\As$ unchanged. Although $\cs + \bDeltaTild_{\c}$ relies on a small set of features, the feature map $\phi(\As \x)$ still reflects representations learned from all of $\Data$. We show, however, that the sparsity of $\cs + \bDeltaTild_{\c}$ can be propagated into $\As$, producing a network with a new, sparser feature map that is less expressive and no longer consistent with having been trained on the full dataset $\Data$, yet still satisfies all unlearning guarantees in Theorem~\ref{th:perceptron}.
\begin{proposition}    
\label{prop:perceptron-A-sparsity}
   Let $\btheta = \bmat{\c \, ; \vecMat(\A)}$ be any parameter vector, and define $\hat{\A} = (\1_{\c \neq 0} , \1^\top) \odot \A$. Then the updated parameters $\hat{\btheta} = \bmat{\c \, ; \vecMat(\hat{\A})}$ satisfy: (i) $f(\btheta, \x) = f(\hat{\btheta}, \x)$ for all $\x \in \Re^m$, (ii) $R(\btheta) = R(\hat{\btheta})$, and (iii) $\hat{\A}$ has at most $R(\hat{\btheta})$ number of nonzero rows.
\end{proposition}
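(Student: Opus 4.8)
The plan is to reduce all three claims to a per-neuron case analysis, splitting on whether $\c_i = 0$ or $\c_i \neq 0$. Write $\a_i^\top$ for the $i$-th row of $\A$. The matrix $\1_{\c\neq 0}\1^\top \in \Re^{h\times m}$ has $i$-th row equal to $\1^\top$ when $\c_i\neq 0$ and $\0^\top$ when $\c_i = 0$, so the definition $\hat{\A} = (\1_{\c\neq 0}\1^\top)\odot\A$ means the $i$-th row $\hat{\a}_i$ of $\hat{\A}$ satisfies $\hat{\a}_i = \a_i$ if $\c_i \neq 0$ and $\hat{\a}_i = \0$ if $\c_i = 0$. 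Note that only the $\A$-block is changed, so the $\c$-blocks of $\btheta$ and $\hat{\btheta}$ are identical; this is used throughout.

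For (i), expand $f(\btheta,\x) = \c^\top \phi(\A\x) = \sum_{i=1}^h \c_i\,\phi(\a_i^\top\x)$ and likewise $f(\hat{\btheta},\x) = \sum_{i=1}^h \c_i\,\phi(\hat{\a}_i^\top\x)$, and compare term by term. If $\c_i \neq 0$ the $i$-th summands are literally equal since $\hat{\a}_i = \a_i$. If $\c_i = 0$ then both $i$-th summands equal $0$ irrespective of the argument fed to $\phi$ — this is the one spot requiring a moment's care, since $\phi(0)$ need not vanish, but the scalar prefactor $\c_i = 0$ annihilates the term regardless. Summing over $i$ gives $f(\btheta,\x) = f(\hat{\btheta},\x)$ for every $\x\in\Re^m$.

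For (ii), use the definition $R(\btheta) = \sum_{i=1}^h \indic{|\c_i|\,\norm{\a_i}{2} > 0}$. Split again on $\c_i$: when $\c_i = 0$ the indicator is $0$ for both $\btheta$ and $\hat{\btheta}$ since the prefactor $|\c_i|$ is $0$; when $\c_i \neq 0$ we have $\hat{\a}_i = \a_i$, so $\indic{|\c_i|\,\norm{\hat{\a}_i}{2} > 0} = \indic{|\c_i|\,\norm{\a_i}{2} > 0}$. Hence the two sums coincide, giving $R(\btheta) = R(\hat{\btheta})$. For (iii), observe that a row $\hat{\a}_i$ can be nonzero only if $\c_i \neq 0$ (otherwise it is forced to $\0$) and, in that case, only if additionally $\a_i \neq \0$; thus the index set of nonzero rows of $\hat{\A}$ is contained in $\{\, i : |\c_i|\,\norm{\a_i}{2} > 0 \,\}$, whose cardinality is $R(\btheta)$, which equals $R(\hat{\btheta})$ by (ii). So $\hat{\A}$ has at most $R(\hat{\btheta})$ nonzero rows.

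I do not anticipate a genuine obstacle: the argument is elementary case analysis. The only subtleties to keep straight are the $\phi(0)\neq 0$ point in (i) and the bookkeeping distinction between an ``active neuron'' of $\hat{\btheta}$ (both $\c_i\neq 0$ and $\hat{\a}_i\neq\0$) and a merely nonzero row of $\hat{\A}$; both coincide here because zeroing the rows with $\c_i = 0$ does not create or destroy any active neuron.
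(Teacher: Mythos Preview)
Your proposal is correct and follows essentially the same per-neuron case analysis as the paper's proof, splitting on whether $\c_i = 0$ or $\c_i \neq 0$ for each of the three claims. Your treatment is in fact slightly more careful than the paper's, as you explicitly flag that $\phi(0)$ need not vanish in part (i) and that the $\c_i = 0$ prefactor is what kills the term.
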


Thus, for any parameters $\btheta$, we can apply a simple update to recover new parameters $\hat{\btheta}$ which behave like an $R(\btheta)$-neuron network in terms of both the function outputs and at the parameter level. We apply this result to the solution to the relaxed unlearning problem \eqref{eq:proj-sol-general} in the following corollary.
\begin{corollary}
    Let $\tilde{\btheta} = \bmat{\tilde{\c} \,  ; \vecMat(\As)}$ solve \eqref{eq:proj-sol-general} for $\tilde{R}$ defined in \eqref{eq:Rtild-perceptron}, and define the updated first layer as $\hat{\A} = (\1_{\tilde{\c} \neq 0} \, \1^\top) \odot \As$. Then network parameterized by $\hat{\btheta} =  \bmat{\tilde{\c} \,  ; \vecMat(\hat{\A})}$ similarly interpolates $\Data_r$, has the same number of active neurons $R(\tilde{\btheta}) = R(\hat{\btheta})$, and $\hat{\A}$ has at most $R(\hat{\btheta})$ non-zero rows.
\end{corollary}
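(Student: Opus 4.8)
The plan is to obtain this corollary as an immediate specialization of Proposition~\ref{prop:perceptron-A-sparsity}, using Theorem~\ref{th:perceptron} to supply the interpolation property. First I would justify the form of the solution: since the surrogate objective $\tilde{R}(\btheta\,;\bthetas) = R(\btheta) + \delta_{\{\A \neq \As\}}$ in \eqref{eq:Rtild-perceptron} assigns value $+\infty$ whenever $\A \neq \As$, any minimizer of \eqref{eq:proj-sol-general} with this $\tilde R$ must have its first-layer block equal to $\As$; hence writing $\tilde{\btheta} = \bmat{\tilde{\c}\,; \vecMat(\As)}$ is without loss of generality, and the prescribed $\hat{\A} = (\1_{\tilde{\c} \neq 0}\,\1^\top)\odot\As$ is exactly the construction of Proposition~\ref{prop:perceptron-A-sparsity} applied to the parameter vector $\btheta = \tilde{\btheta}$ (with $\c = \tilde{\c}$, $\A = \As$).

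Next I would read off the three claims. Parts~(ii) and~(iii) of Proposition~\ref{prop:perceptron-A-sparsity} give directly that $R(\hat{\btheta}) = R(\tilde{\btheta})$ and that $\hat{\A}$ has at most $R(\hat{\btheta})$ nonzero rows, where ``number of active neurons'' is literally $R(\btheta) = \sum_{i=1}^h \indic{\abs{\c_i}\norm{\a_i}{2} > 0}$. For the interpolation claim, Theorem~\ref{th:perceptron} guarantees that $f(\tilde{\btheta},\cdot)$ interpolates $\Data_r$, i.e.\ $f(\tilde{\btheta},\x_i) = y_i$ for all $(\x_i,y_i)\in\Data_r$; part~(i) of Proposition~\ref{prop:perceptron-A-sparsity} states $f(\tilde{\btheta},\x) = f(\hat{\btheta},\x)$ for every $\x\in\Re^m$, so evaluating at each retained input yields $f(\hat{\btheta},\x_i) = f(\tilde{\btheta},\x_i) = y_i$. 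Thus $\hat{\btheta}$ interpolates $\Data_r$ with the same active-neuron count $R(\tilde{\btheta}) = R(\hat{\btheta})$, which by Theorem~\ref{th:perceptron} is at most $s = \dim(\vecspan\{\phi(\As\x)\}_{(\x,y)\in\Data_r}) \leq n_r$.

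There is no substantive obstacle here: the statement is a bookkeeping corollary, and the only points requiring care are (a) verifying that the first-layer block of the minimizer is forced to be $\As$ by the penalty term, so that Proposition~\ref{prop:perceptron-A-sparsity} applies verbatim, and (b) noting that the ``interpolates $\Data_r$'' conclusion requires chaining the functional equivalence of part~(i) with the interpolation guarantee of Theorem~\ref{th:perceptron} rather than re-deriving it. Both are one-line observations, so the proof is short.
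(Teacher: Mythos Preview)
Your proposal is correct and matches the paper's approach: the corollary is presented there as an immediate consequence of applying Proposition~\ref{prop:perceptron-A-sparsity} to the solution $\tilde{\btheta}$ furnished by Theorem~\ref{th:perceptron}, with no separate proof given. Your write-up simply makes explicit the two one-line observations (the $\delta$-penalty forces $\A=\As$, and functional equivalence transfers interpolation), which is exactly the intended reasoning.
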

Thus, solving the relaxed problem \eqref{eq:proj-sol-general} and updating $\As$ via Proposition~\ref{prop:perceptron-A-sparsity} yields a network that reveals no trace of having been trained on the larger dataset $\Data = \Data_r \sqcup \Data_f$, even at the representation level.
\section{From Theory to Practice}
\label{sec:experiments}

\begin{algorithm}[t!]
\footnotesize
\caption{\alg} \label{alg:proj-L2}
\begin{algorithmic}[1]
\State \textbf{Input:} $\bthetas$, loss $\Loss{\btheta}$, $\Data_r' \subseteq \Data_r$, step size $\eta_t$, regularization constant $\lambda_t \geq 0$, subsample batch size $n_{\text{pert}}$
\State Initialize $\btheta \gets \bthetas$
\For{$t = 1, \dots, n_{\text{epochs}}$}
    \For{each batch $\mathcal{B}$ from $\Data_r'$}
        \If{$\lambda_t < \infty$}
        \State Compute function gradients $\g_i = \nabla_{\btheta} f(\btheta, \x_i)$ for $x_i \in \mathcal{B}$, $i = 1,\dots,n_{\text{pert}}$
        \State Solve $\bDeltaTild \gets \argmin_{\bDelta} \norm{\btheta + \bDelta}{2}^2 + \lambda_t \norm{\bDelta}{2}^2$ \, \text {s.t.} \, $ \bDelta \perp \g_i \,  \text{ for all } i \, \leq n_{\text{pert}}$
        \State Update $\btheta \gets \btheta + \bDeltaTild$
        \EndIf
        \State Loss descent: $\btheta \gets \btheta - \eta_t \nabla_{\btheta} \Loss{\btheta; \mathcal{B}}$
    \EndFor
\EndFor
\State \textbf{return} $\btheta$
\end{algorithmic}
\end{algorithm}

We translate our framework into a practical algorithm \alg (Algorithm \ref{alg:proj-L2}). At epoch $t$, we alternate between solving a version of the relaxed unlearning problem \eqref{eq:proj-sol-general} and descending the loss on $\Data_r$ to maintain feasibility for the exact unlearning problem \eqref{eq:unlearn-op-f}, leveraging access to samples in $\Data_r$. Steps 6-8 of Algorithm \ref{alg:proj-L2} denote solving \eqref{eq:proj-sol-general} for $R(\btheta) = \norm{\btheta}{2}^2$ and $\hat{R}(\bDelta) = \lambda_t \norm{\bDelta}{2}^2$ where $\lambda_t \geq 0$ is a scaling parameter, and step 9 shows the loss descent step. For batched data and large models, we enforce the orthogonality constraint in \eqref{eq:proj-sol-general} over a subsample of size $n_{\text{pert}}$ within each batch. For this $R$ and $\hat{R}$, the solution to \eqref{eq:proj-sol-general} perturbs $\btheta$ toward its projection onto the span of model gradients over this subsample (see Appendix~\ref{app:alg}), which can be interpreted as a proximal update under the orthogonal gradient constraint. The main overhead relative to gradient descent comes from solving for $\bDeltaTild$ via a QR decomposition with complexity $\mathcal{O}(d n_{\text{pert}}^2)$, which is negligible compared to the $\mathcal{O}(d n_B)$ cost of gradient descent when $n_{\text{pert}} < \sqrt{n_B}$, where $n_B = \abs{\mathcal{B}}$ is the batch size.

\subsection{Experiments}
We test our algorithm against the following baseline methods. Retrain erases the initial model and trains from scratch on $\Data_r$. GD \cite{neel:2021:descent-to-delete} runs gradient descent on $\Loss{\btheta \: ; \Data_r}$, while Noisy GD (NGD) \citep{chourasia:2023:towards-true-data-deletion} adds gradient noise to the GD steps. GA \citep{graves:2020:amnesiac} runs gradient ascent on $\Loss{\btheta \: ; \Data_f}$. NegGrad+ (NGP) \citep{kurmanji:2023:scrub} minimizes a weighted combination of the GD and GA objectives. $\ell_1$-Sparse \citep{jia:2023:l1-sparse} runs GD with an $\ell_1$-norm regularizer. SCRUB \citep{kurmanji:2023:scrub} optimizes three objectives: minimizing $\Loss{\btheta \: ; \Data_r}$, minimizing KL divergence of model outputs on $\Data_r$ relative to the original model, and maximizing KL divergence on $\Data_f$. Negative Preference Optimization (NPO) \citep{zhang:2024:npo} runs a form of gradient ascent over $\Loss{\btheta \: ; \Data_f}$ inspired by preference optimization. SalUn \citep{fan:2023:salun} minimizes $\Loss{\btheta \: ; \Data_r}$ while fitting random labels over $\Data_f$, updating only the parameters that initially have large gradients with respect to $\Loss{\btheta \: ; \Data_f}$. We apply SCRUB, NPO, and SalUn only in our classification-based experiments, as they are not directly designed for general regression tasks. To highlight the performance of our algorithm, we also compare to ridge regression, which approximates our unlearning objective~\eqref{eq:unlearn-op-f} for $R(\btheta) = \norm{\btheta}{2}$ by minimizing $\Loss{\btheta; \Data_r} + \lambda_t \norm{\btheta}{2}^2$. The minimizer of this regularized objective converges to the minimum-$\ell_2$-norm loss minimizer in the ridgeless limit as $\lambda_t \to 0$ \citep{hastie:2022:surprises-ridgeless}. 

While recent work proposed various unlearning benchmarks, especially for LLMs \citep{chourasia:2023:towards-true-data-deletion,shi:2025:muse,ramakrishna:2025:semeval-unlearning,ramakrishna:2025:lume}, they often rely on opaque metrics that emphasize suppressing forget-set generation. In contrast, we present the following experiments with interpretable quantitative metrics. In each experiment, we evaluate all methods over a fixed number of unlearning epochs. In the Data Poisoning experiment, each epoch corresponds to a single unlearning update over the full dataset, $\Data = \Data_f \sqcup \Data_r$. In contrast, for the larger-scale Multi-Class Label Erasure and Representation Collapse experiments, each method processes batches from the entire forget set $\Data_f$ and only a subset of the retain set, $\Data_r' \subset \Data_r$. Here, one epoch is defined as a full pass over the forget set batches, where each forget batch is paired with a retain batch of equal size sampled from $\Data_r'$. Complete details of our experimental setup and full results with uncertainty estimates are provided in Appendix~\ref{app:experiments}.

{
\captionsetup[subfigure]{font=scriptsize,skip=1pt}

\begin{table}[t!]
\centering
\caption{Data Poisoning experiment results, measured as the median sup-norm distance between the retain set trend $y = \sin(x)$ and the unlearned model outputs over 10 trials (smaller is better).}
\label{tab:sinwave-exp}
\begin{tabular}{c|cccccccc}
\toprule
\textbf{Epochs} & \textbf{Retrain} & \textbf{\alg} & \textbf{GD} & \textbf{GA} & \textbf{NGP} & \textbf{NGD} & \textbf{Ridge} & \textbf{$\ell_1$-Sparse} \\
\midrule
10   & \textbf{1.50} & \textbf{1.50} & 3.23 & 2.56 & 2.50 & 2.73 & 2.27 & 3.28 \\
100  & 1.36 & \textbf{1.08} & 2.76 & 23.8 & 2.62 & 2.85 & 2.13 & 1.79 \\
1000 & 1.17 &  \textbf{0.63} & 2.61 & 1400 & 2.75 & 2.45 &  1.96 & 1.37 \\

\bottomrule
\end{tabular}
\end{table}

\begin{figure}[t]
    \centering
    \begin{subfigure}{0.24\textwidth}
        \includegraphics[height=2.3cm,width=\linewidth]{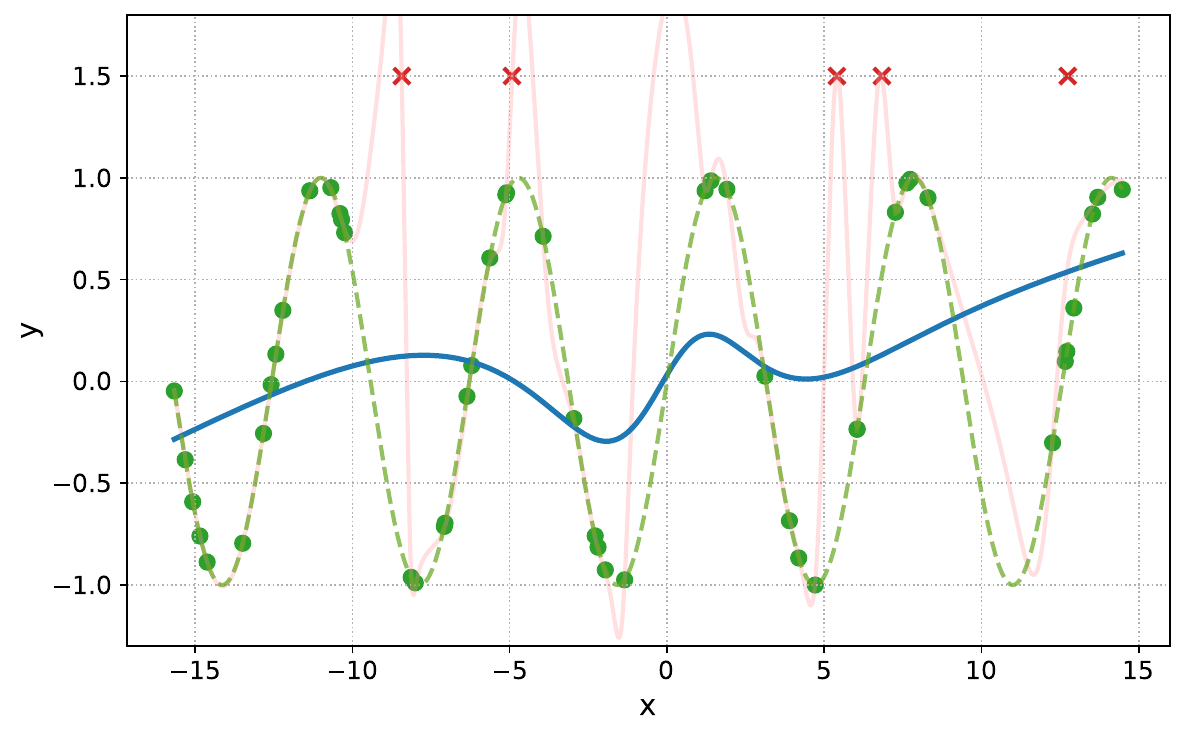}
        \caption{Retrain}
    \end{subfigure}
    \begin{subfigure}{0.24\textwidth}
        \includegraphics[height=2.3cm, width=\linewidth]{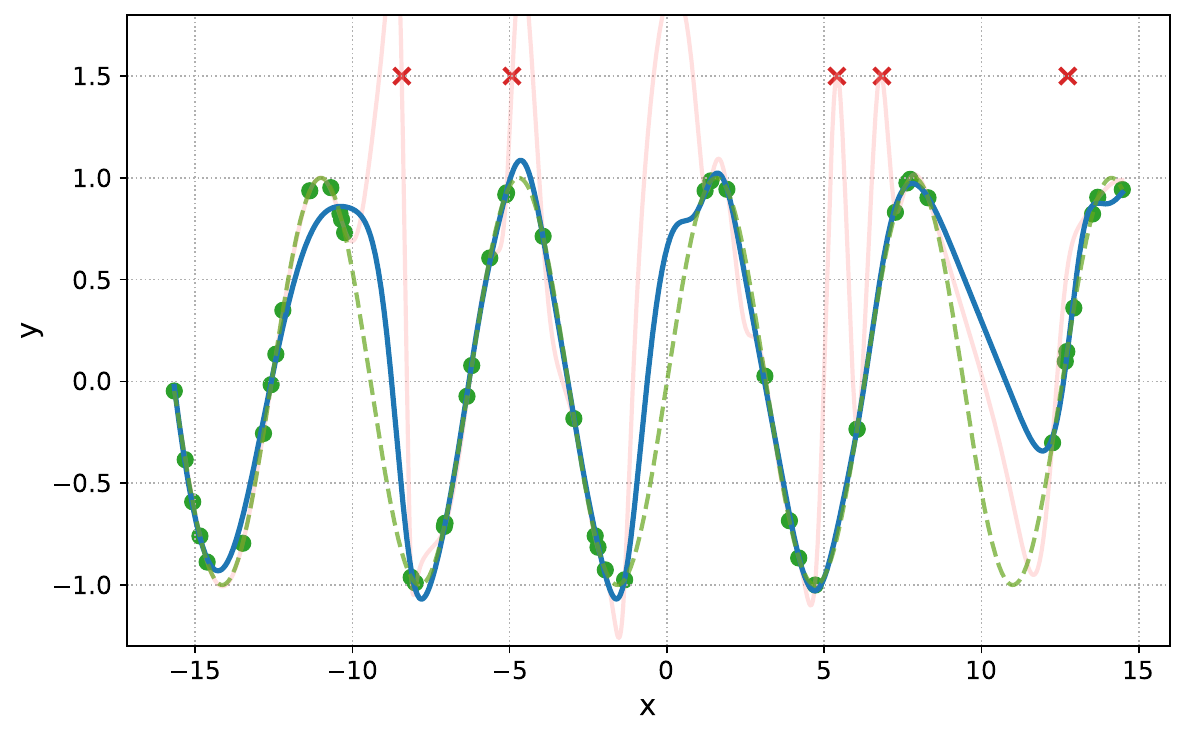}
        \caption{\alg (ours)}
    \end{subfigure}
    \hspace{-.5em}
    \begin{subfigure}{0.24\textwidth}
        \includegraphics[height=2.3cm,width=\linewidth]{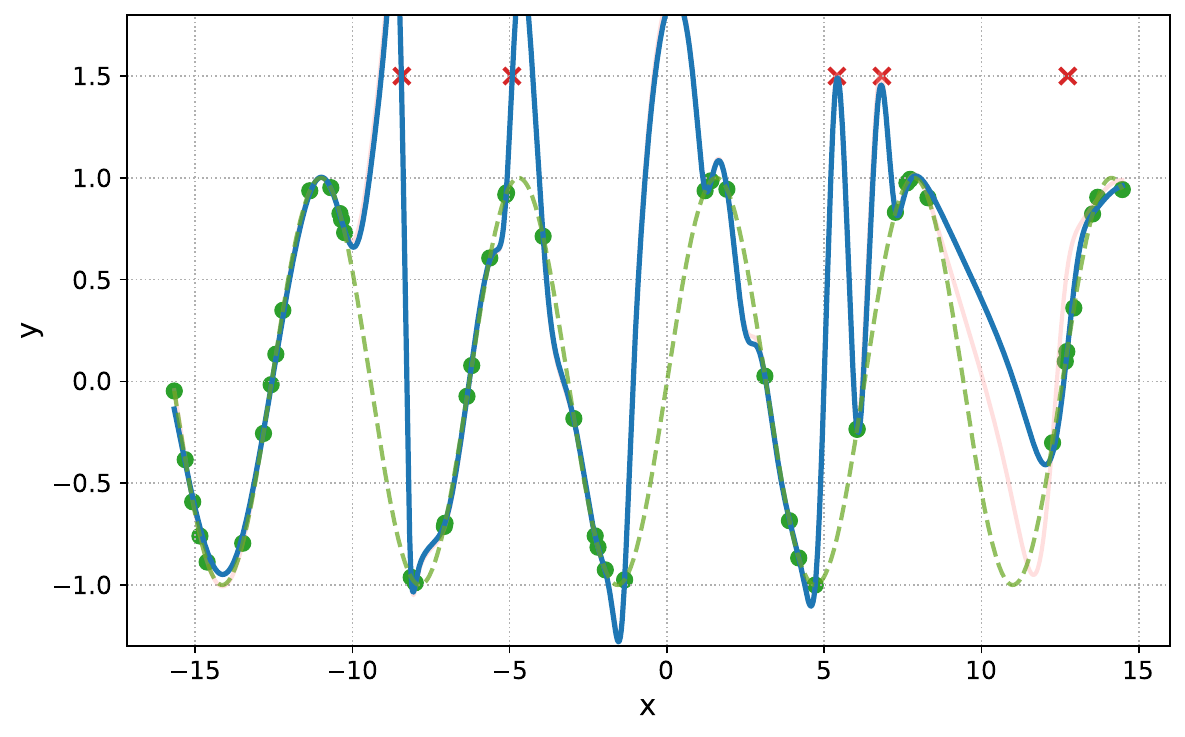}
        \caption{GD}
    \end{subfigure}
    \hspace{-.5em}
    \begin{subfigure}{0.24\textwidth}
        \includegraphics[height=2.3cm,width=\linewidth]{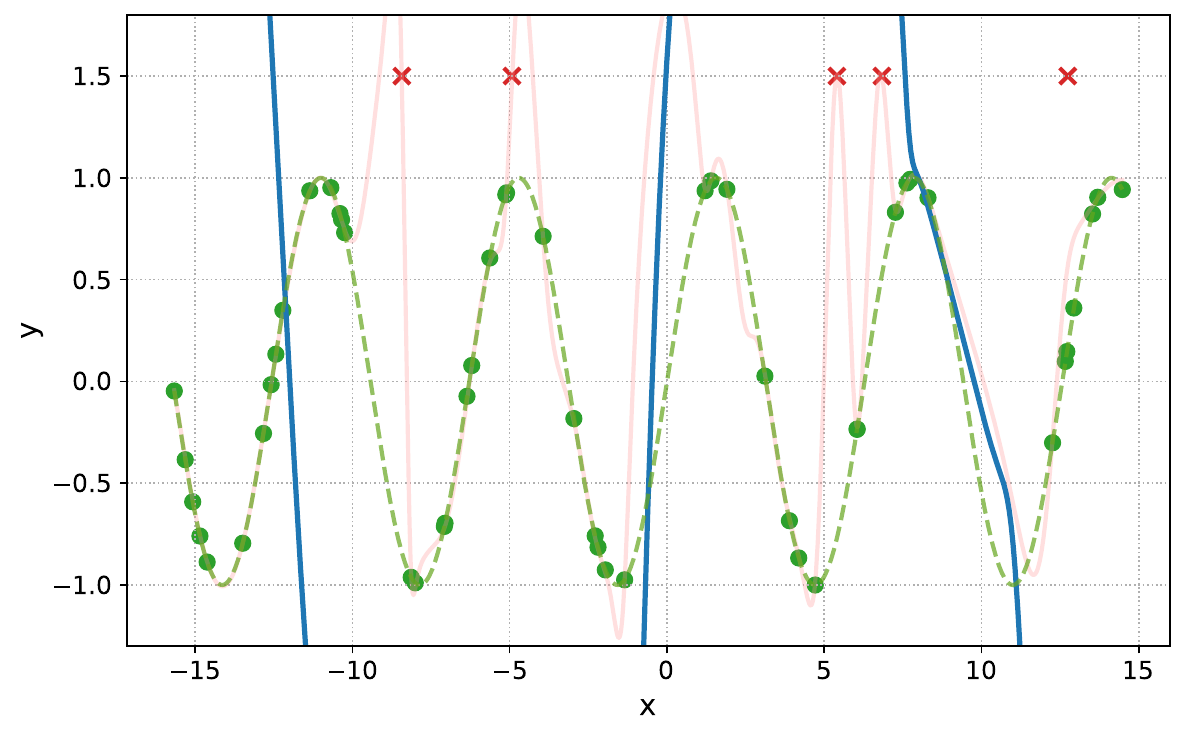}
        \caption{GA}
    \end{subfigure}

    \vspace{.5em}

    \begin{subfigure}{0.24\textwidth}
        \includegraphics[height=2.3cm,width=\linewidth]{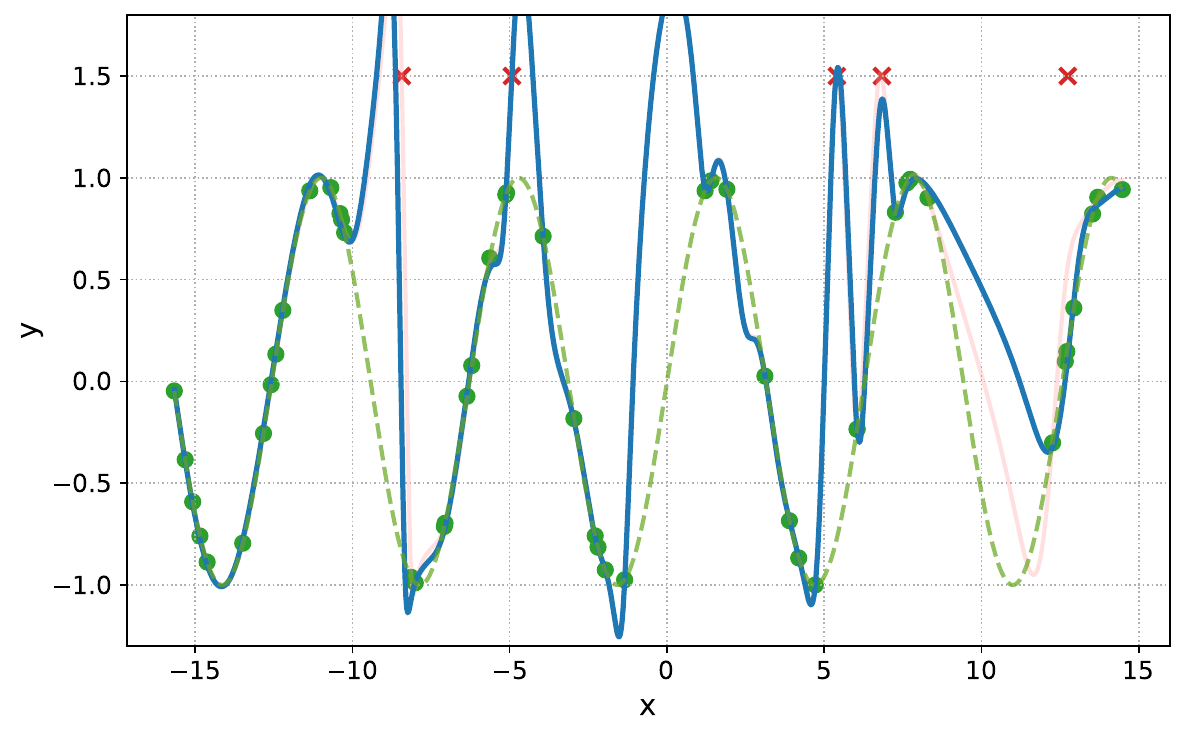}
        \caption{NGP}
    \end{subfigure}
    \begin{subfigure}{0.24\textwidth}
        \includegraphics[height=2.3cm,width=\linewidth]{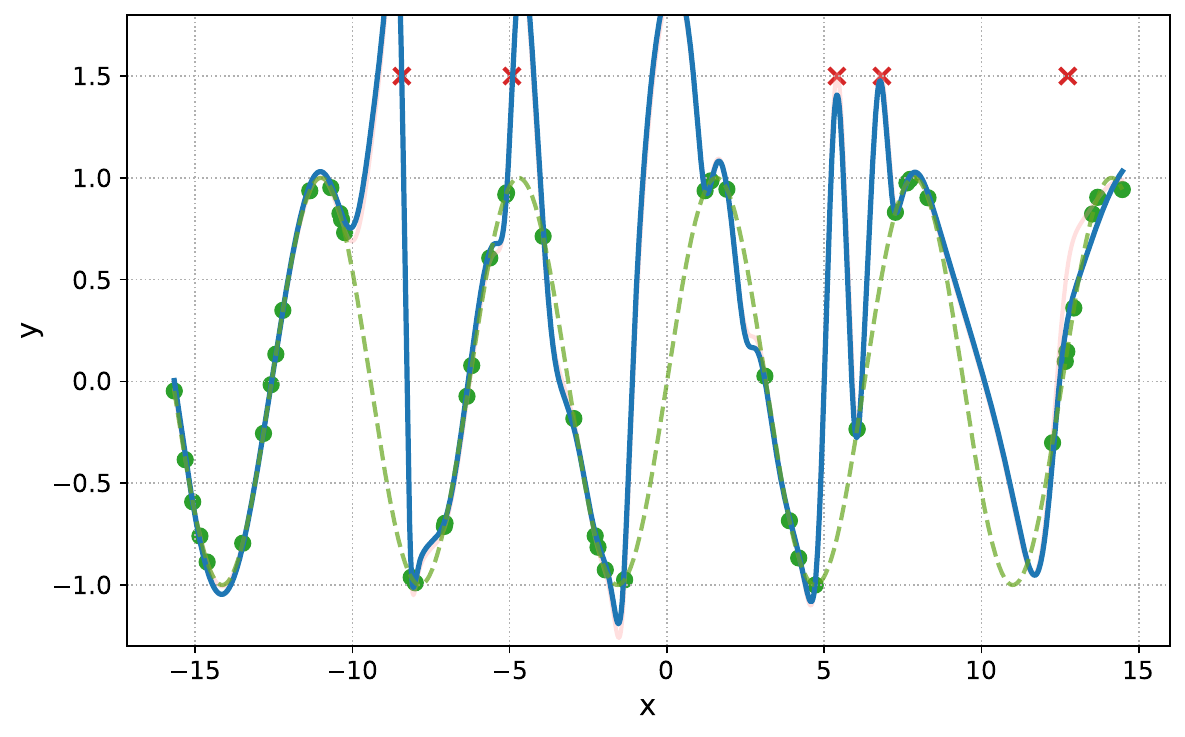}
        \caption{NGD}
    \end{subfigure}
    \begin{subfigure}{0.24\textwidth}
        \includegraphics[height=2.3cm,width=\linewidth]{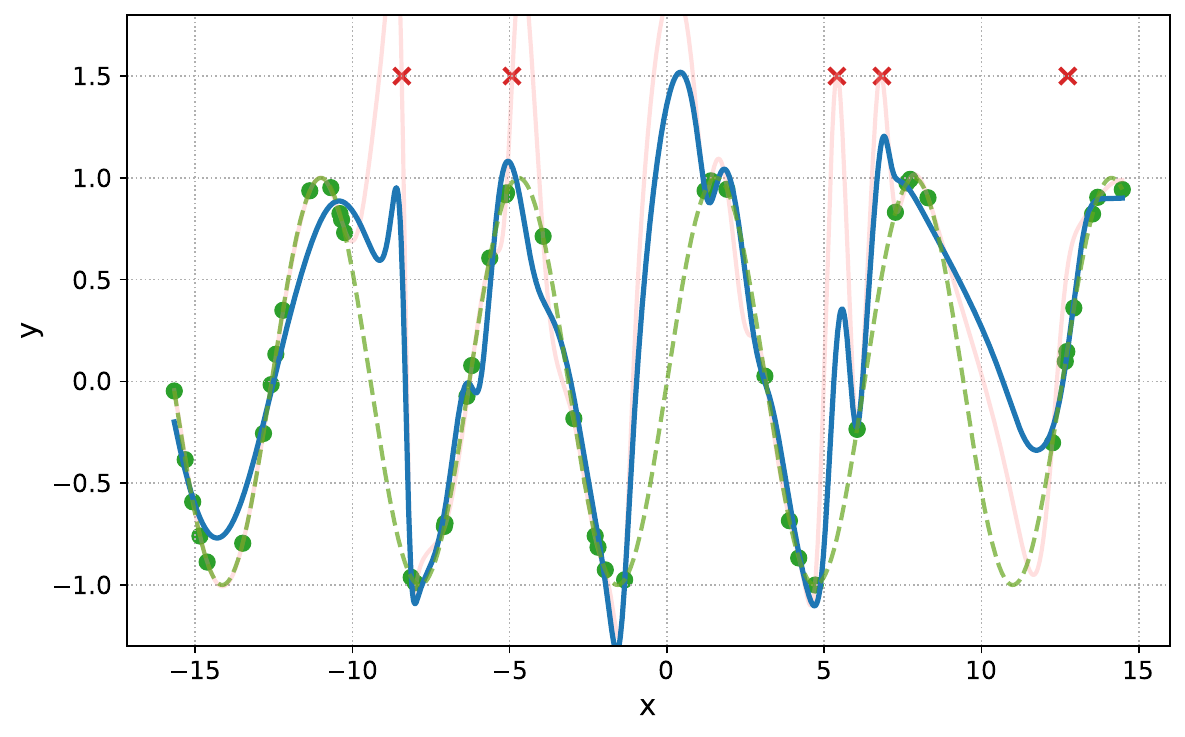}
        \caption{Ridge}
    \end{subfigure}
    \begin{subfigure}{0.24\textwidth}
        \includegraphics[height=2.3cm,width=\linewidth]{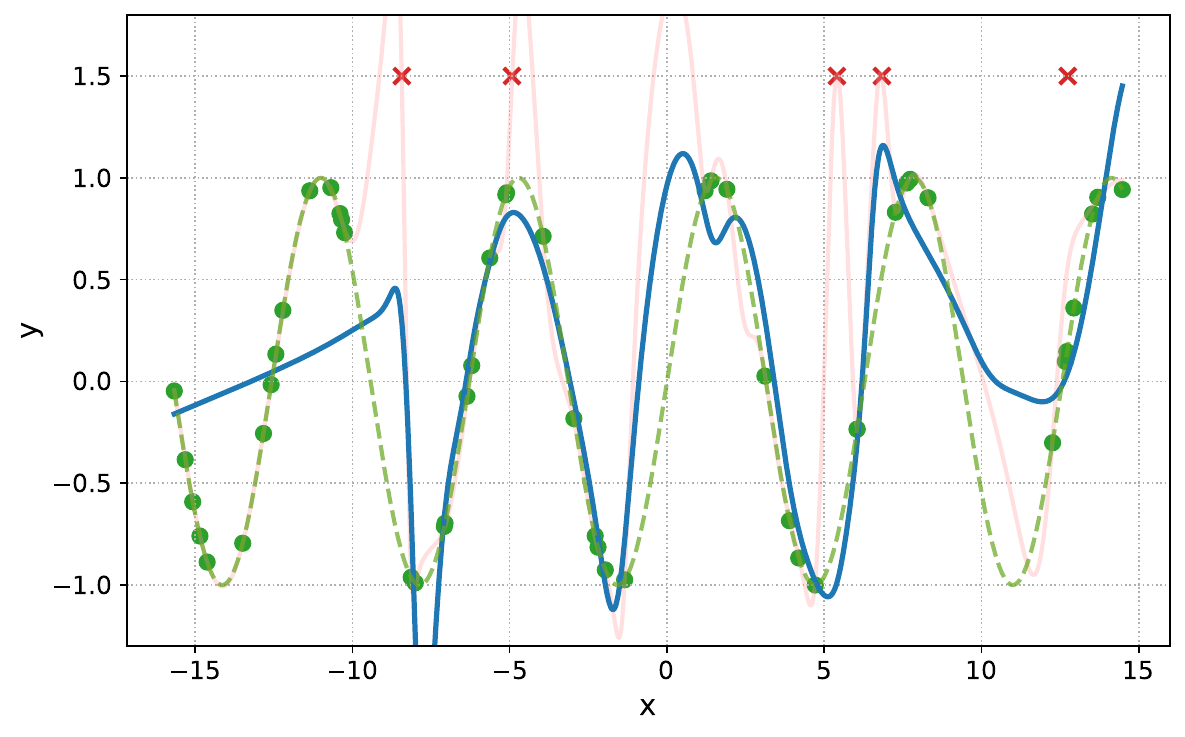}
        \caption{$\ell_1$-Sparse}
    \end{subfigure}
    \includegraphics[width=0.8\textwidth]{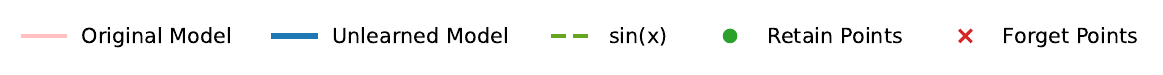}
    \caption{Example unlearned model fits when given 100 unlearning epochs for the Data Poisoning experiment, where the forget points distort the retain set trend $y = \sin(x)$.}
    \label{fig:sin-test-100epochs}
\end{figure}

}

\textbf{Data Poisoning.} We train a shallow neural network on retain samples $(x_r, y_r) \in \Data_r$ with $y_r = \sin(x_r)$ and forget samples $(x_f, y_f) \in \Data_f$ with $y_f = 1.5$, over input domain $\mathcal{X} = [-5\pi,5\pi] \subseteq \Re$. We evaluate the output $\btheta$ of each unlearning method by measuring the deviation from the retain set trend, given by $\sup_{\x \in \mathcal{X}} \left| f(\btheta, \x) - \sin(\x) \right|$. Results are reported in Table \ref{tab:sinwave-exp} as the median over 10 trials with visualizations in Figure \ref{fig:sin-test-100epochs}. Retrain fails to closely capture the retain set trend $y = \sin(x)$, while the (regularized) loss descent methods (GD, NGD, Ridge) struggle to simultaneously escape the influence of the forget set and fit to the retain set. GA is unstable and diverges from the retain set. Due to this instability, we found that NGP performed best when the loss-ascent coefficient was near zero, causing it to behave like GD.

\textbf{Multi-Class Label Erasure.} We use the CIFAR-10 \citep{krizhevsky:2009:cifar} and Tiny ImageNet \citep{tinyimagenet} datasets, creating red, green, and gray copies of each image. We train modified ResNet-18 and ResNet-50 models on CIFAR-10 and Tiny ImageNet, respectively, to jointly predict image class and color. The retain set $\Data_r$ contains all image content classes only in gray, while the forget set $\Data_f$ contains all colors. The ground truth unlearned model predicts gray content well and always predicts gray color with probability 1, regardless of input. We evaluate retain quality by accuracy on gray-colored test samples, and we measure forget quality error as the mean squared error between the model’s predicted probability of an image being gray and the target value of 1 for all colored inputs.

Figure~\ref{fig:multi-erase-experiment} presents the Pareto frontier of the mean performance across 5 trials for each method under different hyperparameter settings. The optimal point $(1,0)$ indicates perfect retain quality and zero forget quality error. The ground truth unlearned model is labeled GT. Our method, \alg, performs best across both datasets, as the other methods struggle to unlearn the influence of the colored forget set samples without harming accurate classification of gray test images. Notably, retraining from scratch yields a model with nearly perfect forget quality error, as it is trained only on gray images, but performs no better than random guessing on gray test images due to the limited training epochs and restricted retain set access.

\begin{figure}[t]
    \centering
    \begin{minipage}[t]{0.45\textwidth}
        \centering
        \includegraphics[width=6cm, height=4.5cm]{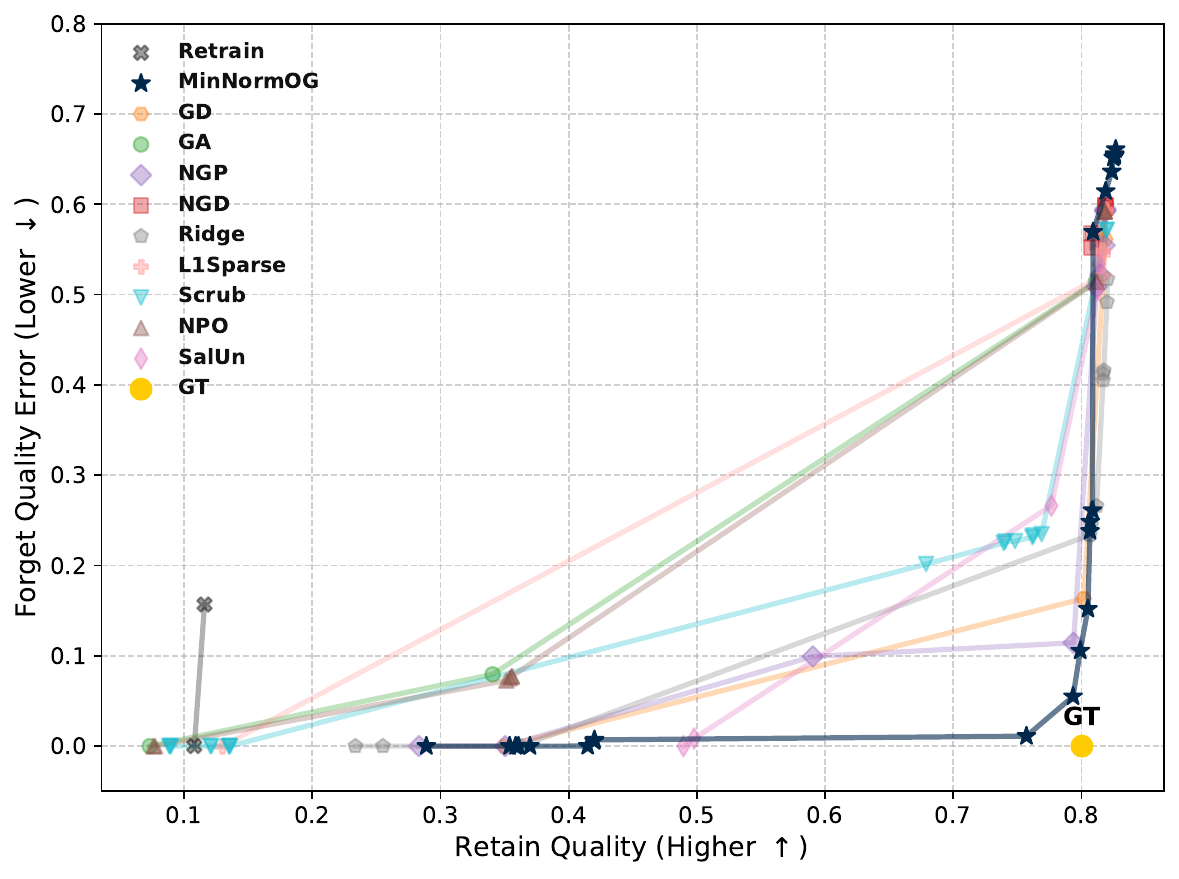}
    \end{minipage}
    \qquad 
    \begin{minipage}[t]{0.45\textwidth}
        \centering
        \includegraphics[width=6cm, height=4.5cm]{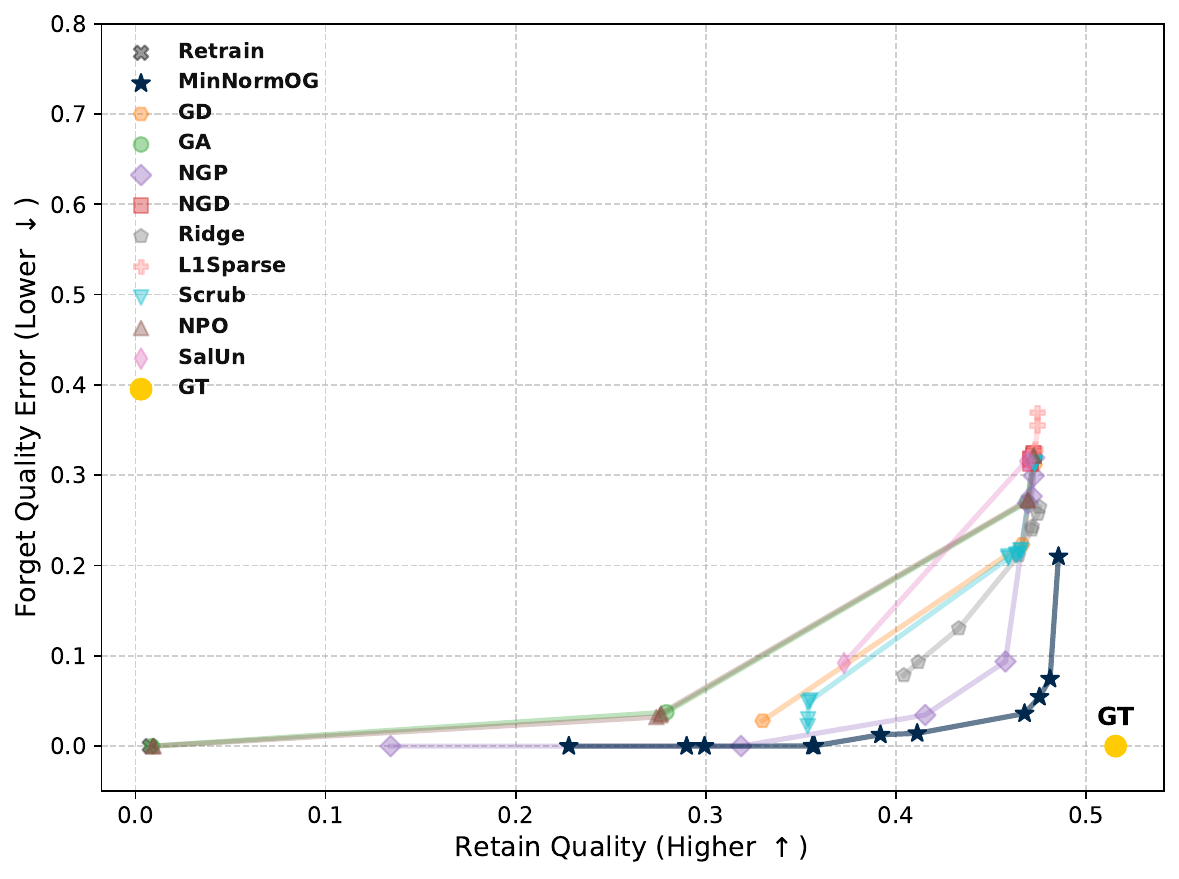}
    \end{minipage}
    \caption{Pareto frontiers for each method across hyperparameter settings in the Multi-Class Label Erasure experiment on colored versions of CIFAR-10 (left) and Tiny ImageNet (right). Models predict color and content, but the retain set contains only gray images. The ground truth unlearned model (GT) performs well on gray inputs but always predicts gray with probability 1. The x-axis shows accuracy on gray test images (higher is better), and the y-axis shows mean squared error between predicted probability of gray on all inputs and the target of 1 (lower is better). \alg (ours) best approaches the ground-truth unlearned model’s performance relative to the other baselines.}
     \label{fig:multi-erase-experiment}
\end{figure}

\begin{table}[t!]
\centering
\caption{Representation Collapse experiment results across constraints on the number of unlearning epochs and percentage of accessible retain set samples. Models are trained on colored images where color perfectly predicts the label in the retain set but not in the full dataset $\Data$. Evaluation is measured as the mean accuracy (\%) on test images labeled by color over 5 trials (higher is better).}
\label{tab:rep-collapse}
\resizebox{\textwidth}{!}{
\begin{tabular}{ll|c|cccccccccc}
\toprule
\textbf{Retain \%} & \textbf{Epochs} & \textbf{Retrain} & \textbf{\alg} & \textbf{GD} & \textbf{GA} & \textbf{NGP} & \textbf{NGD} & \textbf{Ridge} & \textbf{$\ell_1$-Sparse} & \textbf{Scrub} & \textbf{NPO} & \textbf{SalUn} \\
\midrule
\multirow{3}{*}{0.1} 
  & 5 & 79.1 & \textbf{49.1} & 24.0 & 34.5 & 45.9 & 12.6 & 20.9 & 12.4 & 37.3 & 40.6 & 29.4 \\
  & 10 & 95.5 & \textbf{78.7} & 55.2 & 38.3 & 73.7 & 33.0 & 55.2 & 23.0 & 61.9 & 43.2 & 53.4 \\
  & 15 & 96.3 & \textbf{93.2} & 78.2 & 39.8 & 81.5 & 46.9 & 78.2 & 25.1 & 74.1 & 44.6 & 76.3 \\
\midrule
\multirow{2}{*}{1}
  & 5 & 92.8 & \textbf{59.5} & 40.7 & 34.2 & 58.3 & 32.9 & 33.5 & 12.5 & 47.8 & 42.5 & 42.5 \\
  & 10 & 98.5 & \textbf{94.7} & 73.6 & 38.2 & 92.4 & 70.5 & 63.4 & 31.4 & 75.8 & 43.5 & 68.7 \\

\bottomrule

\end{tabular}
}
\end{table}

\textbf{Representation Collapse.} We again use the CIFAR-10 dataset where each of the 10 classes is assigned a unique color. The retain set $\Data_r$ contains the CIFAR-10 images colored according to their unique color, while the forget set $\Data_f$ comprises randomly colored images. The ground truth unlearned model predicts from color alone, as the color feature completely determines the class label and is easier to learn than the image content features. In contrast, models trained on the full dataset $\Data = \Data_r \sqcup \Data_f$ must predict based on content, since color is no longer fully predictive of the label. For evaluation, we label heldout test images by color and assess unlearning via color-label accuracy, testing if the unlearning methods can collapse the original model into just a color classifier. 

Table~\ref{tab:rep-collapse} presents mean results over 5 trials. We observe that Retrain achieves much higher color classification accuracy than any unlearning method, as image color is an easy feature to learn from scratch. However, this result is specific to this setup: since the target function over the retain set is so simple, it is easier to learn from a freshly initialized model than by adapting the original model, which has learned to rely on complex image content features. Our prior experiments demonstrate that Retrain is not a viable unlearning strategy in general, so we report its results separately in the leftmost column. We bold the \alg results, as they achieve the highest accuracy for each combination of constraints on the percentage of accessible retain set samples and number of unlearning epochs.

\subsection{Runtime Comparison}

In the above experiments, we compare all unlearning methods under a fixed number of epochs. To assess per-epoch computational efficiency, we measure the runtime required by each method to complete a single unlearning epoch in the Multi-Class Label Erasure experiment using Tiny ImageNet with ResNet-50, our largest-scale setting. The results are reported in Table~\ref{tab:timing}.

\begin{table}[t]
\centering
\caption{Average time in seconds to perform a single unlearning epoch on the Multi-Class Label Erasure experiment on Tiny ImageNet using ResNet-50, averaged over 5 trials.}
\label{tab:timing}
\resizebox{\textwidth}{!}{
\begin{tabular}{l|ccccccccccc}
\toprule
Method & \textbf{Retrain} & \textbf{\alg} & \textbf{GD} & \textbf{GA} & \textbf{NGP} & \textbf{NGD} & \textbf{Ridge} & \textbf{$\ell_1$-Sparse} & \textbf{Scrub} & \textbf{NPO} & \textbf{SalUn} \\
\midrule
Time (s) & 0.90 & 0.78 & 0.67 & 0.68 & 0.74 & 0.69 & 0.68 & 0.69 & 0.96 & 0.94 & 1.09 \\
\bottomrule
\end{tabular}
}
\end{table}

We observe that methods which compute loss gradients with simple regularizers on a single data split (GD, GA, NGD, Ridge, $\ell_1$-Sparse) are the fastest. NGP is slower, as it requires forward and backward passes over both retain and forget set batches. Our method, \alg, operates only on the retain set but solves an additional subproblem on top of the loss descent step, while maintaining a runtime comparable to NGP. Scrub and NPO are slower, as they require extra forward passes using the original model to evaluate their loss functions, while Retrain is slowed by the need to initialize a new model from scratch. Lastly, SalUn is the slowest due to an initial pass over the entire dataset to compute the threshold for deciding which parameters to freeze during unlearning.
\vspace{-2mm}

\section{Conclusion} We proposed a new unlearning framework under overparameterization by seeking the simplest solution consistent with the retain set. We proved guarantees on solving the exact unlearning problem through a tractable relaxed formulation. A practical implementation of our framework outperformed baselines, as the simplest solution aligns with unlearning goals and removes artifacts unrelated to the retain set. While our theoretical guarantees open the door for unlearning analysis beyond the underparameterized setting, we focused on model classes like linear networks and two-layer perceptrons. We naturally aim to analyze unlearning in more complex settings like deep networks in future work, as well as experiment within broader domains at larger scale.
\vspace{-2mm}
\section*{Acknowledgments}
This work was supported in part by NSF Grants 2019844, 2107037, and 2112471, ONR Grant N00014-19-1-2566, the Machine Learning Lab (MLL) at UT Austin, the NSF AI Institute for Foundations of Machine Learning (IFML), and the Wireless Networking and Communications Group (WNCG) Industrial Affiliates Program. We are grateful for computing support on the Vista GPU Cluster through the Center for Generative AI (CGAI) and the Texas Advanced Computing Center (TACC) at the University of Texas at Austin.
\clearpage

\printbibliography
\clearpage 

\appendix
\section*{Appendix}

\section{General Notation}
\label{app:notation}
Vectors and matrices are in bold, with vectors lowercase and matrices uppercase. For sets $A,B$, $A \sqcup B$ denotes disjoint union. $2^A$ is the power set. For a proposition $a$, $\indic{a}$ is $1$ if true and $0$ otherwise; $\delta_{\{a\}}$ is $+\infty$ if true and $0$ otherwise. For $\x \in \Re^d$ and $A \subseteq \Re^d$, $\proj{\x}{A}$ is the Euclidean projection onto $A$. For $\Z \in \Re^{m \times n}$, $\vecMat(\Z) \in \Re^{mn}$ is the columnwise vectorization. $\im(\Z)$, $\ker(\Z)$, and $\rowspace(\Z)$ denote the image, kernel, and rowspace. $\norm{\Z}{F}$ is the Frobenius norm, $\norm{\Z}{*}$ is the nuclear norm, and $\norm{\Z}{2}$ is the spectral norm. For $Y \in \Re^{m \times n}$, $\langle \Z,\Y \rangle$ is the Frobenius inner product and $\Z \odot \Y$ is the element-wise product. $\tr\{\cdot\}$ is the trace. For $\x \in \Re^{d_x}$ and $\y \in \Re^{d_y}$, $\bmat{\x ; \y} \in \Re^{d_x+d_y}$ stacks $\x$ and $\y$. $\norm{\x}{p}$ is the $\ell_p$ norm. $[n] = \{1,\dots,n\}$. For $x \in \Re$, $(x)_+ = \max\{x,0\}$ is the ReLU. Let \0 and \1 denote the vectors with each entry equal to 0 and 1 respectively. Further, for $\x \in \Re^d$ and $c \in \Re$, let $\1_{\x \neq c}$ denote the vector which is 1 in each entry of $\x$ which is not equal to $c$ and 0 otherwise.

\section{Minimum Norm Solutions to Linear Regression}
\label{app:min-norm}

Here we prove various properties of minimum norm solutions to linear regression problems which we later use for our unlearning results. Following the notation in Section \ref{sec:define-unlearn-sol}, we consider the full $n$-sample dataset $\Data = \{ (\xii,y_i) \}_{i=1}^n$ with sample inputs $\xii \in \Re^m$ and outputs $y_i \in \Re$. We consider training a linear model $f(\btheta,\x) = \btheta^\top \x$ parameterized by $\btheta \in \Re^m$. We work within the overparameterized setting, so we assume $m > n$. Define the span of the input vectors $\S = \vecspan\{\x \mid (\x,y) \in \Data \}$, and assume $\dim(\S) = n$ so the regression problem is realizable. Consider solving the following problem for finding the linear regression solution with minimum $\ell_2$ norm:
\begin{equation*}
    \bthetas = \argmin_{\btheta} \norm{\btheta}{2} \text{ s.t. } f(\btheta,\x) = y \quad \forall (\x,y) \in \Data
\end{equation*}

Let $\X \in \Re^{n \times m}$ be the wide matrix whose $i$th row is equal to $\x_i^\top$, and let $\y \in \Re^n$ be the vector whose $i$th element is $y_i$. Then, we can write an equivalent problem in matrix form.

\begin{equation}
    \label{eq:min-norm-mat-app}
    \bthetas = \argmin_{\btheta} \frac{1}{2}\norm{\btheta}{2}^2 \text{ s.t. } \y = \X \btheta
\end{equation}

We can then characterize the solution to the above problem relative to the constraint set.

\begin{lemma}
    \label{lm:min-norm-sol-rowspace-app}
    $\bthetas$ is the unique vector in $\rowspace(\X)$ which is feasible for \eqref{eq:min-norm-mat-app}
\end{lemma}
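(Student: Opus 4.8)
The plan is to prove the claimed statement \textemdash{} that $\bthetas$ is the unique feasible vector lying in $\rowspace(\X)$ \textemdash{} in two parts: first existence of a feasible vector in $\rowspace(\X)$, then uniqueness of such a vector, and finally identify it with the minimum-norm solution $\bthetas$ of \eqref{eq:min-norm-mat-app}.

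First I would establish existence. Since $\dim(\S) = n$ and $\S = \rowspace(\X)$ by definition (the rows of $\X$ are the $\x_i^\top$), the matrix $\X$ has full row rank $n$, so $\X \X^\top \in \Re^{n \times n}$ is invertible. Then the vector $\btheta_0 := \X^\top (\X \X^\top)^{-1} \y$ is manifestly in $\rowspace(\X)$ (it is $\X^\top$ times something) and satisfies $\X \btheta_0 = \X \X^\top (\X \X^\top)^{-1} \y = \y$, hence is feasible. For uniqueness, suppose $\btheta_1, \btheta_2 \in \rowspace(\X)$ are both feasible. Then $\X(\btheta_1 - \btheta_2) = \0$, so $\btheta_1 - \btheta_2 \in \ker(\X)$; but $\btheta_1 - \btheta_2 \in \rowspace(\X)$ as well, and $\ker(\X) = \rowspace(\X)^\perp$ in $\Re^m$, so $\btheta_1 - \btheta_2$ is orthogonal to itself, forcing $\btheta_1 = \btheta_2$.

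It remains to show the unique feasible element of $\rowspace(\X)$ is exactly $\bthetas$, the minimizer in \eqref{eq:min-norm-mat-app}. I would decompose any feasible $\btheta$ as $\btheta = \btheta_\parallel + \btheta_\perp$ with $\btheta_\parallel \in \rowspace(\X)$ and $\btheta_\perp \in \ker(\X)$; feasibility $\X\btheta = \y$ depends only on $\btheta_\parallel$ (since $\X \btheta_\perp = \0$), and $\btheta_\parallel$ is feasible on its own. By the Pythagorean identity $\norm{\btheta}{2}^2 = \norm{\btheta_\parallel}{2}^2 + \norm{\btheta_\perp}{2}^2 \geq \norm{\btheta_\parallel}{2}^2$, with equality iff $\btheta_\perp = \0$. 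Hence any minimizer must have zero component in $\ker(\X)$, i.e., must lie in $\rowspace(\X)$; combined with the uniqueness just shown, $\bthetas = \btheta_0$, the unique feasible vector in $\rowspace(\X)$.

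I do not expect a serious obstacle here \textemdash{} the argument is the standard characterization of the minimum-norm least-squares solution. The one point requiring a little care is the identity $\ker(\X) = \rowspace(\X)^\perp$, which is the fundamental-theorem-of-linear-algebra orthogonality relation in $\Re^m$, and making sure the realizability assumption $\dim(\S) = n$ is invoked exactly where full row rank (and hence invertibility of $\X\X^\top$) is needed for existence. If I wanted to avoid even the Pythagorean step, I could instead note directly that the minimum-norm solution is characterized by the KKT conditions of \eqref{eq:min-norm-mat-app}: $\btheta = \X^\top \blambda$ for some multiplier $\blambda$, which is precisely the statement $\btheta \in \rowspace(\X)$, so the minimizer is feasible-and-in-$\rowspace(\X)$, and uniqueness follows as above.
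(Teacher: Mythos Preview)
Your proposal is correct. The uniqueness step is identical to the paper's, but your primary route to showing $\bthetas \in \rowspace(\X)$ differs: you argue via the orthogonal decomposition $\btheta = \btheta_\parallel + \btheta_\perp$ and the Pythagorean identity, whereas the paper proceeds by Lagrangian duality, computing the dual explicitly and reading off $\bthetas = \X^\top \blambda^*$ from stationarity. Your approach is slightly more elementary and self-contained (no appeal to strong duality or KKT sufficiency), and it also makes explicit the closed-form $\btheta_0 = \X^\top(\X\X^\top)^{-1}\y$, pinpointing exactly where the full-row-rank assumption $\dim(\S)=n$ is used. The paper's duality argument, on the other hand, generalizes more readily to other norms or convex objectives. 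You even note the KKT alternative at the end, which is precisely the paper's path, so you have both arguments in hand.
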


\begin{proof}
    The objective \eqref{eq:min-norm-mat-app} is a convex objective with linear constraints which is bounded from below by 0 and has a non-empty feasible set. Thus, the KKT conditions are necessary and sufficient for optimality. We now derive the solution $\blambda^* \in \Re^n$ to the dual problem.
    \begin{align}
        \min_{\btheta} \frac{1}{2} \norm{\btheta}{2}^2 \quad \text{s.t.} \quad \y = \X \btheta \quad &= \min_{\btheta} \max_{\blambda \in \Re^{n}} \frac{1}{2} \norm{\btheta}{2}^2 + \blambda^\top \paren{\y - \X \btheta} \nonumber\\
        &= \max_{\blambda} \min_{\btheta} \frac{1}{2} \norm{\btheta}{2}^2 + \blambda^\top \paren{\y - \X \btheta} \nonumber\\
        &= \max_{\blambda} \frac{1}{2} \norm{\X^\top \blambda}{2}^2 + \blambda^\top \paren{\y - \X \X^\top \blambda} \quad \text{s.t.} \quad \btheta = \X^\top \blambda \nonumber\\
        &=  \max_{\blambda} -\frac{1}{2} \norm{\X^\top \blambda}{2}^2 + \blambda^\top \y \quad \text{s.t.} \quad \btheta = \X^\top \blambda \nonumber\\
        & \implies \X \X^\top \blambda^* = \y \text{ and } \bthetas = \X^\top \blambda^*
    \end{align}
    
    Thus the primal solution $\bthetas$ must be of the form $\X^\top \blambda^* \in \rowspace(\X)$. To show uniqueness, consider $\bthetas_1,\bthetas_2 \in \rowspace(\X)$ that are both feasible for \eqref{eq:min-norm-mat-app}. Then,
    \begin{equation*}
        \y = \X \bthetas_1 = \X \bthetas_2 \implies \X\paren{\bthetas_1-\bthetas_2} = \0 \implies \bthetas_1 - \bthetas_2 \in \ker(\X).
    \end{equation*}

    But, since $\rowspace(\X)$ is a subspace, $\bthetas_1,\bthetas_2 \in \rowspace(\X)$ implies $\bthetas_1 - \bthetas_2 \in \rowspace(\X)$. Further, $\rowspace(\X) = \ker(\X)^\perp$. Thus, 
    \begin{equation*}
        \bthetas_1 - \bthetas_2 \in \ker(\X) \cap \ker(\X)^\perp = \{ \0 \} \implies \bthetas_1 = \bthetas_2
    \end{equation*}
\end{proof}

Using the same analysis, we can characterize the entire feasible set in terms of $\bthetas$.
\begin{lemma}
    \label{lm:min-norm-lin-reg-feas-set-app}
    The feasible set to \eqref{eq:min-norm-mat-app} $\{ \btheta \mid \y = \X \btheta \} = \bthetas + \ker(\X)$.
\end{lemma}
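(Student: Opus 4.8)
The plan is to establish the set equality by proving both inclusions, using only the fact that $\bthetas$ is itself feasible for \eqref{eq:min-norm-mat-app}. Since the regression problem is assumed realizable (i.e., $\dim(\S) = n$, so $\X$ has full row rank), the feasible set is non-empty and $\bthetas$ exists with $\y = \X\bthetas$; this is the only property of $\bthetas$ we need here, and it is already available from the setup preceding \eqref{eq:min-norm-mat-app} (Lemma~\ref{lm:min-norm-sol-rowspace-app} gives more than enough).

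First I would show $\{\btheta \mid \y = \X\btheta\} \subseteq \bthetas + \ker(\X)$: take any feasible $\btheta$, so $\X\btheta = \y$; subtracting $\X\bthetas = \y$ gives $\X(\btheta - \bthetas) = \0$, hence $\btheta - \bthetas \in \ker(\X)$, i.e., $\btheta \in \bthetas + \ker(\X)$. Then I would show the reverse inclusion $\bthetas + \ker(\X) \subseteq \{\btheta \mid \y = \X\btheta\}$: for any $\d \in \ker(\X)$, linearity gives $\X(\bthetas + \d) = \X\bthetas + \X\d = \y + \0 = \y$, so $\bthetas + \d$ is feasible. Combining the two inclusions yields the claimed equality.

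There is no substantive obstacle: the argument is the standard characterization of the solution set of a consistent linear system as a coset of the kernel, and the only thing to be careful about is invoking non-emptiness of the feasible set (which follows from realizability) so that $\bthetas$ is well-defined to serve as the base point of the coset.
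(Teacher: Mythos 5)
Your proposal is correct and follows essentially the same argument as the paper: both inclusions are proved by subtracting $\X\bthetas = \y$ to land in $\ker(\X)$, and by linearity of $\X$ for the converse. The extra remark on non-emptiness via realizability is harmless but not needed, since $\bthetas$ is already given as a feasible point in the setup.
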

\begin{proof}
    Let $\btheta'$ satisfy $\y = \X \btheta'$. Then, $\X\paren{\btheta' -\bthetas} = \0$ so $\btheta' - \btheta \in \ker(\X)$.

    To show the converse, take any $\z \in \ker(\X)$. Then $\X \paren{\bthetas + \z} = \X \bthetas + \X \z = \X \bthetas = \y$.
\end{proof}

Using this characterization of $\bthetas$ and the feasible set, we can cleanly understand how to achieve minimum norm solutions over just a subset of the constraints given a feasible point. This is central to our unlearning setup in later sections.

\begin{lemma}
    \label{lm:gen-proj-optimal-app}
    Consider any subset $\Data_r \subseteq \Data$, and define $\bthetas_r$ as the linear regression solution over just $\Data_r$ with minimum norm:
    \begin{equation}
        \label{eq:bthetasr-gen-def-app}
        \bthetas_r = \argmin_{\btheta} \norm{\btheta}{2} \text{ s.t. } f(\btheta,\x) = y \quad \forall (\x,y) \in \Data_r
    \end{equation}

    Let $\S_r = \vecspan\{\x \mid (\x,y) \in \Data_r \}$. Then $\bthetas_r = \proj{\bthetas}{\S_r}$.
\end{lemma}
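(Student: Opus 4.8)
The plan is to avoid re-deriving optimality conditions and instead reduce everything to the uniqueness characterization already established in Lemma~\ref{lm:min-norm-sol-rowspace-app}. Applying that lemma to the sub-dataset $\Data_r$ (with $\X_r$ the matrix whose rows are the $\x_i^\top$ for $(\x_i,y_i)\in\Data_r$, so that $\rowspace(\X_r) = \S_r$), the solution $\bthetas_r$ of \eqref{eq:bthetasr-gen-def-app} is the \emph{unique} vector lying in $\S_r$ that is feasible, i.e. that satisfies $\langle \btheta, \x\rangle = y$ for all $(\x,y)\in\Data_r$. Note this lemma applies since the feasible set of \eqref{eq:bthetasr-gen-def-app} is nonempty: $\bthetas$ itself interpolates all of $\Data \supseteq \Data_r$. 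Hence it suffices to verify that $\proj{\bthetas}{\S_r}$ (a) lies in $\S_r$, and (b) is feasible for \eqref{eq:bthetasr-gen-def-app}; uniqueness then forces $\proj{\bthetas}{\S_r} = \bthetas_r$.

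Claim (a) is immediate: $\S_r$ is a subspace, so the Euclidean projection onto it is well-defined and its output lies in $\S_r$ by construction. For claim (b), I would use the orthogonal decomposition $\bthetas = \proj{\bthetas}{\S_r} + \proj{\bthetas}{\S_r^\perp}$. Fix any $(\x,y)\in\Data_r$. Then $\x \in \S_r$ by definition of $\S_r$, and $\proj{\bthetas}{\S_r^\perp} \in \S_r^\perp$, so $\langle \proj{\bthetas}{\S_r^\perp}, \x\rangle = 0$. Therefore
\begin{equation*}
\langle \proj{\bthetas}{\S_r}, \x\rangle = \langle \bthetas, \x\rangle - \langle \proj{\bthetas}{\S_r^\perp}, \x\rangle = \langle \bthetas, \x\rangle = f(\bthetas,\x) = y,
\end{equation*}
where the last equality uses that $\bthetas$ interpolates $\Data$, hence in particular $\Data_r$. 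This establishes (b).

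Combining (a) and (b), $\proj{\bthetas}{\S_r}$ is a vector in $\S_r$ feasible for \eqref{eq:bthetasr-gen-def-app}, and by the uniqueness half of Lemma~\ref{lm:min-norm-sol-rowspace-app} (applied to $\Data_r$) it must equal $\bthetas_r$, completing the proof. I do not anticipate a genuine obstacle here; the only point requiring care is the bookkeeping that $\rowspace(\X_r) = \S_r$ and that the sub-problem's feasible set is nonempty so that Lemma~\ref{lm:min-norm-sol-rowspace-app} is legitimately invocable — both follow directly from $\bthetas$ interpolating $\Data$.
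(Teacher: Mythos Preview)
Your proof is correct and follows essentially the same approach as the paper: both arguments hinge on Lemma~\ref{lm:min-norm-sol-rowspace-app} applied to $\Data_r$ together with the orthogonal decomposition $\bthetas = \proj{\bthetas}{\S_r} + \proj{\bthetas}{\S_r^\perp}$. The only cosmetic difference is direction---the paper starts from $\bthetas_r$, invokes Lemma~\ref{lm:min-norm-lin-reg-feas-set-app} to write $\bthetas = \bthetas_r + \z$ with $\z \in \S_r^\perp$, and then projects, whereas you start from $\proj{\bthetas}{\S_r}$, verify it is feasible and lies in $\S_r$, and invoke uniqueness; your route is marginally cleaner since it bypasses Lemma~\ref{lm:min-norm-lin-reg-feas-set-app}.
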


\begin{proof}
    $\bthetas$ already satisfies the feasibility constraint over the whole dataset $\Data$, so it must be feasible for \eqref{eq:bthetasr-gen-def-app}. Applying Lemmas \ref{lm:min-norm-sol-rowspace-app} and \ref{lm:min-norm-lin-reg-feas-set-app} to the minimum norm problem over just $\Data_r$ \eqref{eq:bthetasr-gen-def-app}, we must have that $\bthetas_r \in \S_r$ and $\bthetas = \bthetas_r + \z$ for some $\z \in \S_r^\perp$. Then,
    \begin{equation*}
        \proj{\bthetas}{\S_r} = \proj{\bthetas_r + \z}{\S_r} = \proj{\bthetas_r}{\S_r} = \bthetas_r.
    \end{equation*}
\end{proof}

\section{Loss Minimization Does not Protect Against Data Leakage}
\label{app:linreg-leak-ex}
The following example concretely demonstrates how certain minimizers of the retain 
set loss do not align with the intended goals of unlearning. 

Recall the unlearning problem for linear regression discussed in Section \ref{sec:theory-linreg}. In this case, we use the linear model $f(\btheta,\x) = \btheta^\top \x$ parameterized by $\btheta \in \Re^m$. Further suppose the original dataset  $\Data = \{ (\xii,y_i) \}_{i=1}^n$ has $n$ samples with $\xii \in \Re^m,y_i \in \Re$. Denote the subspace $\S = \vecspan\{\x \mid (\x,y) \in \Data \}$, and assume $\dim(\S) = n$ so the problem is realizable. We work in the overparameterized setting where $m > n$ and the objective function is defined as the mean squared error denoted by 
\begin{equation*}
    \Loss{\btheta \: ; \Data} = \frac{1}{n}\sum_{(\x,y) \in \Data}   \paren{y - \btheta^\top \x}^2
\end{equation*}
Consider when the learning algorithm $\Alg$ runs gradient descent on the loss, initialized at \0. Due to the overparameterization, $\mathcal{J}$ has an infinite number of minimizers which each achieve $0$ loss. However, $\Alg$ is biased towards a specific minimizer which is the unique minimizer to the loss on the span of the input samples, denoted as the subspace $\S$.
\begin{proposition}    
    \label{prop:op-lin-opt-sol}
    Let $\Alg^k(\Data)$ be a learning algorithm which runs $k$ steps of gradient descent on $\Loss{\btheta \: ; \Data}$ initialized at \0, and define $\S = \vecspan\{\x \mid (\x,y) \in \Data \}$. 
    If $\displaystyle\lim_{k \rightarrow \infty} \Alg^k(\Data)$ converges to some $\bthetas$, then
    \begin{equation*}
        \{\bthetas\} = \S \cap \argmin \Loss{\btheta \: ; \Data}
    \end{equation*}
\end{proposition}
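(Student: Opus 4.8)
The plan is to exploit the well-known fact that gradient descent on a least-squares objective, initialized at the origin, produces iterates that live entirely in the row space of the data matrix $\X$ (equivalently, in $\S = \vecspan\{\x : (\x,y)\in\Data\}$), and then combine this with the characterization of minimum-norm interpolators already established in Lemma \ref{lm:min-norm-sol-rowspace-app}. Concretely, first I would write the gradient of $\Loss{\btheta;\Data}$ explicitly: $\grad_\btheta \Loss{\btheta;\Data} = -\tfrac{2}{n}\X^\top(\y - \X\btheta)$, which for every $\btheta$ is a vector of the form $\X^\top \u$ for some $\u \in \Re^n$, hence lies in $\rowspace(\X) = \S$. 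Since the iterate $\btheta^{(k+1)} = \btheta^{(k)} - \eta\,\grad_\btheta\Loss{\btheta^{(k)};\Data}$ starts at $\btheta^{(0)} = \0 \in \S$ and each increment lies in $\S$, an induction gives $\Alg^k(\Data) \in \S$ for all $k$. As $\S$ is a closed subspace, the limit $\bthetas = \lim_{k\to\infty}\Alg^k(\Data)$ also lies in $\S$.

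Next I would argue that $\bthetas$ minimizes $\Loss{\btheta;\Data}$. Under the realizability assumption $\dim(\S) = n$, the rows of $\X$ are linearly independent, so $\X\X^\top$ is invertible and the interpolating system $\y = \X\btheta$ is consistent; the minimum of $\Loss{\cdot;\Data}$ is therefore $0$, attained exactly on the affine set $\{\btheta : \X\btheta = \y\}$. To see that the limit point actually attains this, note that on the subspace $\S$ the objective $\btheta \mapsto \Loss{\btheta;\Data}$ restricted to $\S$ is strongly convex (its Hessian restricted to $\S$ is $\tfrac{2}{n}\X^\top\X$, which is positive definite on $\rowspace(\X) = \S$), so gradient descent with a suitable step size converges linearly to the unique minimizer of $\Loss{\cdot;\Data}$ over $\S$; by assumption this is the point $\bthetas$, and since the global minimum value $0$ is attained somewhere in $\S$ (namely at $\proj{\btheta^\dagger}{\S}$ for any interpolator $\btheta^\dagger$, which still interpolates because $\X$ annihilates $\S^\perp$), the minimizer over $\S$ is a global minimizer. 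Hence $\bthetas \in \argmin\Loss{\btheta;\Data}$.

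Finally I would establish uniqueness to conclude $\{\bthetas\} = \S \cap \argmin\Loss{\btheta;\Data}$. Suppose $\btheta_1,\btheta_2 \in \S \cap \argmin\Loss{\btheta;\Data}$. Then both interpolate, so $\X(\btheta_1 - \btheta_2) = \0$, i.e. $\btheta_1 - \btheta_2 \in \ker(\X)$; but $\btheta_1 - \btheta_2 \in \S = \rowspace(\X) = \ker(\X)^\perp$, so $\btheta_1 - \btheta_2 \in \ker(\X)\cap\ker(\X)^\perp = \{\0\}$. This is exactly the argument already used in the uniqueness half of Lemma \ref{lm:min-norm-sol-rowspace-app}, so I would simply cite it. Combined with the two inclusions above, this yields the claimed equality.

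The main obstacle is the middle step: rather than hand-waving "GD converges to the min-norm solution," I need to be careful that the limit point, which is \emph{assumed} to exist, is genuinely a global minimizer and not merely a stationary point within $\S$ — this requires the strong-convexity-on-$\S$ observation together with the fact that the global optimum value is achievable within $\S$. A clean way to sidestep step-size bookkeeping is to observe that any limit point of gradient descent on a smooth convex function is a stationary point, that the only stationary point of $\Loss{\cdot;\Data}$ inside $\S$ is the unique minimizer of the restricted strongly convex problem, and that this restricted minimizer is globally optimal by the projection argument; everything else is routine linear algebra that mirrors Appendix \ref{app:min-norm}.
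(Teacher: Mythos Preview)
Your proposal is correct and follows essentially the same route as the paper: compute the gradient as $\tfrac{2}{n}\X^\top(\X\btheta-\y)\in\im(\X^\top)=\S$, conclude by induction that all iterates (and hence the limit) lie in $\S$, and then argue that the limit is a global minimizer by convexity. The paper's proof is terser on the middle step---it simply asserts that a convergent sequence of gradient-descent iterates must converge to a zero of the gradient, and then invokes convexity---whereas you additionally supply the strong-convexity-on-$\S$ argument and explicitly verify that the optimal value is attainable within $\S$; this extra care is not wrong, just more than the paper does. You are also more explicit about uniqueness (the $\ker(\X)\cap\rowspace(\X)=\{\0\}$ step), which the paper's proof omits and implicitly defers to Lemma~\ref{lm:min-norm-sol-rowspace-app}.
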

\begin{proof}
    We write the loss function $\Loss{\btheta \: ; \Data}$ in vector form $\Loss{\btheta \: ; \Data} = \frac{1}{n} \norm{\y - \X\btheta}{2}^2$, where the $i$th entry of $\y \in \Re^n$ is $y_i$ and the $i$th row of $\X \in \Re^{n \times m}$ is $\x_i^\top$. Note that the gradient of the loss for any value of $\btheta$ is contained the subspace $\S$, as $\grad_{\btheta} \Loss{\btheta \: ; \Data} = \frac{2}{n} \X^\top \paren{\X \btheta - \y}$ and $\im(\X^\top) = \S$. Further, the initial iterate of $\Alg^k$ is $\0 \in \S$. Since subspaces are closed under addition, every iterate of gradient descent on $\Loss{\btheta \: ; \Data}$ starting from \0 must be contained in $\S$. Thus if $\Alg^k(\Data)$ converges, it must converge to a zero of the gradient of the loss, and this point must also be in $\S$. Since the loss is convex, this point must be a loss minimizer. 
\end{proof}
In this case, the original training solution $\bthetas$ which results from simply performing gradient descent interpolates all of $\Data$ and lies on $\S$, the span of the input samples in $\Data$. Then, given an unlearning request to forget any subset $\Data_f$ from $\Data$, $\bthetas$ itself is a minimizer to the loss on the resulting retain set $\Data_r = \Data \setminus \Data_f$. However, since $\bthetas \in \S$ reveals information about all the input samples in $\Data$, it necessarily leaks information about the samples in $\Data_f$. Thus, even though $\bthetas$ is a valid minimizer of $\Loss{\btheta \; ; \Data_r}$, it is not an acceptable unlearning solution. 
\section{Proofs}

\subsection{Proof of Theorem \ref{th:loss-grad-fail}}

We assume $f(\bthetas,\cdot)$ interpolates all of $\Data$, so $f(\bthetas,\x) = \y$ for all $(\x,\y)\in\Data$, and that the sample-wise loss $\LossSamp{\btheta,\x,\y}$ is minimized when $f(\btheta,\x) = \y$. Thus, $\bthetas$ must minimize each of the sample-wise losses $\LossSamp{\btheta,\x,\y}$ for all $(\x,\y) \in \Data$. Therefore, $\grad_{\btheta} \LossSamp{\bthetas,\x,\y} = \0$ for all $(\x,\y) \in \Data$.

Since $\Loss{\bthetas ; \Data_r} = \frac{1}{\abs{\Data_r}} \sum_{(\x,\y) \in \Data_r} \LossSamp{\btheta,\x,\y}$ and $\Loss{\bthetas ; \Data_f} = \frac{1}{\abs{\Data_f}} \sum_{(\x,\y) \in \Data_f} \LossSamp{\btheta,\x,\y}$, we must have that $\grad_{\btheta} \Loss{\bthetas ; \Data_r} = \grad_{\btheta} \Loss{\bthetas ; \Data_f} = \0$.

Then, if $M_{\text{LG}}$ is any loss-gradient unlearning method, the update rule must be of the form
\begin{equation*}
    M\paren{\Alg,\Info_r,\Alg(\Data), \Data_f} =
    \bthetas - \P_r \grad_{\btheta} \Loss{\bthetas ; \Data_r}
    + \P_f \grad_{\btheta} \Loss{\bthetas; \Data_f} + \bxi,
\end{equation*}
where $\P_r$ and $\P_f$ are positive semi-definite matrices and $\bxi$ is a zero-mean random variable. Applying the fact that $\grad_{\btheta} \Loss{\bthetas ; \Data_r} = \grad_{\btheta} \Loss{\bthetas ; \Data_f} = \0$ to the update of $M_{\text{LG}}$ gives the desired result:

\begin{equation*}
    M_{\text{LG}} \paren{\Alg,\Info_r,\Alg(\Data),\Data_f} = \bthetas + \bxi
\end{equation*}

\subsection{Proof of Theorem \ref{th:proj-sol-linear}}
\label{sec:proof-prop-proj-sol-linear}
Recall we have a feasible vector $\bthetas$ such that $\bthetast \x = y$ for all $(\x,y) \in \Data$, and we want to recover $\bthetas_r$, the minimum $\ell_2$ norm solution over just a subset $\Data_r \subseteq \Data$:
\begin{equation}
    \label{eq:unlearn-lin-reg-app}
    \bthetas_r = \argmin_{\btheta} \norm{\btheta}{2} \quad  \text{s.t.} \  \btheta^\top \x = y \quad  \forall (\x,y) \in \Data_r
\end{equation}

Consider solving the relaxed unlearning problem \eqref{eq:proj-sol-general} for $\tilde{R}(\btheta) = \norm{\btheta}{2}$:

\begin{equation*}
    \bDeltaTild = \argmin_{\bDelta}  \norm{\bthetas + \bDelta}{2} \text{ s.t. } \bDelta \perp \x \quad \forall (\x,\y) \in \Data_r
\end{equation*}

Define $\S_r  = \vecspan \{ \x \mid (\x,y) \in \Data_r \}$ and write the equivalent problem:
\begin{equation*}
    \bDeltaTild = \argmin_{\bDelta \in \S_r^\perp} \frac{1}{2} \norm{\bthetas + \bDelta}{2}^2
\end{equation*}

By first order optimality, $\bthetas + \bDeltaTild \in \S_r$, so we must have that
\begin{equation*}
    \bDeltaTild = -\proj{\bthetas}{\S_r^\perp}
\end{equation*}
Thus the updated unlearned vector is 
\begin{equation*}
    \bthetas + \bDeltaTild = \bthetas - \proj{\bthetas}{\S_r^\perp} = \proj{\bthetas}{\S_r}.
\end{equation*}

Then, $\proj{\bthetas}{\S_r} = \bthetas_r$ by Lemma \ref{lm:gen-proj-optimal-app}.

\subsection{Proof of Lemma \ref{lm:lin-network-min-pred-norm}}

Recall that in this case we are interested in minimizing $R(\btheta) = \norm{\w(\btheta)}{2}$, where $\w(\btheta) = \A^\top_1 \cdots \A^\top_{L-1} \c$ returns the effective linear predictor parameterized by $\btheta$.

We first show that $\bDeltaTild$ is feasible for the relaxed problem \eqref{eq:proj-sol-general}. Firstly, $\bDeltaTild$ is zero in all entries except those corresponding to the perturbation of $\A_1$, so we only need to ensure that $\bDeltaTild_{\A_1}$ is orthogonal to $\nabla_{\A_1} f(\bthetas, \x)$ for each $(\x, y) \in \Data_r$. Recall we denote the retain set input space as $\S_r = \vecspan \{\, \x \mid (\x, y) \in \Data_r \,\}$, and $\bDeltaTild_{\A_1}$ is defined as

\begin{equation*}
    \bDeltaTild_{\A_1} = 
        -\norm{\Ast_2 \cdots \Ast_{L-1} \cs }{2}^{-2}
        \Ast_2 \cdots \Ast_{L-1} \cs \proj{\w(\bthetas)}{\S_r^\perp}^\top.
\end{equation*}

Further, the gradients are computed as
\begin{equation*}
    \nabla_{\A_1} f(\bthetas, \x) = \Ast_2 \cdots \Ast_{L-1} \cs \x^\top
\end{equation*}

Then for any $(\x,y) \in \Data_r$,
\begin{align*}
    \langle \bDeltaTild_{\A_1} , \nabla_{\A_1} &f(\bthetas, \x) \rangle = \tr \left \{ \paren{\bDeltaTild_{\A_1}}^\top \nabla_{\A_1} f(\bthetas, \x) \right \} \\
    &= \tr \left \{ \nabla_{\A_1} f(\bthetas, \x) \paren{\bDeltaTild_{\A_1}}^\top \right \}\\
    &= -\norm{\Ast_2 \cdots \Ast_{L-1} \cs }{2}^{-2} \tr \left \{ \Ast_2 \cdots \Ast_{L-1} \cs \x^\top \proj{\w(\bthetas)}{\S_r^\perp} \cst \As_{L-1} \cdots \As_2 \right \}\\
    &= 0,
\end{align*}

where the last step follows from the fact that the inner term $\x^\top \proj{\w(\bthetas)}{\S_r^\perp} = 0$ since $\x \in \Data_r$ implies $\x \in \S_r$ by definition. 

We now show that $\bthetas + \bDeltaTild$ achieves the optimal unlearning solution $\bthetas$. By construction of $\bDeltaTild$, the only entries of $\bthetas$ that are perturbed are those which correspond to $\A_1$. Thus, we compute the effective linear predictor after the perturbation:
\begin{align*}
    \w(\bthetas +& \bDeltaTild) = \w(\bthetas) + \bDeltaTild_{\A_1}^\top \Ast_2 \cdots \Ast_{L-1} \cs\\
    &= \w(\bthetas) - \norm{\Ast_2 \cdots \Ast_{L-1} \cs }{2}^{-2} \proj{\w(\bthetas)}{\S_r^\perp} \cst \As_{L-1} \cdots \As_2 \, \Ast_2 \cdots \Ast_{L-1} \cs\\
    &= \w(\bthetas) - \norm{\Ast_2 \cdots \Ast_{L-1} \cs }{2}^{-2} \proj{\w(\bthetas)}{\S_r^\perp} \paren{\Ast_2 \cdots \Ast_{L-1} \cs}^\top \, \Ast_2 \cdots \Ast_{L-1} \cs\\
    &= \w(\bthetas) - \proj{\w(\bthetas)}{\S_r^\perp}\\
    &= \proj{\w(\bthetas)}{\S_r}
\end{align*}

Since the linear predictor $\w(\bthetas)$ already interpolated $\Data$, $\proj{\w(\bthetas)}{\S_r}$ must be the minimum norm linear predictor over $\Data_r$ by Lemma \ref{lm:gen-proj-optimal-app}. Thus, the effective predictor of the perturbed parameters $\w(\bthetas + \bDeltaTild)$ solves the exact unlearning problem \eqref{eq:unlearn-op-f} when $R(\btheta) = \norm{\w(\btheta)}{2}$, so $\bthetas + \bDeltaTild$ achieves the optimal unlearning solution.

\subsection{Proof of Theorem \ref{th:lin-network-min-pred-norm-first-layer-Rtild}}

Recall for this theorem we analyze $R(\btheta) = \norm{\w(\btheta)}{2}$. Let $\bDeltaTild$ be the perturbation which satisfies the conditions in Lemma \ref{lm:lin-network-min-pred-norm}. Then, $\bDeltaTild$ is feasible for the relaxed problem \eqref{eq:proj-sol-general}, and further $\bthetas + \bDeltaTild$ solves the exact unlearning problem \eqref{eq:unlearn-op-f}. 

Now, let $\bDelta^*$ minimize the relaxed problem \eqref{eq:proj-sol-general} for this $\tilde{R}$ defined in \eqref{eq:lin-network-min-pred-norm-first-layer-Rtild}. Then because $\tilde{R}$ ensures that all elements of $\bDelta^*$ which do not correspond to $\A_1$ are zero, we must have that for any $(\x,y) \in \Data_r$:

\begin{align*}
    \w(\bthetas + \bDelta^*)^\top \x &= \cst \As_{L-1} \cdots \As_2 \paren{\As_1 + \bDelta^*_{\A_1}} \x \\
    &= y + \cst \As_{L-1} \cdots \As_2 \, \bDelta^*_{\A_1} \x\\
    &= y + \langle \bDelta^*, \grad_{\btheta} f(\bthetas,\x) \rangle\\
    &= y,
\end{align*}
where the last equality follows from the feasibility of $\bDelta^*$ to \eqref{eq:proj-sol-general}. Thus, $\bthetas + \bDelta^*$ interpolates $\Data_r$, so $\bthetas + \bDelta^*$ is feasible for the exact unlearning problem \eqref{eq:unlearn-op-f}. We now show this point is also optimal for \eqref{eq:unlearn-op-f}.

Since $\bthetas + \bDeltaTild$ solves the exact unlearning problem \eqref{eq:unlearn-op-f} and $\bthetas + \bDelta^*$ is another feasible point, we must have that

\begin{equation*}
    R(\bthetas + \bDeltaTild) \leq R(\bthetas + \bDelta^*).
\end{equation*}

Further, both $\bDeltaTild$ and $\bDelta^*$ are feasible for \eqref{eq:proj-sol-general} and $\bDelta^*$ is defined as the solution to \eqref{eq:proj-sol-general}, so we must have that 
\begin{equation*}
    \tilde{R}(\bthetas + \bDelta^*) \leq \tilde{R}(\bthetas + \bDeltaTild).
\end{equation*}

But, since both $\bDeltaTild$ and $\bDelta^*$ are non-zero only in the entries corresponding to $\A_1$, applying $R$ and $\tilde{R}$ yields the same value:

\begin{equation*}
    R(\bthetas + \bDeltaTild) = \tilde{R}(\bthetas + \bDeltaTild) \quad \text{and} \quad  R(\bthetas + \bDelta^*) = \tilde{R}(\bthetas + \bDelta^*)
\end{equation*}

Thus, $R(\bthetas + \bDelta^*) = R(\bthetas + \bDeltaTild)$, so $\bthetas + \bDelta^*$ achieves the optimal objective value of \eqref{eq:unlearn-op-f}. Since we established feasibility and optimality, $\bthetas + \bDelta^*$ must solve \eqref{eq:unlearn-op-f}.

\subsubsection{\texorpdfstring{Necessity of Additional Regularizer $\hat{R}$ for Theorem~\ref{th:lin-network-min-pred-norm-first-layer-Rtild}}{Necessity of Additional Regularizer R̂ for Theorem}}
\label{app:need-to-linearize}
In this section, we show that minimizing just $R$ over the relaxed constraints, i.e. solving \eqref{eq:unlearn-op-f-relaxed_constraints}, for $R$ which measures the linear network predictor norm does not solve the exact unlearning solution. Because there is no control the size and direction of the perturbation $\bDelta$, we can construct a simple example where $\bDelta$ satisfies just the linearization of the data interpolation constraints but the updated network $\bthetas + \bDelta$ no longer interpolates $\Data_r$.

Consider a dataset of two samples $\Data = \{(\e_1,1),(\e_2,1) \}$, where $\e_i \in \Re^m$ is the $i$th standard basis vector for any $m \geq 3$. Consider the original 2-layer interpolating network trained on $\Data$ defined by parameters $\bthetas = \bmat{\cs \: ; \vecMat(\As)}$, where $\cs = \e_1 + \e_2 \in \Re^m$ and $\As$ is the $m \times m$ identity matrix $\As = \I_m$, so $f(\bthetas,\x) = \cst \As \x = (\e_1 + \e_2)^\top \x$.

We set $\Data_r = \{(\e_1,1)\}$ and $\Data_f = \{(\e_2,1)\}$, and define the perturbation variable $\bDelta = \bmat{\bDelta_{\c} \: ; \vecMat(\bDelta_{\A})}$. Translating the constraints of \eqref{eq:unlearn-op-f-relaxed_constraints} to this specific problem instance, we have that 
\begin{equation*}
    \bDelta_c^\top \e_1 + \tr \{\bDelta_{\A}^\top (\e_1+\e_2)\e_1^\top \} = 0
\end{equation*}


We then select the values $\bDelta_{\c} = -\e_3$ and $\bDelta_{\A} = \e_3 \e_1^\top - \e_2 \e_2^\top - \e_3 \e_3^\top$. It is easy to see that these choices satisfy the above constraint. Further, they achieve exact minimization of \eqref{eq:unlearn-op-f-relaxed_constraints}. We show below that the resulting network's predictor $(\As + \bDelta_{\A})^\top (\cs + \bDelta_{\c}) = \0$.

\begin{align*}
    R(\bthetas + \bDelta) &= \norm{(\As + \bDelta_{\A})^\top (\cs + \bDelta_{\c})}{2}  \\
    &= \norm{(\I + \e_3 \e_1^\top - \e_2 \e_2^\top - \e_3 \e_3^\top)^\top (\e_1 + \e_2 - \e_3)}{2} \\
    &= \norm{(\I + \e_1 \e_3^\top - \e_2 \e_2^\top - \e_3 \e_3^\top) (\e_1 + \e_2 - \e_3)}{2}\\
    &= \norm{\e_1 + \e_2 - \e_3 - \e_2 - \e_1 + \e_3}{2}\\
    &= \norm{\0}{2} = 0
\end{align*}

Thus, the updated network which solves \eqref{eq:unlearn-op-f-relaxed_constraints} predicts the constant function at \0 for all inputs $\x$, as $f(\bthetas + \bDelta,\x) = \paren{(\As + \bDelta_{\A})^\top (\cs + \bDelta_{\c})}^\top \x = \0^\top \x = 0$. 

This clearly does not interpolate $\Data_r$, and this example as a whole demonstrates that failing to control the size and direction of the drift term $\bDelta$ beyond just the linearized constraints does not lead to the exact unlearning solution.

\subsection{Proof of Theorem \ref{th:lin-network-min-param-norm-eqiv}}

Denote the minimum $\ell_2$ norm solution $\w(\hat{\btheta}_r^{*})$ to $\y = \X \w$ as just $\ws_r$ for brevity. Using $\ws_r$, we construct a solution to the exact unlearning problem \eqref{eq:unlearn-op-f} for $R(\btheta) = \norm{\btheta}{2}$, which we restate below:
\begin{equation*}
    \argmin_{\btheta} \norm{\btheta}{2} \text{ s.t. } \w(\btheta)^\top \x = y \quad \forall (\x,y) \in \Data_r
\end{equation*}

Expanding $\btheta = \bmat{\c \: ; \: \vecMat(\A_1) \: ; \:  \dots \: ; \:  \vecMat(\A_{L-1})}$ into the sub-parameters, squaring the objective, and organizing $(\x,y) \in \Data_r$ into input data matrix $\X_r \in \Re^{\abs{\Data_r} \times d}$ and output vector $\y_r \in \Re^{\abs{\Data_r}}$ gives an equivalent problem:
\begin{equation}
    \label{lin-network-min-param-norm-hard-constrained-sq-app}
    \argmin_{\c, \A_1,\dots,\A_{L-1}} \norm{\c}{2}^2 + \sum_{\ell=1}^{L-1} \norm{\A_\ell}{F}^2 \text{ s.t. } \y_r = \X_r \A_1^\top \dots \A_{L-1}^\top \c
\end{equation}

Let $\cs,\As_1,\dots,\As_{L-1}$ be a solution to \eqref{lin-network-min-param-norm-hard-constrained-sq-app}. Then, $\Ast_1 \dots \Ast_{L-1} \cs$ interpolates $\Data_r$, so:
\begin{equation*}
    \Ast_1 \dots \Ast_{L-1} \cs = \ws_r + \z,
\end{equation*}

where $\ws_r \in \rowspace(\X_r)$ and $\z \in \ker(\X_r)$ by Lemma \ref{lm:min-norm-lin-reg-feas-set-app}.

Let $\P_{\ws_r} = \frac{1}{\norm{\ws_r}{2}^2} \ws_r \wst_r$ be the projection matrix onto $\vecspan(\ws_r)$. Then replacing $\As_1$ with $\As_1 \P_{\ws_r}$ maintains feasibility since $\P^\top_{\ws_r} \Ast_1 \dots \Ast_{L-1} \cs = \P_{\ws_r} \paren{\ws_r + \z} = \ws_r$ which is feasible by definition. Further, $\As_1 \P_{\ws_r}$ achieves smaller objective function value since 
\begin{equation*}
    \norm{\As_1 \P_{\ws_r}}{F}^2 = \tr \{\As_1 \P_{\ws_r} \P_{\ws_r} \Ast_1 \} = \tr \{\P_{\ws_r} \Ast_1 \As_1 \} \leq \norm{\P_{\ws_r}}{2} \norm{\Ast_1 \As_1}{*} = \norm{\As_1}{F}^2.
\end{equation*}

The second equality follows from the cyclic property of trace and the fact that $\P_{\ws_r}$ is both symmetric and idempotent, and the inequality is a generalized H\"{o}lder's inequality for matrices.

Thus, replacing $\As_1$ with the rank-1 matrix $\As_1 \P_{\ws_r}$ must preserve optimality of any solution that contains $\As_1$. Write $\As_1 \P_{\ws_r} = \lambda_1 \vvec_1 \wst_r$ for some $\lambda_1 \in \Re$, $\vvec_1 \in \Re^{h_\ell}$ with $\norm{\vvec_1}{2}=1$.

We can apply an analogous argument with the matrix $\P_{\vvec_1}$, which projects its input onto $\vecspan(\vvec_1)$, to show that any solution that contains $\As_2$ must remain optimal with $\As_2$ replaced by the rank-1 matrix $\As_2 \P_{\vvec_1}$. Continuing this argument for each $\As_\ell$, $\ell = 3,\dots,L-1$ as well as for $\cs$ shows that we can search for solution over a much smaller space. Specifically, for some $\lambda_\ell \in \Re$ and $\vvec \in \Re^{h_\ell}$, we can decompose $\cs$ and each $\As_\ell$ as

\begin{equation*}
    \As_1 = \lambda_1 \vvec_1 \wst_r \qquad \As_{\ell} = \lambda_\ell \vvec_{\ell} \vvec_{\ell-1}^\top \: \text{ for } \: \ell=2,\dots,L-1 \qquad \cs = \lambda_L \vvec_{L-1} \qquad 
\end{equation*}

Then, \eqref{lin-network-min-param-norm-hard-constrained-sq-app} reduces to
\begin{align}
    &\min_{\lambda_i,\vvec_\ell} 
        \norm{\lambda_L \vvec_{L-1}}{2}^2 
        + \norm{\lambda_1 \vvec_1 \wst_r}{F}^2 
        + \sum_{\ell=2}^{L-1} \norm{\lambda_\ell \vvec_{\ell} \vvec_{\ell-1}^\top}{F}^2 \nonumber \\
    &\hspace{6em} \text{s.t.} \quad 
        (\lambda_1 \ws_r \vvec_1^\top) 
        (\lambda_2 \vvec_1 \vvec_2^\top) 
        \cdots 
        (\lambda_{L-1} \vvec_{L-2} \vvec_{L-1}^\top) 
        (\lambda_L \vvec_{L-1}) 
        = \ws_r \: \text{and} \: \norm{\vvec_\ell}{2}=1 \nonumber \\
    &= \min_{\lambda_i}
        \label{eq:lin-network-param-norm-lam-reduction-app}
        \norm{\ws_r}{2}^2 \lambda_1^2 
        + \sum_{\ell=2}^{L} \lambda_\ell^2 \quad \text{s.t.} \quad \lambda_1 \lambda_2 \cdots \lambda_L = 1
\end{align}

We perform a change of variables setting $\gamma_i = \lambda_i^2$ and enforcing $\gamma_i > 0$.
\begin{equation}
    \label{eq:lin-network-param-norm-gam-reduction-app}
    \min_{\gamma_i > 0}
        \norm{\ws_r}{2}^2 \gamma_1 
        + \sum_{\ell=2}^{L} \gamma_\ell \quad \text{s.t.} \quad \gamma_1 \gamma_2 \cdots \gamma_L = 1
\end{equation}

Define $\bgamma = (\gamma_1,\dots,\gamma_L)$, objective function $g(\bgamma) = \norm{\ws_r}{2}^2 \gamma_1 + \sum_{\ell=2}^{L} \gamma_\ell$, and constraint $h(\bgamma) = \gamma_1 \gamma_2 \cdots \gamma_L - 1 = 0$. By the AM-GM inequality, we have that for any feasible $\bgamma$
\begin{equation*}
    g(\bgamma) \geq L \paren{\norm{\ws_r}{2}^2 \gamma_1 \cdots \gamma_L}^{\tfrac{1}{L}} = L \norm{\ws_r}{2}^{2/L},
\end{equation*}

where the last equality follows from the constraint $h(\bgamma) = 0$. Define feasible point $\bgamma^*$ such that

\begin{equation*}
    \bgamma^* = \paren{ \norm{\ws_r}{2}^{\tfrac{2(1-L)}{L}}, \norm{\ws_r}{2}^{\tfrac{2}{L}},\dots,\norm{\ws_r}{2}^{\tfrac{2}{L}} }.
\end{equation*}

Then $g(\bgamma^*) = \norm{\ws_r}{2}^{2/L}$ achieves the lower bound, so it must solve \eqref{eq:lin-network-param-norm-gam-reduction-app}. Thus, the optimal values $\lambda_1^*,\dots,\lambda_L^*$ to \eqref{eq:lin-network-param-norm-lam-reduction-app} result from taking square roots of $\bgamma_\ell^*$. Then, the following values for the network parameters must be optimal for \eqref{lin-network-min-param-norm-hard-constrained-sq-app}:

\begin{equation*}
    \As_1 = \norm{\ws_r}{2}^{\tfrac{(1-L)}{L}} \vvec_1 \wst_r \qquad \As_{\ell} =  \norm{\ws_r}{2}^{\tfrac{1}{L}} \vvec_{\ell} \vvec_{\ell-1}^\top \: \text{ for } \: \ell=2,\dots,L-1 \qquad \cs = \norm{\ws_r}{2}^{\tfrac{1}{L}} \vvec_{L-1}. 
\end{equation*}

\subsection{Proof of Theorem \ref{th:perceptron}}

We prove the theorem using the following lemma. See the end of the section for a proof.
\begin{lemma}
    \label{lm:l0-norm-pert}
    For $\c \in \Re^h$ and subspace $\mathcal{G} \subseteq \Re^h$ such that $\dim(\mathcal{G}) = s$, there exists $\bDelta_c \in \mathcal{G}_r^\perp$ such that $\norm{\c + \bDelta_c}{0} \leq s$, where the $\ell_0$-``norm" $\norm{\, \cdot \,}{0}$ counts the number of non-zero elements.
\end{lemma}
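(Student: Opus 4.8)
The plan is to reduce the claim to the classical fact that a consistent linear system with $s$ equations has a solution supported on at most $s$ coordinates. Fix a basis $\g_1,\dots,\g_s$ of $\mathcal{G}$ and let $\G \in \Re^{s \times h}$ be the matrix whose $i$th row is $\g_i^\top$, so that $\mathcal{G}^\perp$ (the orthogonal complement of $\mathcal{G}$) equals $\ker(\G)$. A perturbation $\bDelta_c$ lies in $\mathcal{G}^\perp$ precisely when $\G\bDelta_c = \0$, i.e. precisely when $\x := \c + \bDelta_c$ satisfies $\G\x = \G\c$. Hence it suffices to exhibit some $\x$ with $\G\x = \G\c$ and $\norm{\x}{0} \le s$, and then take $\bDelta_c = \x - \c$; the system is consistent since $\x = \c$ is one solution.

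To produce such a sparse $\x$ I would run the standard support-reduction argument. Start from $\x = \c$. As long as $\norm{\x}{0} > s$, the columns of $\G$ indexed by $\mathrm{supp}(\x)$ are more than $s$ vectors in $\Re^s$, hence linearly dependent, so there is a nonzero $\z$ with $\mathrm{supp}(\z) \subseteq \mathrm{supp}(\x)$ and $\G\z = \0$. Choosing any $j$ with $\z_j \ne 0$ and replacing $\x$ by $\x - (\x_j/\z_j)\,\z$ preserves $\G\x = \G\c$ (because $\G\z = \0$), keeps $\mathrm{supp}(\x)$ inside the previous support, and annihilates coordinate $j$, so the support size strictly decreases. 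Iterating at most $h-s$ times yields an $\x$ with $\norm{\x}{0} \le s$.

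Finally, setting $\bDelta_c = \x - \c$ gives $\G\bDelta_c = \G\x - \G\c = \0$, so $\bDelta_c \in \ker(\G) = \mathcal{G}^\perp$, and $\norm{\c + \bDelta_c}{0} = \norm{\x}{0} \le s$, which is the claim (the case $s = h$ is trivial, with $\bDelta_c = \0$). The only step that needs care is the support-reduction loop—checking the linear dependence of the selected columns and that the chosen step size removes one coordinate without introducing new ones—but once everything is phrased through $\G$ this is routine; the rest is bookkeeping.
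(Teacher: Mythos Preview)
Your proof is correct, but it proceeds differently from the paper's. You fix a basis for $\mathcal{G}$, stack it into the matrix $\G \in \Re^{s \times h}$, and recast the problem as finding an $s$-sparse solution to the consistent system $\G\x = \G\c$; you then run the classical support-reduction (basic feasible solution) argument, repeatedly exploiting linear dependence among more than $s$ columns of $\G$ to eliminate one coordinate at a time. The paper instead works on the \emph{other} side: it fixes a basis for $\mathcal{G}^\perp$ as the columns of some $\P \in \Re^{h\times(h-s)}$, passes to the reduced column echelon form $\tilde{\P}$, and uses the echelon structure (distinct pivot rows $j_1,\dots,j_{h-s}$) to choose coefficients $\gamma_i$ that sequentially zero out the $(h-s)$ coordinates $j_1,\dots,j_{h-s}$ of $\c + \bDelta_c$ in one explicit construction. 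Your route is the textbook linear-programming argument and needs only the pigeonhole-style dependence fact; the paper's route is a direct one-shot construction that names in advance which $h-s$ coordinates get annihilated, at the cost of invoking echelon form. Both yield the same bound $\norm{\c+\bDelta_c}{0} \le s$ and are equally valid here.
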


Because $\hat{R}$ does not allow any perturbation of $\As$, any solution to \eqref{eq:Rtild-perceptron} must only perturb $\bthetas$ in the entries corresponding to $\cs$.

Let $s=\dim(\vecspan \{ \phi(\As \x)\}_{(\x,y) \in \Data_r})$. Note that by definition we have that $s \leq \abs{\Data_r}$. Apply the lemma to $\cs$ and $\vecspan \{ \phi(\As \x) \}_{(\x,y) \in \Data_r}$ so that there exists $\bDeltaTild_c \in \vecspan \paren{\{ \phi(\As \x) \}_{(\x,y) \in \Data_r}}^\perp$ such that $\| \cs + \bDeltaTild_c\|_{0} \leq s$. Define $\bDeltaTild = \bmat{\bDeltaTild_c \: ; \: \0}$.

Then the network defined by $\bthetas + \bDeltaTild$ has at most $s$ active neurons since any zero element of $\cs + \bDeltaTild_c$ cannot contribute an active neuron. Further, $ \{ \phi(\As \x) \}_{(\x,y) \in \Data_r} = \{ \grad_{\c} f(\bthetas,\x) \}_{\x,y \in \Data_r}$, so the perturbation $\bDeltaTild$ is feasible for the relaxed problem \eqref{eq:proj-sol-general}. But, $f$ is linear in $\c$, so this perturbation must preserve function value on $\Data_r$, since the constraints of the relaxed problem are tight when just perturbing $\cs$. Thus, the resulting network defined by $\bthetas + \bDeltaTild$ both interpolates $\Data_r$ and has at most $s = \dim(\vecspan \{\phi(\As \x)\}_{(\x,y) \in \Data_r})$ active neurons.

Note that this construction of $\bDeltaTild$ satisfies the conditions of Lemma~\ref{lemma:perceptron-delta-exist}, so we do not include a separate proof of the lemma since it is contained within the larger proof of the theorem.

\textit{Proof of Lemma \ref{lm:l0-norm-pert}}:

Let the columns of some $\P \in \Re^{h \times (h - s)}$ form a basis for $\mathcal{G}^\perp$ so that $\im(\P) = \mathcal{G}^\perp$. Consider the reduced column echelon form of $\P$ denoted $\rcef(\P) = \tilde{\P}$. By definition, $\im(\tilde{\P}) = \im(\P) = \mathcal{G}^\perp$, so $\rank(\tilde{\P}) = h-s$ and thus each of the $h-s$ columns of $\tilde{\P}$ has a leading one. Let $\tilde{\p}_i$ be the $i$th column of $\tilde{\P}$ and let $j_i$ denote the index of the leading one in $\tilde{\p}_i$ for all $i \in [h-s]$.

Let $(\tilde{\p}_i)_k$ denote the $k$th element of $\tilde{\p}_i$. By definition of the reduced column echelon form, we have that $(\tilde{\p}_i)_k = 0$ for all $k < j_i$. Define
\begin{equation*}
    \bDelta_c = \sum_{i=1}^{h-s} \gamma_i \tilde{\p}_i
\end{equation*}

for coefficients $\gamma_i \in \Re$ defined as
\begin{equation*}
    \gamma_i = - \paren{\cs + \sum_{k=1}^{i-1} \tilde{\p}_k}_{j_i}
\end{equation*}

Since each $\tilde{\p}_i$ is only non-zero in the indices $j_i$ to $h$, we must have that $(\cs + \bDelta_c)_{j_i} = 0$ for all $i \in [h-s]$, so $\norm{\cs + \bDelta_c}{0} \leq s$.

\subsection{Proof of Proposition \ref{prop:perceptron-A-sparsity}}

Consider any parameter vector $\btheta = \bmat{\c \, ; \vecMat(\A)}$. Then for any input $\x$, we can write $f(\btheta, \x) = \sum_{i=1}^h \c_i \phi(\a_i^\top \x)$ where $\c_i$ is the $i$th element of $\c$ and $\a_i^\top$ is the $i$th row of $\A$. Consider the updated parameters $\hat{\btheta} = \bmat{\c \, ; \vecMat(\hat{\A})}$ for $\hat{\A} = (\1_{\c \neq 0} , \1^\top) \odot \A$. Then,
\begin{equation*}
    f(\btheta, \x) = \sum_{i=1}^h \c_i \phi(\a_i^\top \x) = \sum_{i=1}^h \c_i \phi(\indic{\c_i \neq 0} \a_i^\top \x) = f(\hat{\btheta}, \x),
\end{equation*}

where the second equality follows from the fact that we can set $\a_i$ to be zero whenever $\c_i = 0$ since that neuron does not contribute to the function output whenever $\c_i=0$. Further, changing $\a_i$ for any $i$ where $\c_i = 0$ does not change the number of neurons, since if for the $i$th neuron we have $\c_i = 0$, then this neuron can never be active no matter the value of $\a_i$:
\begin{equation*}
    R(\btheta) = \sum_{i=1}^h \indic{\abs{\c_i} \norm{\a_i}{2} > 0} = \sum_{i \, : \, \c_i \neq 0} \indic{\a_i \neq \0} = \sum_{i \, : \,  \c_i \neq 0} \indic{\hat{\a}_i \neq \0} = R(\hat{\btheta}),
\end{equation*}

where $\hat{\a}_i^\top$ is the $i$th row of $\hat{\A}$. Lastly, since $\hat{\a}_i$ is always equal to \0 when $\c_i = 0$, we must have that $\hat{\A}$ has at most $R(\hat{\btheta})$ number of nonzero rows.

\section{\alg Algorithm}
\label{app:alg}

We derive the closed form solution of \eqref{eq:proj-sol-general} for the specific choice $\tilde{R}(\btheta + \bDelta) = \norm{\btheta + \bDelta}{2}^2 + \lambda \norm{\bDelta}{2}^2$. 

Define the span of the model gradients over $\Data_r$ as the subspace $\mathcal{G}_r = \vecspan \{ \grad_{\btheta} f(\btheta,\x) \}_{(\x,\y) \in \Data_r}$ and consider any $\lambda \geq 0$. We then solve the following problem:
\begin{equation}
    \label{eq:alg-subproblem-app}
    \bDeltaTild = \argmin_{\bDelta} \norm{\btheta + \bDelta}{2}^2 + \lambda \norm{\bDelta}{2}^2 \quad \text{ s.t. } \bDelta \in \mathcal{G}_r^\perp.
\end{equation}

This is a strongly convex problem over a linear constraint, so its solution $\bDeltaTild$ is the unique point which satisfies the following condition for first order optimality:

\begin{equation*}
    (1 + \lambda)\bDeltaTild + \btheta \in \mathcal{G}_r.
\end{equation*}

Note that this is satisfied by the projection 
\begin{equation*}
    \bDeltaTild = -\frac{1}{1 + \lambda} \proj{\btheta}{\mathcal{G}_r^\perp},
\end{equation*}

which must then be the unique solution to \eqref{eq:alg-subproblem-app}.

\section{Experiments}
\label{app:experiments}

We first standardize the notation for each algorithm. Throughout our experiments, we sweep over hyperparameters and report the best results for each algorithm, and we sweep related hyperparameters for each algorithm through the same set of values. For example, every algorithm has a learning rate which is selected from searching over the same set of values. We first define the hyperparameter names we use along with the algorithms they apply to.

\begin{table}[h]
\centering
\caption{Hyperparameter definitions and their associated methods.}
\resizebox{\textwidth}{!}{
\begin{tabular}{lll}
\toprule
\textbf{Symbol} & \textbf{Methods} & \textbf{Description} \\
\midrule
$T$       & All                         & Number of epochs \\
$\eta$       & All                         & Learning rate \\
$\lamGA$      & NGP, Scrub, SalUn                 & Loss ascent coefficient \\
$\lamReg$    & NPO, Scrub, \alg, Ridge, $\ell_1$-Sparse     & Regularization coefficient \\
$\sigma$     & NGD                         & Gradient noise standard deviation \\
$\TGD$       & Scrub, \alg                 & Number of final descent epochs on retain set \\
$\gamReg$    & \alg, Ridge                 & Regularization coefficient decay rate \\
$\TProj$     & \alg                        & Projection period \\
$n_{\text{pert}}$     & \alg               & Subsample size to compute gradient space \\
\bottomrule
\end{tabular}
}
\label{tab:hyperparams}
\end{table}

\subsection{Implementations}

We now define the exact implementation of each method. Consider a batch of retain samples $\mathcal{B}_r$ and forget samples $\mathcal{B}_f$, along with loss function $\mathcal{J}$. For each method, we use the AdamW optimizer with learning rate $\eta$ on different effective loss functions. We express the loss functions below.

\subsubsection{Retrain and GD}

Retrain and GD share the same loss function, but Retrain initializes a new model from scratch before optimizing this loss.

\begin{equation*}
    \mathcal{J}_{\text{Retrain}}(\btheta \: ; \Batch_r) = \mathcal{J}_{\text{GD}}(\btheta \: ; \Batch_r) = \Loss{\btheta \: ; \Batch_r}
\end{equation*}

\subsubsection{GA}
\begin{equation*}
    \mathcal{J}_{\text{GA}}(\btheta \: ; \Batch_f) = -\Loss{\btheta \: ; \Batch_f}
\end{equation*}

\subsubsection{NGD}
\begin{equation*}
    \mathcal{J}_{\text{NGD}}(\btheta \: ; \Batch_r) = \Loss{\btheta \: ; \Batch_r} + \btheta^\top \bxi,
\end{equation*}

where $\bxi \sim \mathcal{N}(\0,\sigma^2\I)$ is a zero-mean Gaussian random vector.

\subsubsection{NGP}
\begin{equation*}
    \mathcal{J}_{\text{NGP}}(\btheta \: ; \Batch_r, \Batch_f) = \Loss{\btheta \: ; \Batch_r} - \lamGA \Loss{\btheta \: ; \Batch_f}
\end{equation*}

\subsubsection{Ridge}
We store a regularization weighting $\lambda$ which we initialize to $\lambda = \lamReg$. We define the Ridge loss as
\begin{equation*}
    \mathcal{J}_{\text{Ridge}}(\btheta \: ; \Batch_r) = \Loss{\btheta \: ; \Batch_r} + \lambda \norm{\btheta}{2}^2.
\end{equation*}

After updating the parameter vector using this loss on each batch, we update $\lambda$ as
\begin{align*}
 \lambda \gets \gamReg \lambda.
\end{align*}

Note that $\gamReg$ is always set within the range $(0,1)$, so the update to $\lambda$ approximates the limit as $\lambda$ goes to $0$ as we iterate through the epochs. This attempts to recover the minimum-norm training loss minimizer.

\subsubsection{\texorpdfstring{$\ell_1$-Sparse}{l1-Sparse}}

For each epoch $t = 1,\dots,T$, we define the $\ell_1$-Sparse loss as

\begin{equation*}
    \mathcal{J}_{\ell_1\text{-Sparse}}(\btheta \: ; \Batch_r) = \Loss{\btheta \: ; \Batch_r} + 2(1 - \frac{t-1}{T}) \lamReg \norm{\btheta}{1}.
\end{equation*}

This follows the linearly decaying regularization schedule proposed in \citep{jia:2023:l1-sparse}.

\subsubsection{Scrub}
The Scrub loss decomposes into different terms depending on the epoch. Let $\pi_{\btheta} (\y \mid \x)$ denote the model's predicted distribution over classes $\y$ for input $\x$ for parameter vector $\btheta$, and define $\mathrm{KL}( \cdot \,\|\, \cdot)$ as the Kullback-Leiber divergence. Recall $\bthetas$ denotes the initial trained model parameters, and denote the current epoch $t \in \{0,\dots,T-1\}$. Then the Scrub loss $\mathcal{J}_{\text{Scrub}}(\btheta \: ; \Batch_r, \Batch_f, \lamReg, \lamGA, t)$ is defined as:
\begin{align*}
\mathcal{J}_{\text{Scrub}}&(\btheta \: ; \Batch_r, \Batch_f, \lamReg, \lamGA, t) = \\
&\begin{cases}
\Loss{\btheta \: ; \Batch_r} + \frac{\lamReg}{\abs{\Batch_r}} \displaystyle\sum_{\substack{(\x_r, y_r) \in \Batch_r}} \mathrm{KL}(\pi_{\bthetas}(\y \mid \x_r) \,\|\, \pi_{\btheta}(\y \mid \x_r)) & \text{if } t \text{ even or } t \geq T - \TGD \\
- \frac{\lamGA}{\abs{\Batch_f}} \displaystyle\sum_{\substack{(\x_f, y_f) \in \Batch_f}} \mathrm{KL}(\pi_{\bthetas}(\y \mid \x_f) \,\|\, \pi_{\btheta}(\y \mid \x_f)) & \text{otherwise}
\end{cases}
\end{align*}

\subsubsection{NPO}
Recall that $\bthetas$ denotes the initial trained model parameters. Then, the NPO loss is

\begin{equation*}
    \mathcal{J}_{\text{NPO}} \paren{\btheta \: ; \Batch_f,\lamGA} = \frac{1}{\abs{\Batch}} \sum_{(\x_f,y_f) \in \Batch_f} \frac{2}{\lamGA} \log \paren{1 + \frac{\pi_{\btheta}(y_f \mid \x_f)}{\pi_{\bthetas}(y_f \mid \x_f)}}^{\lamGA},
\end{equation*}

where $\pi_{\btheta} (y_f \mid \x_f)$ denotes the model's predicted probability of class $y_f$ for input $\x_f$ for parameter vector $\btheta$. Note that this is equivalent to setting the parameter $\beta$ in \citep{zhang:2024:npo} to $\lamReg$.

\subsubsection{SalUn}

Before performing any unlearning, SalUn computes the median of the absolute values of the elements of the model gradient with respect to the loss over the forget set at the original parameter vector $\bthetas$. Formally, define
\begin{equation*}
    \g_f = \text{abs}\!\paren{\grad_{\btheta} \Loss{\bthetas \: ; \Data_f}},
\end{equation*}
where $\text{abs}\!\paren{\cdot}$ denotes the element-wise absolute value. This step requires an initial pass over the forget set which we do not count toward the number of unlearning epochs $T$.

The ''saliency mask" $\mask_S$ is then defined as the binary vector that selects parameters whose corresponding gradient magnitudes exceed the median of $\g_f$, denoted $\text{median}\!\paren{\g_f}$:
\begin{equation*}
    \mask_S = \indic{\g_f > \text{median}\!\paren{\g_f}}.
\end{equation*}

After computing $\mask_S$, SalUn minimizes the loss
\begin{equation*}
    \mathcal{J}_{\text{SalUn}}(\btheta \: ; \Batch_r, \Batch_f) = \Loss{\btheta \: ; \Batch_r} + \lamGA \Loss{\btheta \: ; \Batch_f'},
\end{equation*}
where $\Batch_f'$ is a modified version of the forget set batch $\Batch_f$ in which each true label is replaced with a random incorrect label. During unlearning, \textsc{SalUn} treats only the parameters where $\mask_S = 1$ as trainable and freezes the others. Thus, the gradient update is applied only to $\mask_S \odot \btheta$.

\subsubsection{\alg}

For each batch $\Batch_r$, we always perform a loss descent step:
\begin{equation*}
    \mathcal{J}_{\text{MinNorm-OG}}(\btheta \: ; \Batch_r) = \Loss{\btheta \: ; \Batch_r}
\end{equation*}

Following the AdamW update for this loss, we then (depending on the epoch) perform the model update corresponding to solving the relaxed unlearning problem \eqref{eq:proj-sol-general} for $\tilde{R}(\btheta + \bDelta) = \norm{\btheta + \bDelta}{2}^2 + \lambda \norm{\bDelta}{2}^2$, where $\lambda$ is a saved parameter of the algorithm. We use the parameters $\TProj$ and $\TGD$ to determine which epochs to perform the unlearning update. For the $\TGD$ last epochs, we only perform the descent step and skip the unlearning update, similar to Scrub. In the first $T - \TGD$ epochs, we perform the unlearning update every $\TProj$ epochs. 

We initialize $\lambda = \frac{1}{\lamReg} - 1$, and each time we perform the unlearning update, we grow the value of $\lambda$ through the update $\lambda \gets \frac{\lambda + 1}{\gamReg} - 1$ using the decay factor $\gamReg \in [0,1]$. For our algorithm we only use values of $\lamReg$ such that $\lamReg \leq 1$. The update for $\lambda$ leads to solutions to the relaxed unlearning problem which result in smaller, more conservative perturbations.

To interpret these values, first recall that we solve the relaxed unlearning problem over a subsample of each batch $\Batch_r' \subseteq \Batch_r$ where $\abs{\Batch_r'} = n_{\text{pert}}$. For convenience, define the gradient subspace $\mathcal{G}_r' = \vecspan \{ \grad_{\btheta} f(\btheta,\x) \}_{(\x,\y) \in \Batch_r'}$. As we showed in Appendix \ref{app:alg}, for any value of $\lambda$, the optimal perturbation is then $\bDeltaTild = -\frac{1}{1 + \lambda} \proj{\btheta}{\mathcal{G}_r'^\perp}$. Thus, the initial value $\lambda = \frac{1}{\lamReg} - 1$ leads to the perturbation $\bDeltaTild = -\lamReg \proj{\btheta}{\mathcal{G}_r'^\perp}$. Further, the coefficient update $\lambda = \frac{\lambda' + 1}{\gamReg} - 1$ leads to a more conservative unlearning perturbation $\bDeltaTild = - \gamReg \frac{1}{1 + \lambda'} \proj{\btheta}{\mathcal{G}_r'^\perp}$, as it is down-weighted by \gamReg. Thus, $\lamReg$ is the initial strength of the perturbation and $\gamReg$ represents a multiplicative decay of this strength through each update to $\lambda$.

We formally write the unlearning update at epoch $t$ as follows, where $\btheta_0$ is the current parameter vector, $\btheta_{\text{new}}$ is the updated vector, and $\bmod$ denotes the modulo operation.

\begin{equation*}
\begin{aligned}
&\textbf{if }t \bmod \TProj \neq 0 \text{ or } t \geq T - \TGD \\
& \qquad \btheta_{\text{new}} = \btheta_{0} \\
&\textbf{else} \\
& \qquad \bDeltaTild = \argmin_{\bDelta \in \mathcal{G}_r'^\perp} \norm{\btheta_0 + \bDelta}{2}^2 + \lambda \norm{\bDelta}{2}^2 \\
&\qquad \btheta_{\text{new}} = \btheta_{0} + \bDeltaTild \\
&\qquad \lambda \gets \frac{\lambda + 1}{\gamReg} - 1
\end{aligned}
\end{equation*}

\textbf{Gradients for Classification.} We make a special note of how we compute the gradient subspace $\mathcal{G}_r'$ for classification tasks. At the parameter value $\btheta_0$, the model prediction is  $f(\btheta_0,\x) = \argmax \z_{\btheta_0} (\y \mid \x)$ where $\z_{\btheta} (\y \mid \x)$ denotes the model's unnormalized logits over the classes $\y$ for input $\x$ for parameter vector $\btheta$. This is not a continuous function of $\btheta$, so we cannot compute its gradient directly. However, following prior works \citep{farajtabar:2020:ogd}, we use the gradient $\grad_{\btheta} \paren{\z_{\btheta_0} (\y \mid \x)}_j$, where $j = f(\btheta_0,\x)$ is the model's predicted class for input $\x$. In other words, we take the gradient of the the unnormalized logits at the index of the maximum value, where we do not treat the index as a function of $\btheta$.

\subsection{Data Poisoning}
We train a 3-layer multilayer perceptron with a hidden dimension of 300 using the sigmoid linear unit (SiLU) activation function. For each seed, we randomly sample 50 retain set points $(x_r, y_r) \in \Data_r$ with $y_r = \sin(x_r)$ and 5 forget set points $(x_f, y_f) \in \Data_f$ with $y_f = 1.5$, over the input domain $\mathcal{X} = [-5\pi,5\pi] \subseteq \Re$. We initially train the poisoned model on all the samples using the AdamW optimizer with a learning rate of $10^{-3}$ over 100,000 epochs.

Given these poisoned models, we apply each of the unlearning algorithms over a sweep of hyperparameters and evaluate the output $\btheta$ of each unlearning method by measuring the deviation from the retain set trend, given by $\sup_{\x \in \mathcal{X}} \left| f(\btheta, \x) - \sin(\x) \right|$. We fix the number of epochs for each algorithm and allow full data access, so each method has access to all of $\Data_r$ during unlearning. We repeat the entire process over 10 trials. For the number of unlearning epochs $T \in \{10,100,1000\}$, we report the best performance of each algorithm in Table~\ref{tab:sinwave-exp-app} along with the central range of each method's performance over the 10 trials, discarding the two best and worst trials. We report the best hyperparameters for each method in Table~\ref{tab:sinwave-opt-hparams:app}, and the corresponding search spaces in Table~\ref{tab:sinwave-sweep-hparams}. We also include visualizations of the recovered models from each unlearning method in Figures \ref{fig:sin-test-10epochs-app}, \ref{fig:sin-test-100epochs-app}, and \ref{fig:sin-test-1000epochs-app}. All experiments were run on either a single NVIDIA A40 GPU or a single NVIDIA H200 GPU.

\begin{table}[t]
\centering
\caption{Data Poisoning experiment results showing the median sup-norm distance between the retain set trend $y = \sin(x)$ and the unlearned model outputs across 10 trials (smaller is better). The parentheses indicate the central range of values over the 10 trials.}
\label{tab:sinwave-exp-app}
\resizebox{\textwidth}{!}{
\begin{tabular}{c|cccccccc}
\toprule
\textbf{Epochs} & \textbf{Retrain} & \textbf{\alg} & \textbf{GD} & \textbf{GA} & \textbf{NGP} & \textbf{NGD} & \textbf{Ridge} & \textbf{$\ell_1$-Sparse} \\
\midrule
10   & \textbf{1.50} (1.34, 1.85) & \textbf{1.50} (1.45, 2.96) & 3.23 (2.52, 6.65) & 2.56 (2.19, 3.45) & 2.50 (2.20, 3.94) & 2.73 (2.26, 7.23) & 2.27 (1.90, 3.02) & 3.28 (2.52, 3.62) \\
100  & 1.36 (1.27, 1.44) & \textbf{1.08} (1.00, 1.63) & 2.76 (2.48, 3.54) & 23.8 (18.1, 30.0) & 2.62 (2.31, 7.87) & 2.85 (2.47, 3.57) & 2.13 (1.75, 3.12) & 1.79 (1.54, 2.48) \\
1000 & 1.17 (1.08, 1.26) & \textbf{0.63} (0.42, 0.96) & 2.61 (2.39, 3.38) & 1400 (1025, 1869) & 2.75 (2.07, 9.80) & 2.45 (1.96, 3.67) & 1.96 (1.61, 3.84) & 1.37 (1.19, 1.55) \\
\bottomrule
\end{tabular}
}
\end{table}

\begin{table}[h!]
\centering
\caption{Hyperparameter values tested in the experiments corresponding to Table~\ref{tab:sinwave-exp-app}.}
\label{tab:sinwave-sweep-hparams}
\resizebox{\textwidth}{!}{
\begin{tabular}{c|cccccccccc}
\toprule
\textbf{Epochs} & $\eta$ & $\sigma$ & $\lamReg$ & $\gamReg$ & $\lamGA$ & $\TGD$ & $\TProj$ & $n_{\text{pert}}$ \\
\midrule
10 &
\{1e-2, 1e-3, 1e-4\} &
\{0.1, 0.5, 1.0\} &
\{0.1, 1.0, 3.0\} &
\{0.3, 0.6, 0.9\} &
\{0.001, 0.01, 0.1, 1.0\} &
\{1, 2, 5\} &
\{1, 2, 5\} &
\{50\} \\

\midrule
100 &
\{1e-3, 5e-4, 1e-4\} &
\{0.1, 0.5, 1.0\} &
\{0.1, 1.0, 3.0\} &
\{0.3, 0.6, 0.9\} &
\{0.001, 0.01, 0.1, 1.0\} &
\{25, 50\} &
\{1, 2\} &
\{50\} \\

\midrule
1000 &
\{1e-3, 5e-4, 1e-4\} &
\{0.1, 0.5, 1.0\} &
\{0.1, 1.0, 3.0\} &
\{0.3, 0.6, 0.9\} &
\{0.001, 0.01, 0.1, 1.0\} &
\{100, 200, 500\} &
\{10, 50, 100\} &
\{50\} \\
\bottomrule
\end{tabular}
}
\end{table}

\clearpage

\begin{table}[t!]
\centering
\caption{Hyperparameter settings for each entry in Table \ref{tab:sinwave-exp-app}. Blank entries indicate that the hyperparameter is not applicable to the corresponding method.}
\label{tab:sinwave-opt-hparams:app}
\small
\begin{tabular}{c|c|ccccccccc}
\toprule
\textbf{Epochs} & \textbf{Method} & $\eta$ & $\lamGA$ & $\lamReg$ & $\sigma$ & $\TGD$ & $\gamReg$ & $\TProj$ & $n_{\text{pert}}$ \\
\midrule
\multirow{8}{*}{10}
& Retrain          & 1e-4   &      &       &       &      &       &       &       \\
& \alg             & 1e-3   &      & .1    &       & 2    & .9    & 1     & 50     \\
& GD               & 1e-4   &      &       &       &      &       &       &        \\
& GA               & 1e-4   &      &       &       &      &       &       &       \\
& NGP              & 1e-4   & 1.0  &       &       &      &       &       &        \\
& NGD              & 1e-2   &      &       & .5    &      &       &       &        \\
& Ridge            & 1e-2   &      & 3.0   &       &      & .6    &       &        \\
& $\ell_1$-Sparse  & 1e-2   &      & .1    &       &      &       &       &        \\

\midrule
\multirow{8}{*}{100}
& Retrain          & 1e-4   &      &       &       &      &       &       &       \\
& \alg             & 1e-3   &      & .1    &       & 50    & .9    & 2     & 50     \\
& GD               & 1e-3   &      &       &       &      &       &       &        \\
& GA               & 1e-4   &      &       &       &      &       &       &       \\
& NGP              & 5e-4   & .01  &       &       &      &       &       &        \\
& NGD              & 1e-3   &      &       & .1    &      &       &       &        \\
& Ridge            & 1e-3   &      & 3.0   &       &      & .9    &       &        \\
& $\ell_1$-Sparse  & 1e-3   &      & .1    &       &      &       &       &        \\

\midrule
\multirow{8}{*}{1000}
& Retrain          & 1e-4   &      &       &       &      &       &       &       \\
& \alg             & 1e-3   &      & .1    &       & 500  & .9    & 10    & 50     \\
& GD               & 1e-3   &      &       &       &      &       &       &        \\
& GA               & 1e-4   &      &       &       &      &       &       &       \\
& NGP              & 1e-3   & .001 &       &       &      &       &       &        \\
& NGD              & 1e-3   &      &       & 1.0   &      &       &       &        \\
& Ridge            & 1e-3   &      & 3.0   &       &      & .9    &       &        \\
& $\ell_1$-Sparse  & 1e-3   &      & .1    &       &      &       &       &        \\
\bottomrule
\end{tabular}
\end{table}

\begin{figure}[h!]
    \centering
    \begin{subfigure}{0.24\textwidth}
        \includegraphics[height=2.3cm,width=\linewidth]{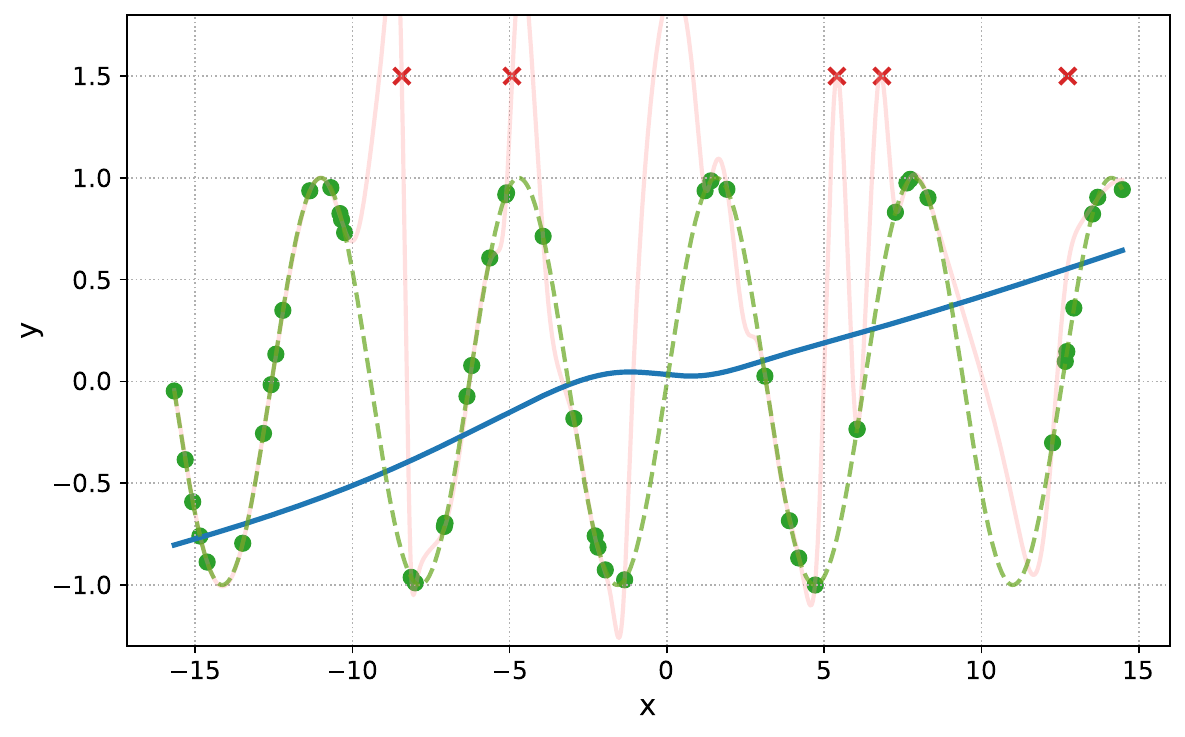}
        \caption{Retrain}
    \end{subfigure}
    \begin{subfigure}{0.24\textwidth}
        \includegraphics[height=2.3cm, width=\linewidth]{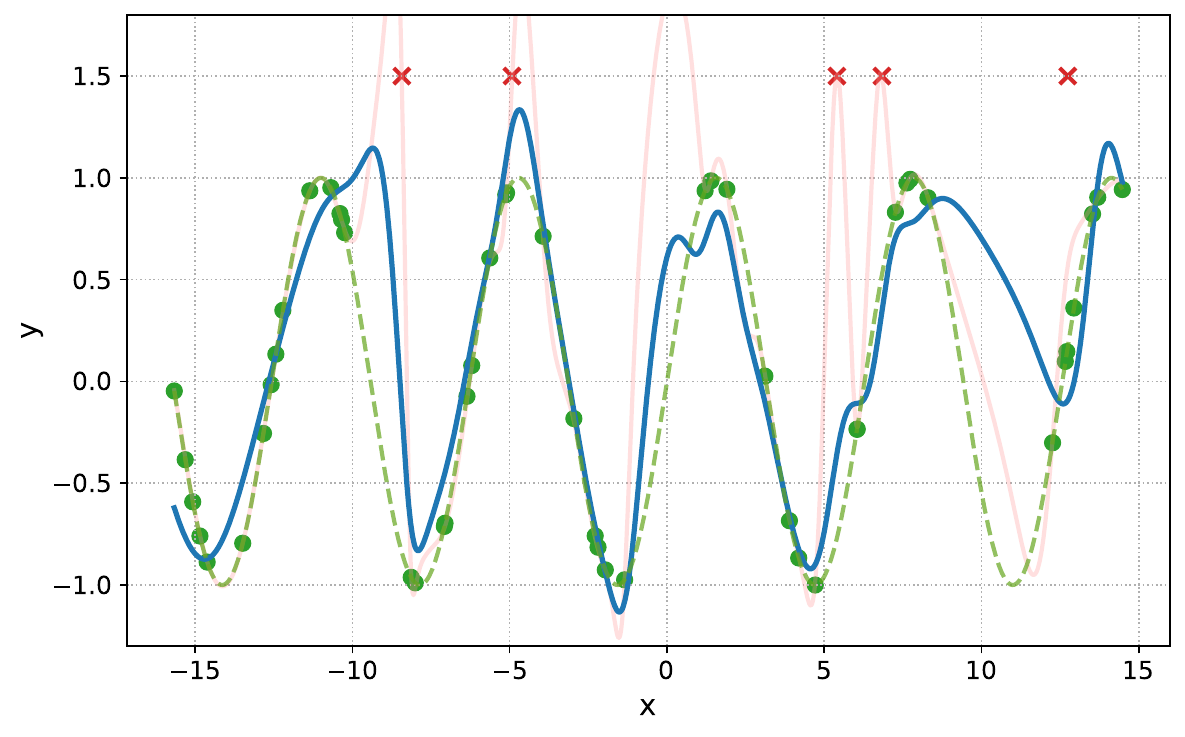}
        \caption{\alg (ours)}
    \end{subfigure}
    \hspace{-.5em}
    \begin{subfigure}{0.24\textwidth}
        \includegraphics[height=2.3cm,width=\linewidth]{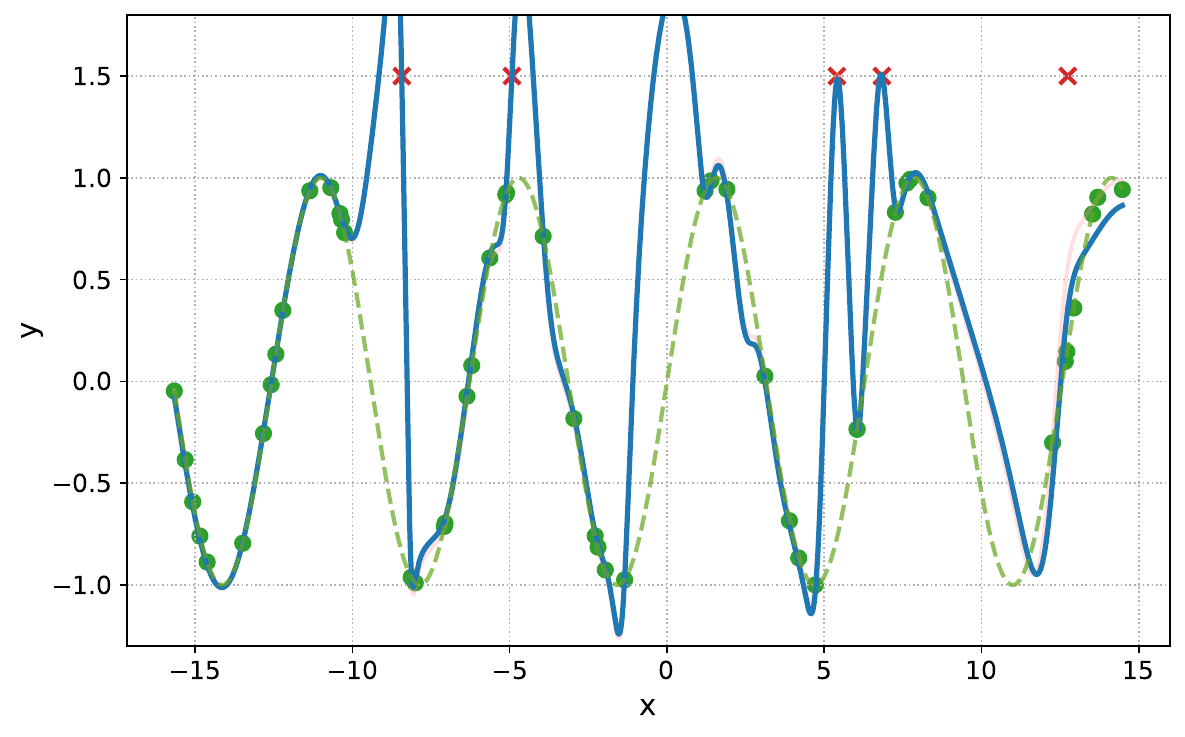}
        \caption{GD}
    \end{subfigure}
    \hspace{-.5em}
    \begin{subfigure}{0.24\textwidth}
        \includegraphics[height=2.3cm,width=\linewidth]{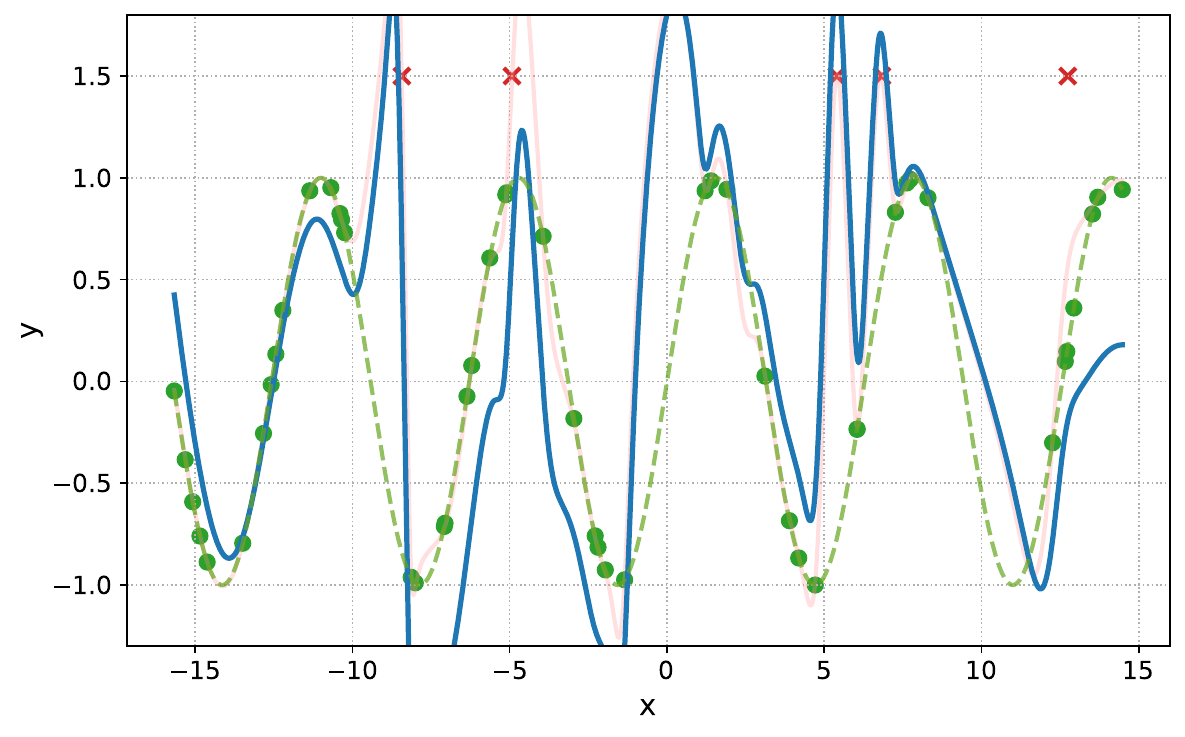}
        \caption{GA}
    \end{subfigure}

    \vspace{.5em}

    \begin{subfigure}{0.24\textwidth}
        \includegraphics[height=2.3cm,width=\linewidth]{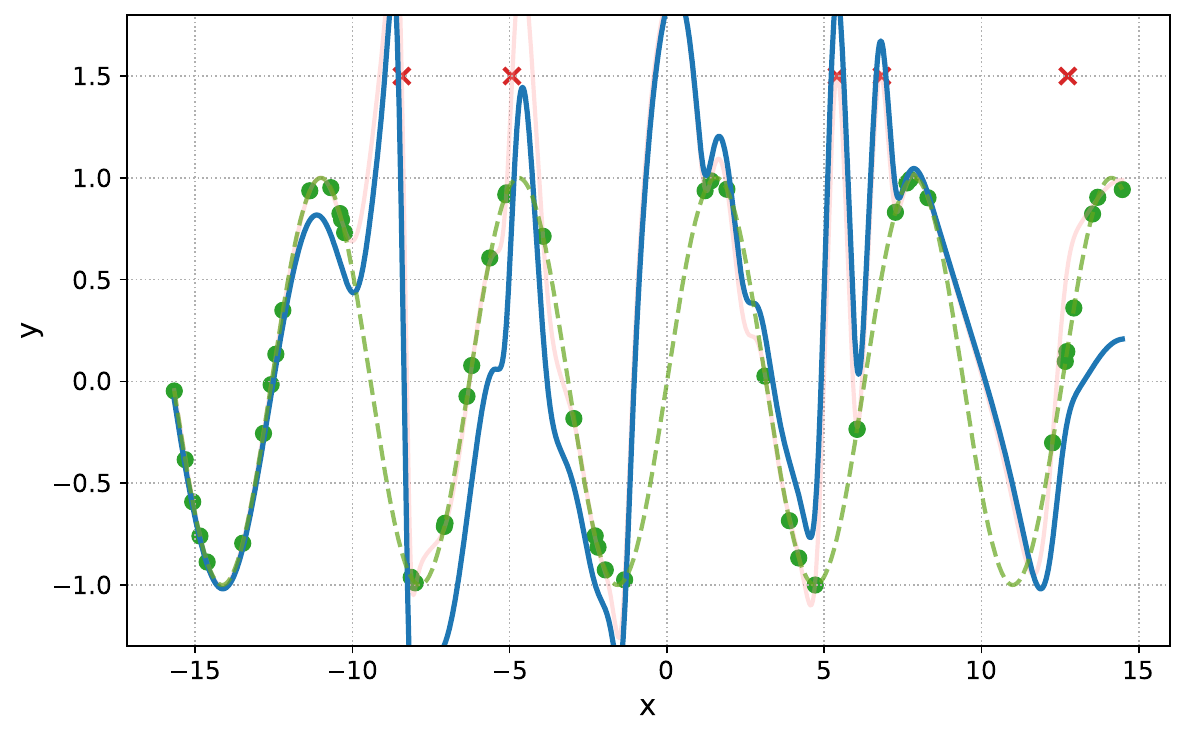}
        \caption{NGP}
    \end{subfigure}
    \begin{subfigure}{0.24\textwidth}
        \includegraphics[height=2.3cm,width=\linewidth]{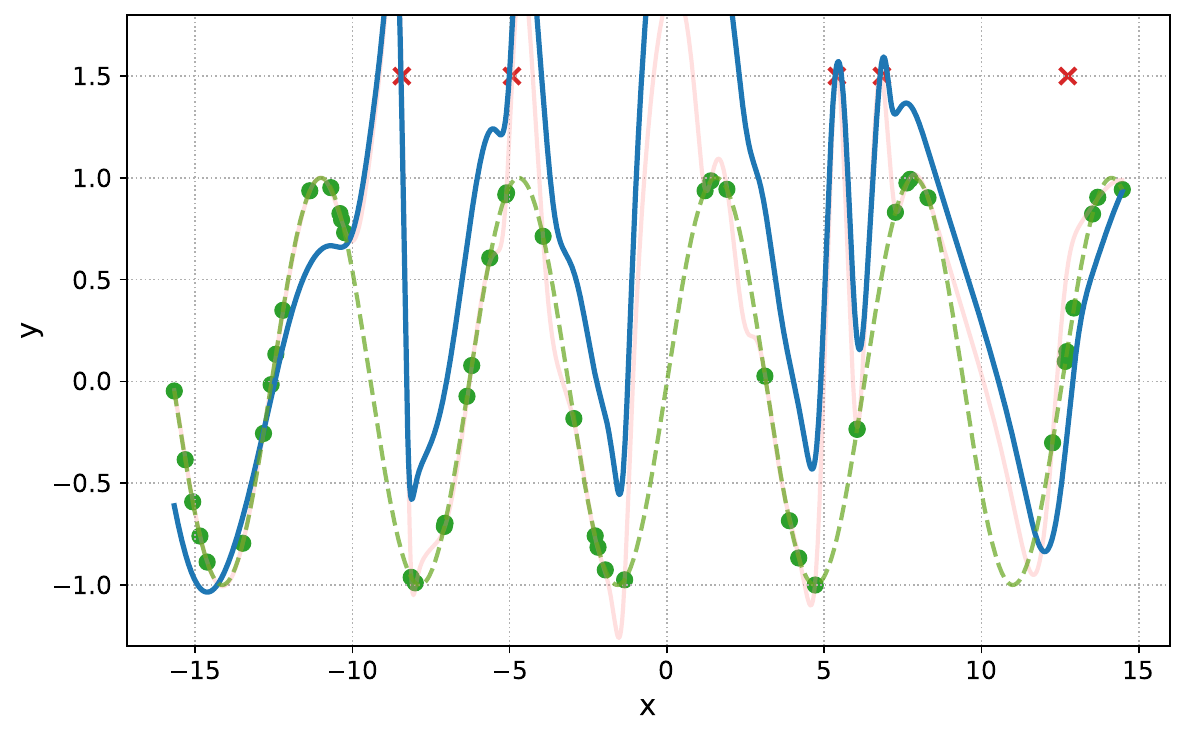}
        \caption{NGD}
    \end{subfigure}
    \begin{subfigure}{0.24\textwidth}
        \includegraphics[height=2.3cm,width=\linewidth]{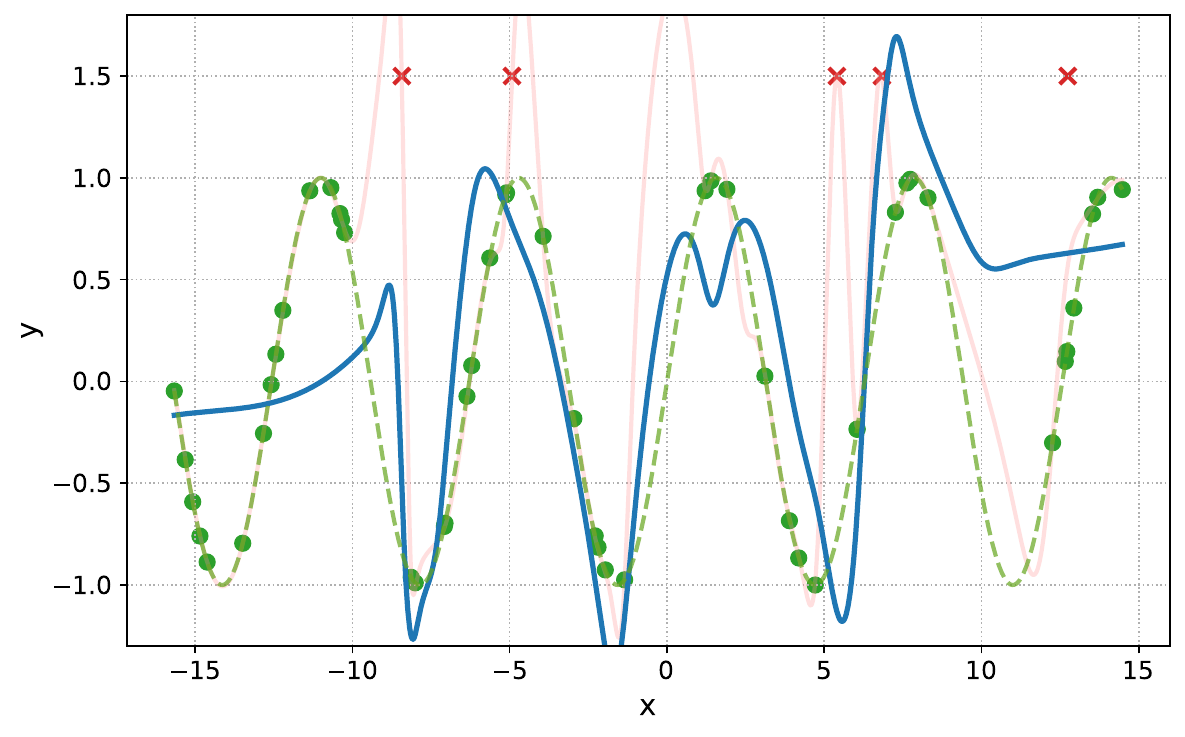}
        \caption{Ridge}
    \end{subfigure}
    \begin{subfigure}{0.24\textwidth}
        \includegraphics[height=2.3cm,width=\linewidth]{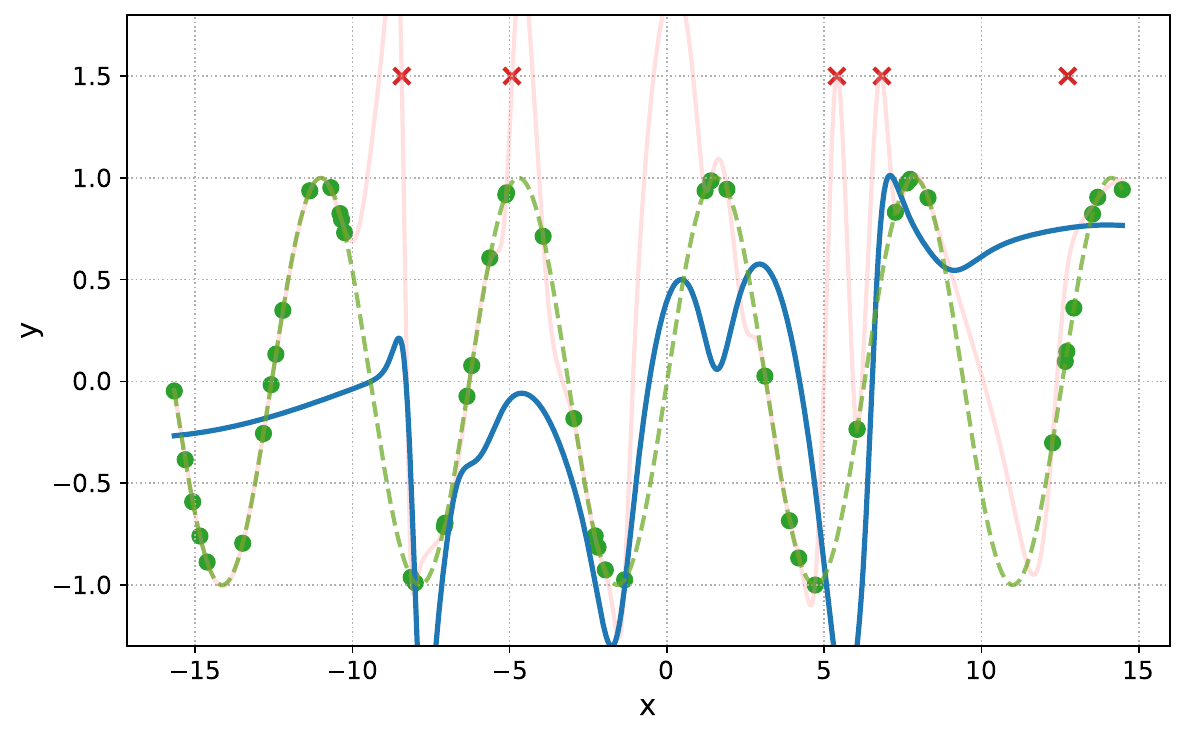}
        \caption{$\ell_1$-Sparse}
    \end{subfigure}
    \includegraphics[width=0.8\textwidth]{figures/sin/legend.pdf}
    \caption{Example unlearned model fits when given 10 unlearning epochs for the Data Poisoning experiment, where the forget points distort the retain set trend $y = \sin(x)$.}
    \label{fig:sin-test-10epochs-app}
\end{figure}

\clearpage

\begin{figure}[t]
    \centering
    \begin{subfigure}{0.24\textwidth}
        \includegraphics[height=2.3cm,width=\linewidth]{figures/sin/100epochs/Retrain.pdf}
        \caption{Retrain}
    \end{subfigure}
    \begin{subfigure}{0.24\textwidth}
        \includegraphics[height=2.3cm, width=\linewidth]{figures/sin/100epochs/MinNormOG.pdf}
        \caption{\alg (ours)}
    \end{subfigure}
    \hspace{-.5em}
    \begin{subfigure}{0.24\textwidth}
        \includegraphics[height=2.3cm,width=\linewidth]{figures/sin/100epochs/GD.pdf}
        \caption{GD}
    \end{subfigure}
    \hspace{-.5em}
    \begin{subfigure}{0.24\textwidth}
        \includegraphics[height=2.3cm,width=\linewidth]{figures/sin/100epochs/GA.pdf}
        \caption{GA}
    \end{subfigure}

    \vspace{.5em}

    \begin{subfigure}{0.24\textwidth}
        \includegraphics[height=2.3cm,width=\linewidth]{figures/sin/100epochs/NGP.pdf}
        \caption{NGP}
    \end{subfigure}
    \begin{subfigure}{0.24\textwidth}
        \includegraphics[height=2.3cm,width=\linewidth]{figures/sin/100epochs/NGD.pdf}
        \caption{NGD}
    \end{subfigure}
    \begin{subfigure}{0.24\textwidth}
        \includegraphics[height=2.3cm,width=\linewidth]{figures/sin/100epochs/Ridge.pdf}
        \caption{Ridge}
    \end{subfigure}
    \begin{subfigure}{0.24\textwidth}
        \includegraphics[height=2.3cm,width=\linewidth]{figures/sin/100epochs/L1Sparse.pdf}
        \caption{$\ell_1$-Sparse}
    \end{subfigure}
    \includegraphics[width=0.8\textwidth]{figures/sin/legend.pdf}
    \caption{Example unlearned model fits when given 100 unlearning epochs for the Data Poisoning experiment, where the forget points distort the retain set trend $y = \sin(x)$.}
    \label{fig:sin-test-100epochs-app}
\end{figure}

\begin{figure}[t]
    \centering
    \begin{subfigure}{0.24\textwidth}
        \includegraphics[height=2.3cm,width=\linewidth]{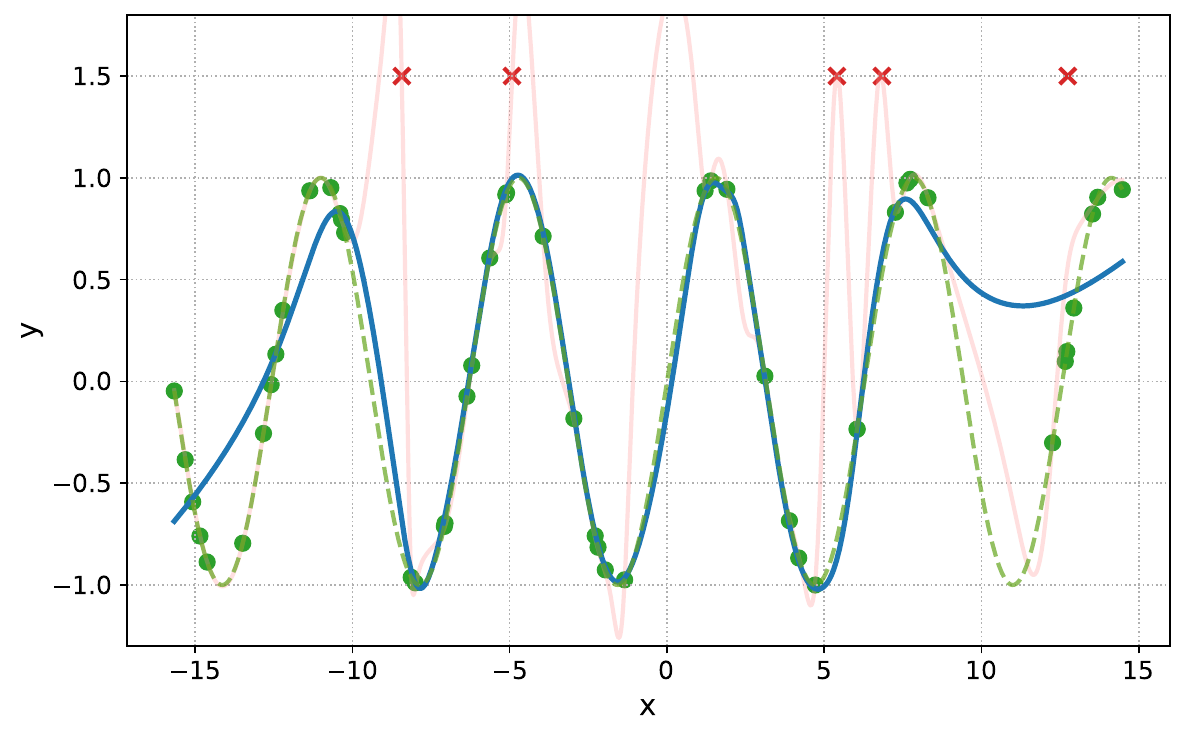}
        \caption{Retrain}
    \end{subfigure}
    \begin{subfigure}{0.24\textwidth}
        \includegraphics[height=2.3cm, width=\linewidth]{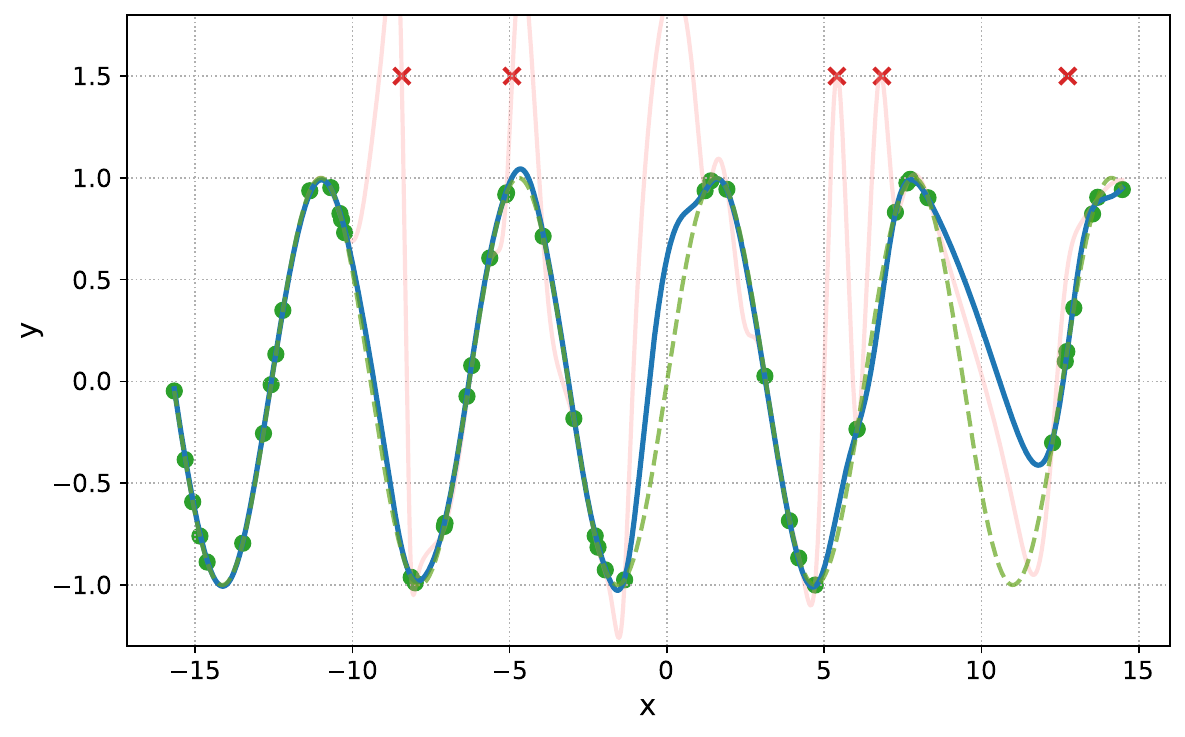}
        \caption{\alg (ours)}
    \end{subfigure}
    \hspace{-.5em}
    \begin{subfigure}{0.24\textwidth}
        \includegraphics[height=2.3cm,width=\linewidth]{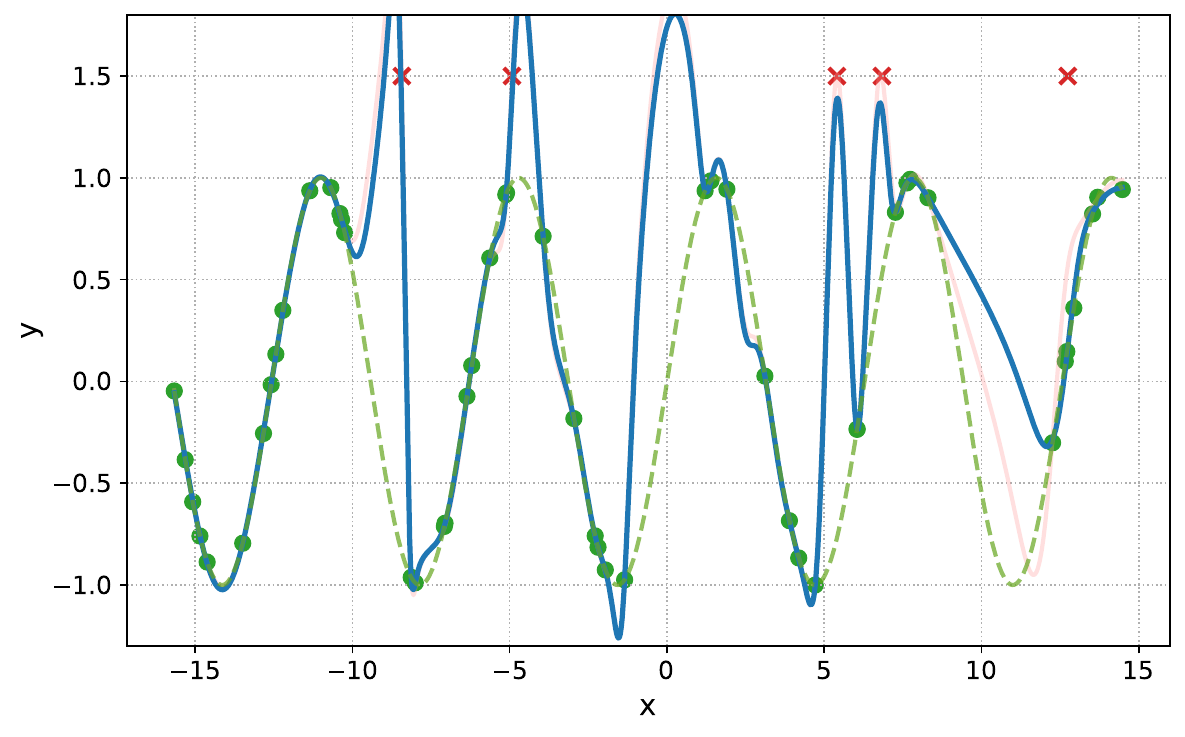}
        \caption{GD}
    \end{subfigure}
    \hspace{-.5em}
    \begin{subfigure}{0.24\textwidth}
        \includegraphics[height=2.3cm,width=\linewidth]{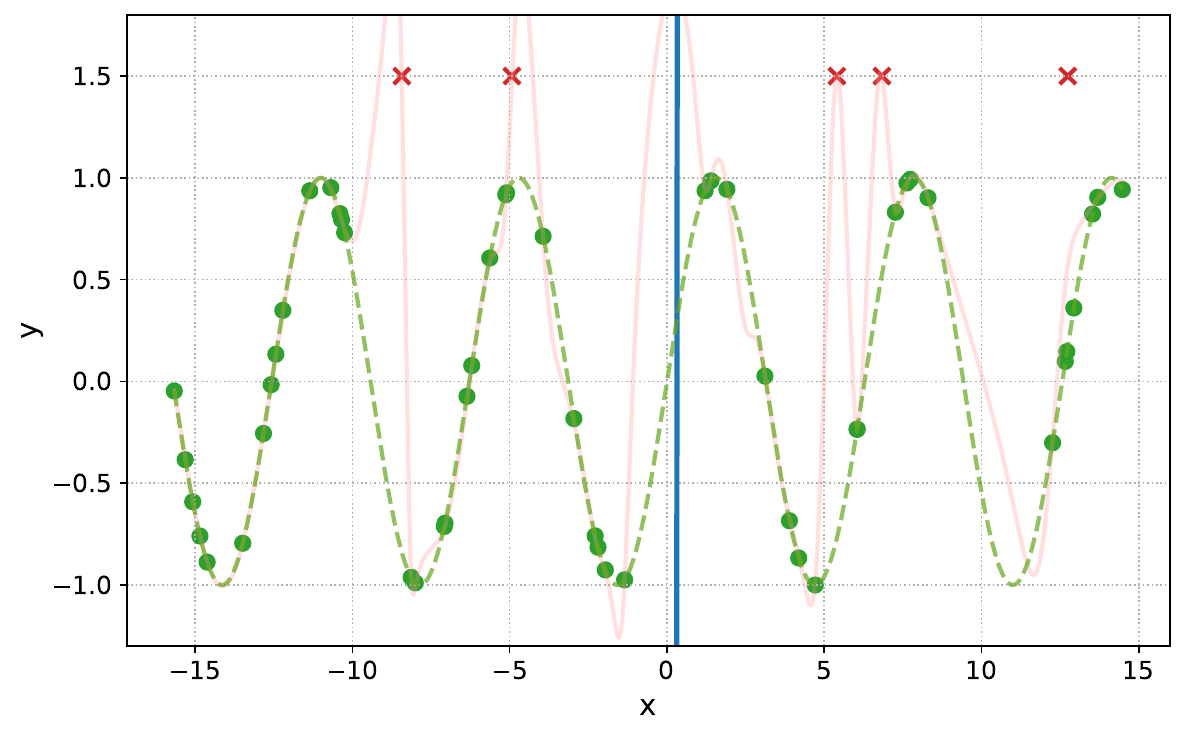}
        \caption{GA}
    \end{subfigure}

    \vspace{.5em}

    \begin{subfigure}{0.24\textwidth}
        \includegraphics[height=2.3cm,width=\linewidth]{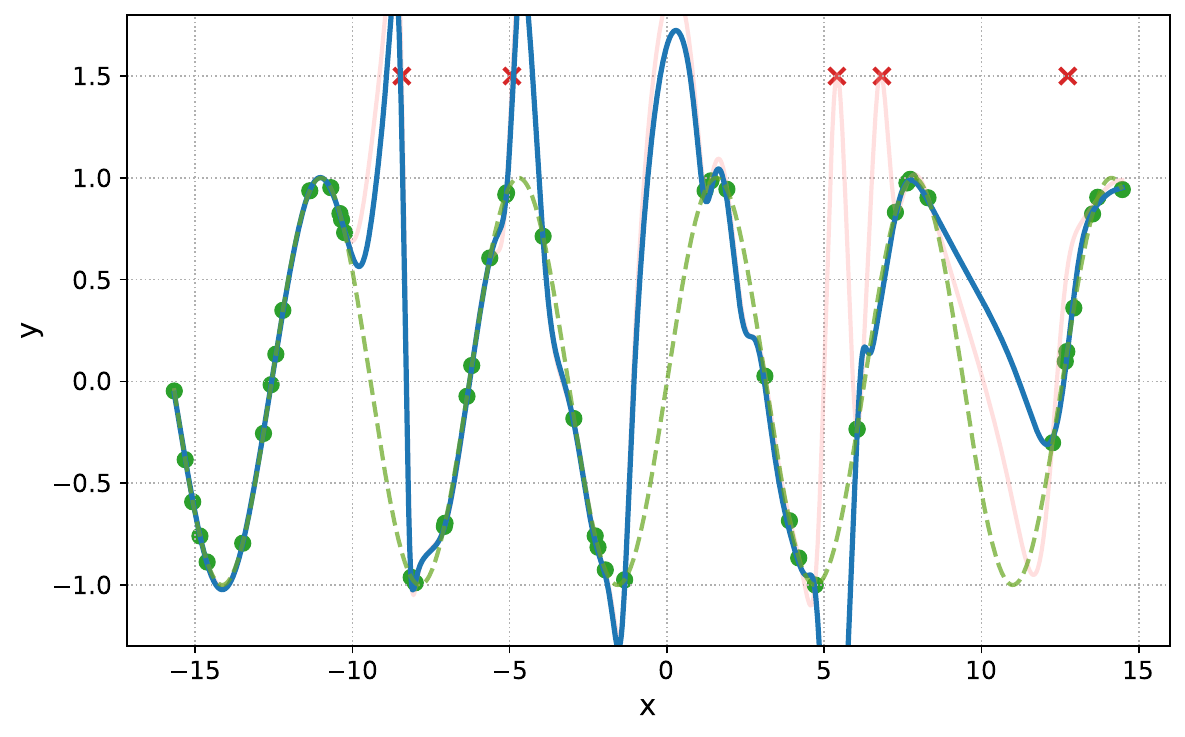}
        \caption{NGP}
    \end{subfigure}
    \begin{subfigure}{0.24\textwidth}
        \includegraphics[height=2.3cm,width=\linewidth]{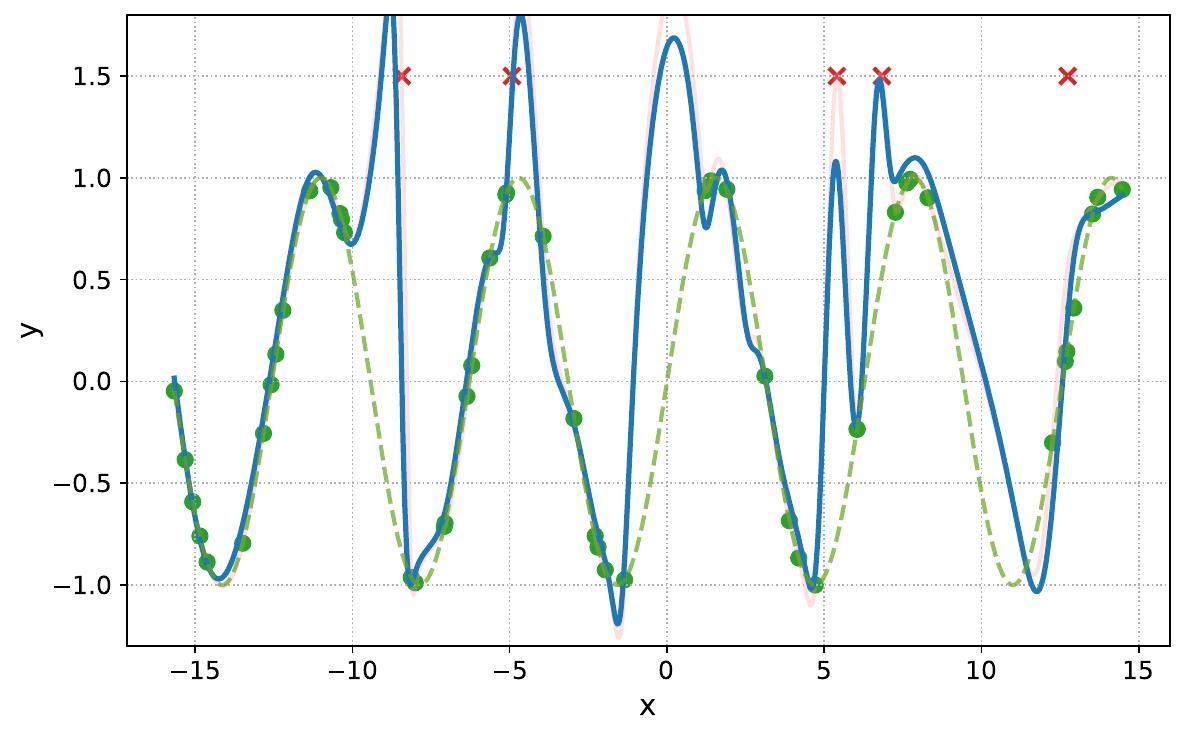}
        \caption{NGD}
    \end{subfigure}
    \begin{subfigure}{0.24\textwidth}
        \includegraphics[height=2.3cm,width=\linewidth]{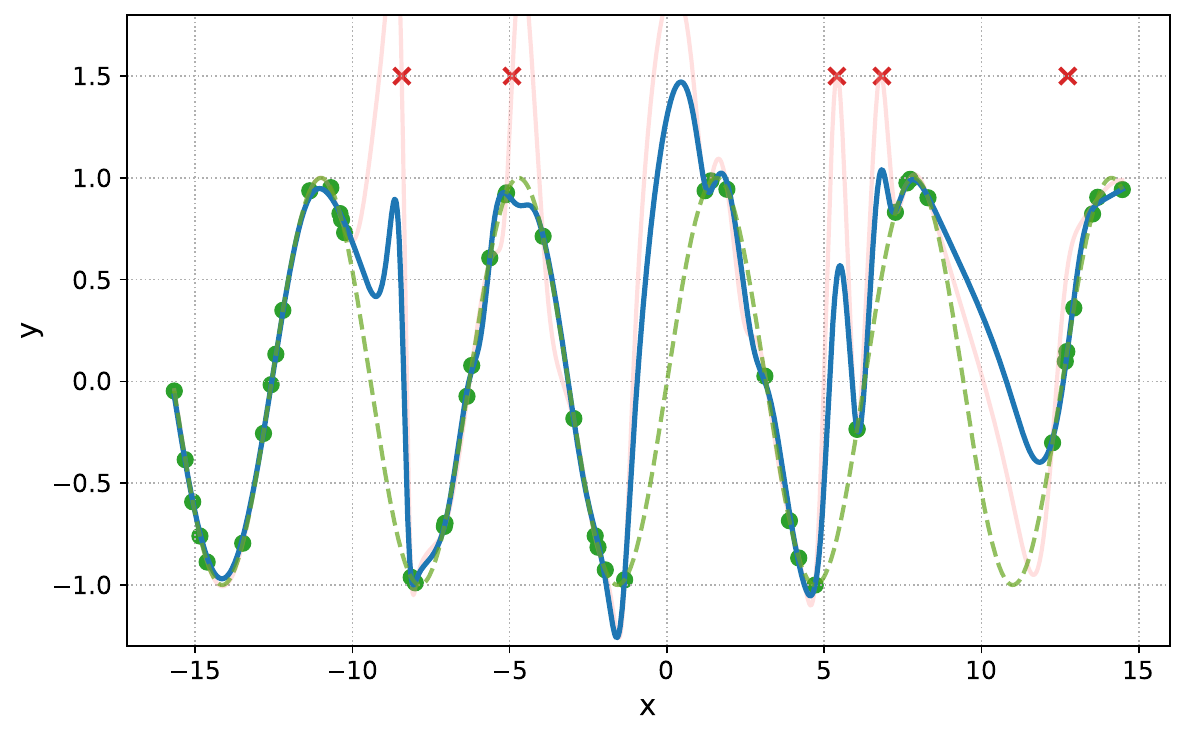}
        \caption{Ridge}
    \end{subfigure}
    \begin{subfigure}{0.24\textwidth}
        \includegraphics[height=2.3cm,width=\linewidth]{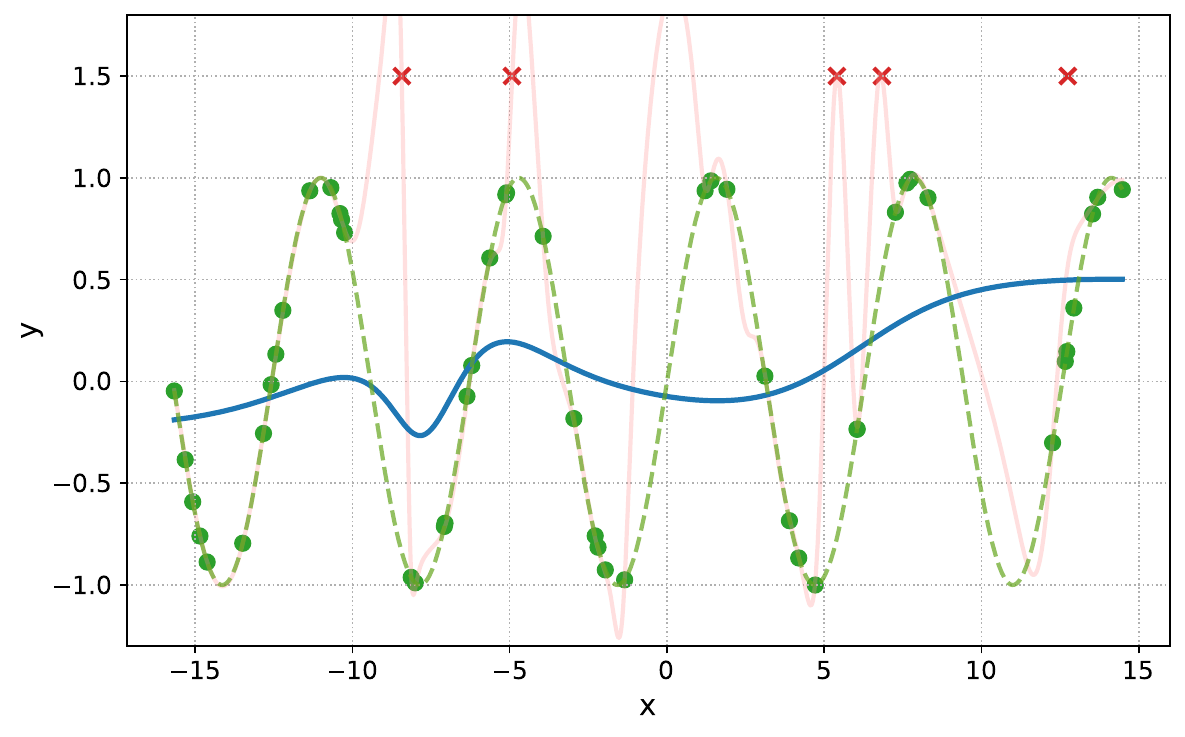}
        \caption{$\ell_1$-Sparse}
    \end{subfigure}
    \includegraphics[width=0.8\textwidth]{figures/sin/legend.pdf}
    \caption{Example unlearned model fits when given 1000 unlearning epochs for the Data Poisoning experiment, where the forget points distort the retain set trend $y = \sin(x)$.}
    \label{fig:sin-test-1000epochs-app}
\end{figure}

\clearpage

\subsection{Multi-Class Label Erasure}

We use the CIFAR-10 \citep{krizhevsky:2009:cifar} and Tiny ImageNet \citep{tinyimagenet} datasets, creating red, green, and gray copies of each image in the training sets. The retain set consists of all gray images, while the forget set is formed by randomly sampling a proportion $p_{\text{color}} \in [0,1]$ of the red and green copies. We then train modified ResNet-18 and ResNet-50 models \citep{he:2016:resnet} on CIFAR-10 and Tiny ImageNet, respectively, to jointly predict image class and color. Each model includes two separate prediction heads, one for image class and one for color. 

For CIFAR-10, we train for 100 epochs using the SGD optimizer with an initial learning rate of $3 \times 10^{-2}$, weight decay of $5 \times 10^{-4}$, momentum of $0.9$, and batch size of 256. The learning rate is reduced to $3 \times 10^{-3}$ at epoch 50. The ground-truth unlearned model is trained on the gray images alone using the same parameters. 

For Tiny ImageNet, we initialize the ResNet-50 architecture with ImageNet-pretrained weights \citep{imagenet} from the \texttt{torchvision} library. We apply standard data augmentations to reduce overfitting and train for 100 epochs using a batch size of 512, initial learning rate of $0.1$, weight decay of $10^{-4}$, and momentum of $0.9$. The learning rate is decayed by a factor of 0.3 every 25 epochs. During the first 10 epochs, we update the model only using the class prediction loss to adapt the pretrained weights to the colored-image domain. For the remaining epochs, we optimize both the class and color prediction losses. The ground-truth unlearned model is trained on the gray images alone using the same parameters.

We then apply each of the unlearning algorithms over different constraints on the number of unlearning epochs and the amount of available retain data. We define $\pRet \in [0,1]$ as the proportion of $\Data_r$ available during unlearning. For each of the 5 trials, we train a new initial model and sample $\pRet$ proportion of $\Data_r$ to serve as the available retain data. During each unlearning epoch, the algorithms iterate over batches from the forget set. For every forget set batch, a corresponding batch of the same size is sampled from the available retained data. The epoch ends once all forget set batches have been processed, regardless of whether there are unused retain set samples remaining. Any unused retain batches are not discarded—they will be sampled in subsequent epochs. Once all available retain set batches have been used at least once, the sampling process begins again from the start of the available retain set samples.

The ground truth unlearned model is only trained on gray samples, so it achieves strong accuracy on gray-colored inputs and always predicts the input image to be gray, no matter the input image color. We thus measure retain quality as accuracy on gray-colored test samples, and forget quality error by the mean squared error between the predicted gray probability and the ideal value of 1 across all colored inputs. For each method, we sweep hyperparameters and plot the corresponding Pareto frontier across the two metrics, where the optimal point at $(1,0)$ which indicates perfect retain quality and zero forget quality error. Each point in the frontier for a given method represents the mean results over 5 trials of a single hyperparameter combination, with error bars representing the standard error for each metric. We label the performance of the ground truth unlearned model as GT. All training and parameter searches were performed on a cluster of NVIDIA GH200 GPUs. 

\begin{table}[h!]
\centering
\caption{Hyperparameter values tested for the results in Figure~\ref{fig:multi-erase-cifar-pcolor-01-epochs2-ret01-app} running the Multi-Label Class Erasure experiment on CIFAR-10 with $p_{\text{color}}=.01$, $\pRet = .01$ and $T = 2$.}
\label{tab:multi-erase-cifar-pcolor-01-epochs2-ret01-app}
\small
\begin{tabular}{ll}
\toprule
\textbf{Hyperparameter} & \textbf{Sweep Values} \\
\midrule
$\eta$  & $\{10^{-3}, 10^{-4}, 10^{-5}, 10^{-6} \}$ \\
$\lamGA$  &  $\{10^{-3}, 10^{-2}, 10^{-1}\}$ \\
$\lamReg$  & $\{10^{-3}, 10^{-2}, 5 \times 10^{-2}, 10^{-1}\}$ \\
$\sigma$  & $\{0.1, 1.0, 10.0\}$ \\
$\TGD$ & $\{0\}$ \\
$\gamReg$  & $\{0.2, 0.4, 0.8 \}$ \\
$\TProj$ & $\{1\}$ \\
$n_{\text{pert}}$ & $\{50\}$ \\
\bottomrule
\end{tabular}
\end{table}

\begin{figure}[h!]
    \centering
    \includegraphics[width=9cm, height=6.5cm]{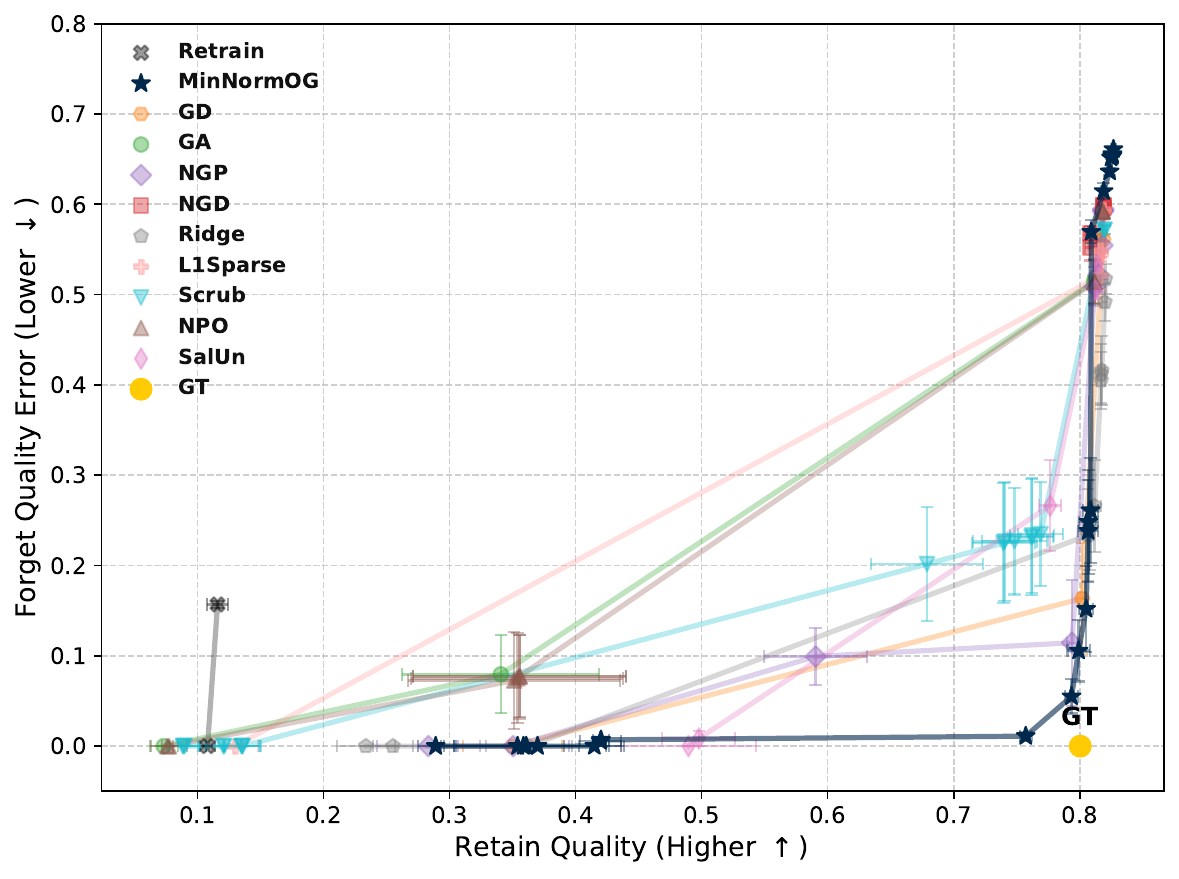}
    \caption{Pareto frontiers for each method across hyperparameter settings in the Multi-Class Label Erasure task on CIFAR-10 with $p_{\text{color}}=.01$, $\pRet = .01$ and $T = 2$. This is an enlarged version of the left subfigure in Figure \ref{fig:multi-erase-experiment} with added error bars.}
    \label{fig:multi-erase-cifar-pcolor-01-epochs2-ret01-app}
\end{figure}

\begin{table}[h]
\centering
\caption{Hyperparameter values tested for the results in Figure~\ref{fig:multi-erase-tiny-pcolor-01-epochs5-ret01-app} running the Multi-Label Class Erasure experiment on Tiny ImageNet with $p_{\text{color}}=.01$, $\pRet = .01$ and $T = 5$.}
\label{tab:multi-erase-tiny-pcolor-01-epochs5-ret01-app}
\small
\begin{tabular}{ll}
\toprule
\textbf{Hyperparameter} & \textbf{Sweep Values} \\
\midrule
$\eta$  & $\{5 \times 10^{-4}, 10^{-4}, 10^{-5}, 10^{-6} \}$ \\
$\lamGA$  & $\{10^{-3}, 10^{-2}, 10^{-1} \}$ \\
$\lamReg$  & $\{10^{-3}, 10^{-2}, 5\times 10^{-2}, 10^{-1}\}$ \\
$\sigma$  & $\{0.1, 1.0, 10.0\}$ \\
$\TGD$ & $\{2,3\}$ \\
$\gamReg$  & $\{0.2, 0.4, 0.8, 0.9 \}$ \\
$\TProj$ & $\{1\}$ \\
$n_{\text{pert}}$ & $\{50\}$ \\
\bottomrule
\end{tabular}
\end{table}

\begin{figure}[h]
    \centering
    \includegraphics[width=9cm, height=6.5cm]{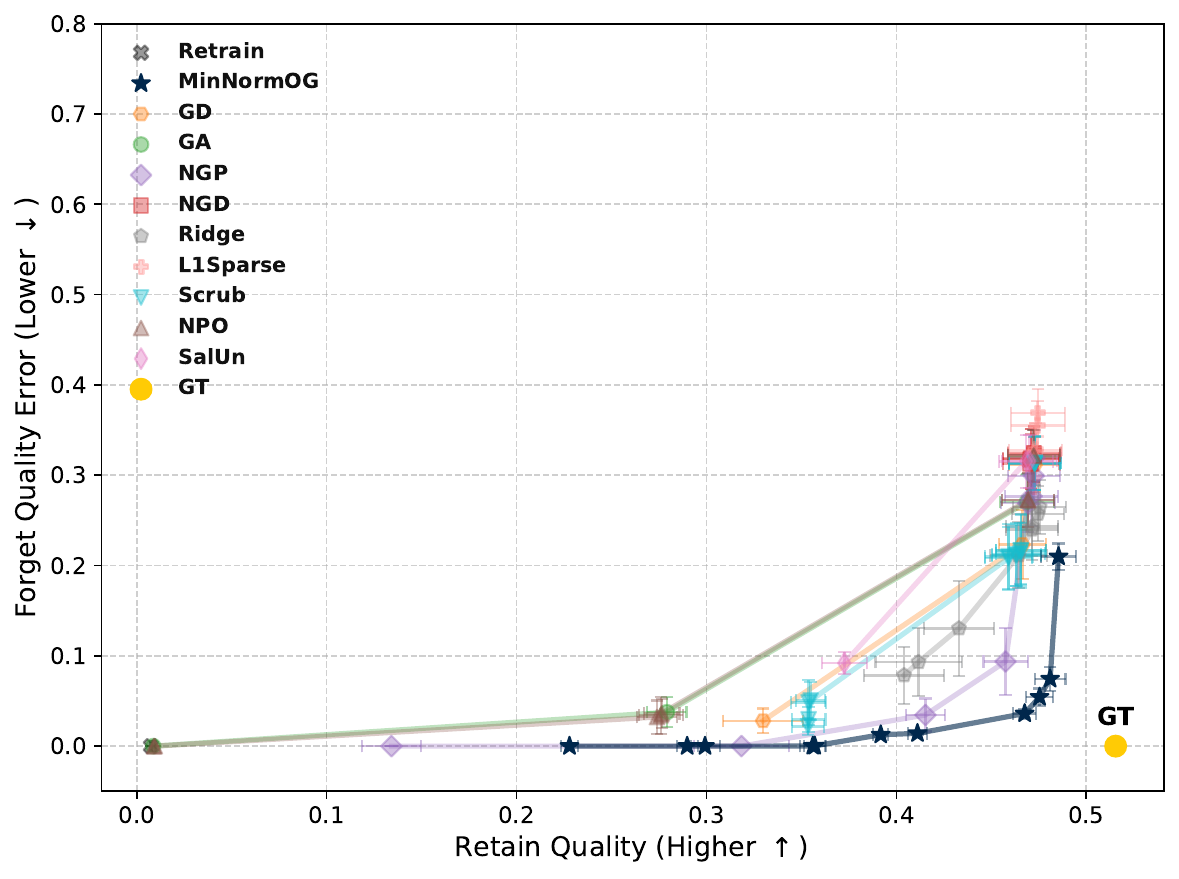}
    \caption{Pareto frontiers for each method across hyperparameter settings in the Multi-Class Label Erasure task on Tiny ImageNet with $p_{\text{color}}=.01$, $\pRet = .01$ and $T = 5$. This is an enlarged version of the right subfigure in Figure \ref{fig:multi-erase-experiment} with added error bars.}
    \label{fig:multi-erase-tiny-pcolor-01-epochs5-ret01-app}
\end{figure}

\begin{table}[h!]
\centering
\caption{Hyperparameter values tested for the results in Figure~\ref{fig:multi-erase-cifar-pcolor-01-epochs1-ret01-app} running the Multi-Label Class Erasure experiment on CIFAR-10 with $p_{\text{color}}=.01$, $\pRet = .01$ and $T = 1$.}
\label{tab:multi-erase-cifar-pcolor-01-epochs1-ret01-app}
\small
\begin{tabular}{ll}
\toprule
\textbf{Hyperparameter} & \textbf{Sweep Values} \\
\midrule
$\eta$  & $\{10^{-3}, 10^{-4}, 10^{-5}, 10^{-6} \}$ \\
$\lamGA$  &  $\{10^{-3}, 10^{-2}, 10^{-1}\}$ \\
$\lamReg$  & $\{10^{-3}, 10^{-2}, 5 \times 10^{-2}, 10^{-1}\}$ \\
$\sigma$  & $\{0.1, 1.0, 10.0\}$ \\
$\TGD$ & $\{0\}$ \\
$\gamReg$  & $\{0.2, 0.4, 0.8 \}$ \\
$\TProj$ & $\{1\}$ \\
$n_{\text{pert}}$ & $\{50\}$ \\
\bottomrule
\end{tabular}
\end{table}

\begin{figure}[h!]
    \centering
    \begin{minipage}{0.49\linewidth}
        \centering
        \includegraphics[width=\linewidth, height=5.5cm]{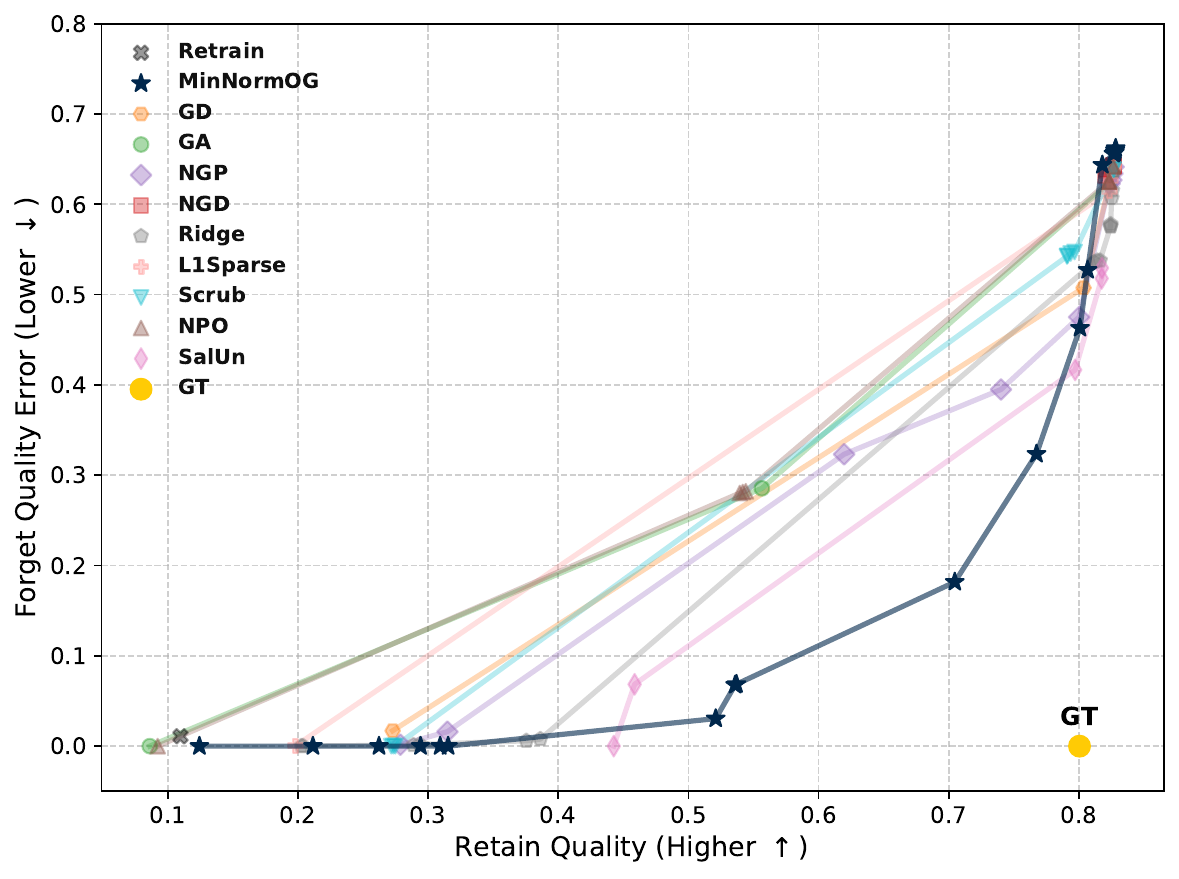}
    \end{minipage}
    \hfill
    \begin{minipage}{0.49\linewidth}
        \centering
        \includegraphics[width=\linewidth, height=5.5cm]{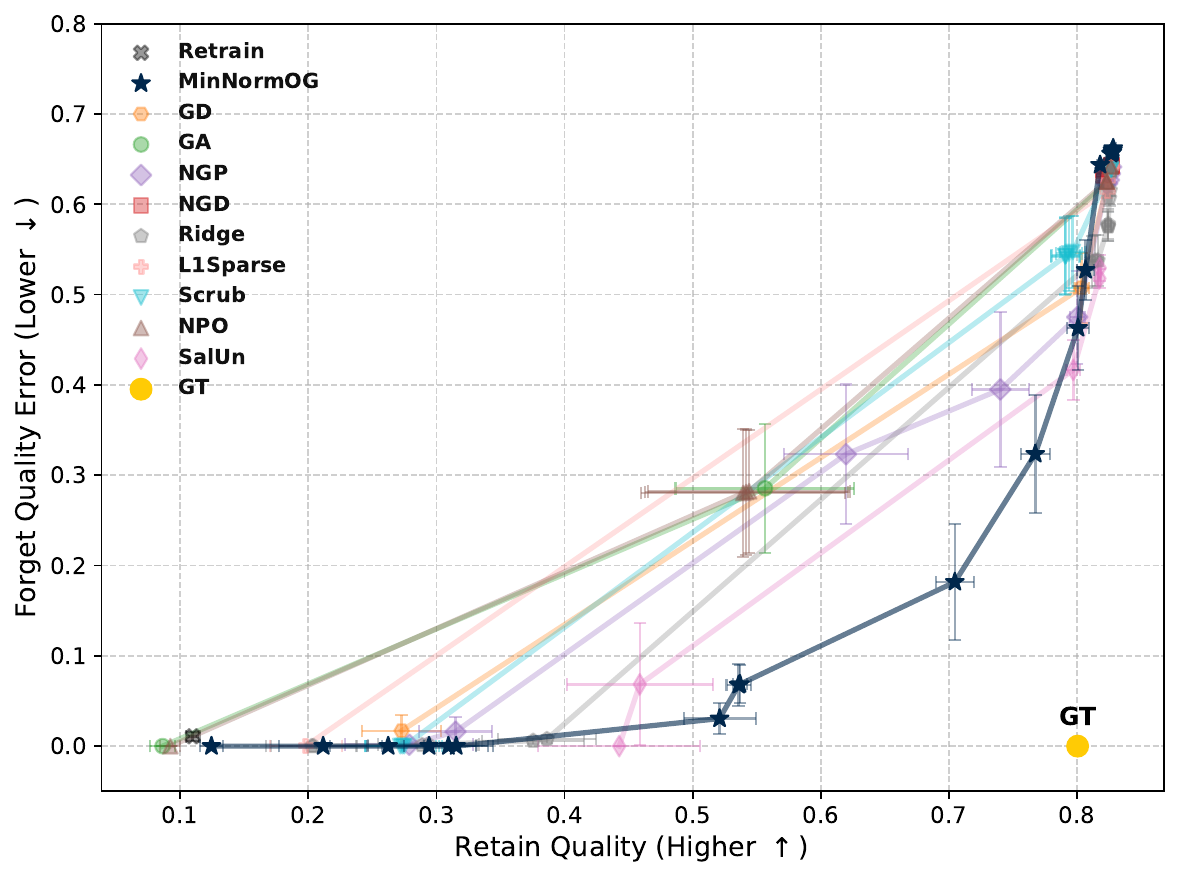}
    \end{minipage}
    \caption{Pareto frontiers for each method across hyperparameter settings in the Multi-Class Label Erasure task on CIFAR-10 with $p_{\text{color}}=.01$, $\pRet = .01$ and $T = 1$. The left panel omits error bars for visual clarity.}
    \label{fig:multi-erase-cifar-pcolor-01-epochs1-ret01-app}
\end{figure}

\clearpage

\begin{table}[h!]
\centering
\caption{Hyperparameter values tested for the results in Figure~\ref{fig:multi-erase-cifar-pcolor-01-epochs2-ret001-app} running the Multi-Label Class Erasure experiment on CIFAR-10 with $p_{\text{color}}=.01$, $\pRet = .001$ and $T = 2$.}
\label{tab:multi-erase-cifar-pcolor-01-epochs2-ret001-app}
\small
\begin{tabular}{ll}
\toprule
\textbf{Hyperparameter} & \textbf{Sweep Values} \\
\midrule
$\eta$  & $\{10^{-3}, 10^{-4}, 10^{-5}, 10^{-6} \}$ \\
$\lamGA$  &  $\{10^{-3}, 10^{-2}, 10^{-1}\}$ \\
$\lamReg$  & $\{10^{-3}, 10^{-2}, 5 \times 10^{-2}, 10^{-1}\}$ \\
$\sigma$  & $\{0.1, 1.0, 10.0\}$ \\
$\TGD$ & $\{0\}$ \\
$\gamReg$  & $\{0.2, 0.4, 0.8 \}$ \\
$\TProj$ & $\{1\}$ \\
$n_{\text{pert}}$ & $\{50\}$ \\
\bottomrule
\end{tabular}
\end{table}

\begin{figure}[h!]
    \centering
    \begin{minipage}{0.49\linewidth}
        \centering
        \includegraphics[width=\linewidth, height=5.5cm]{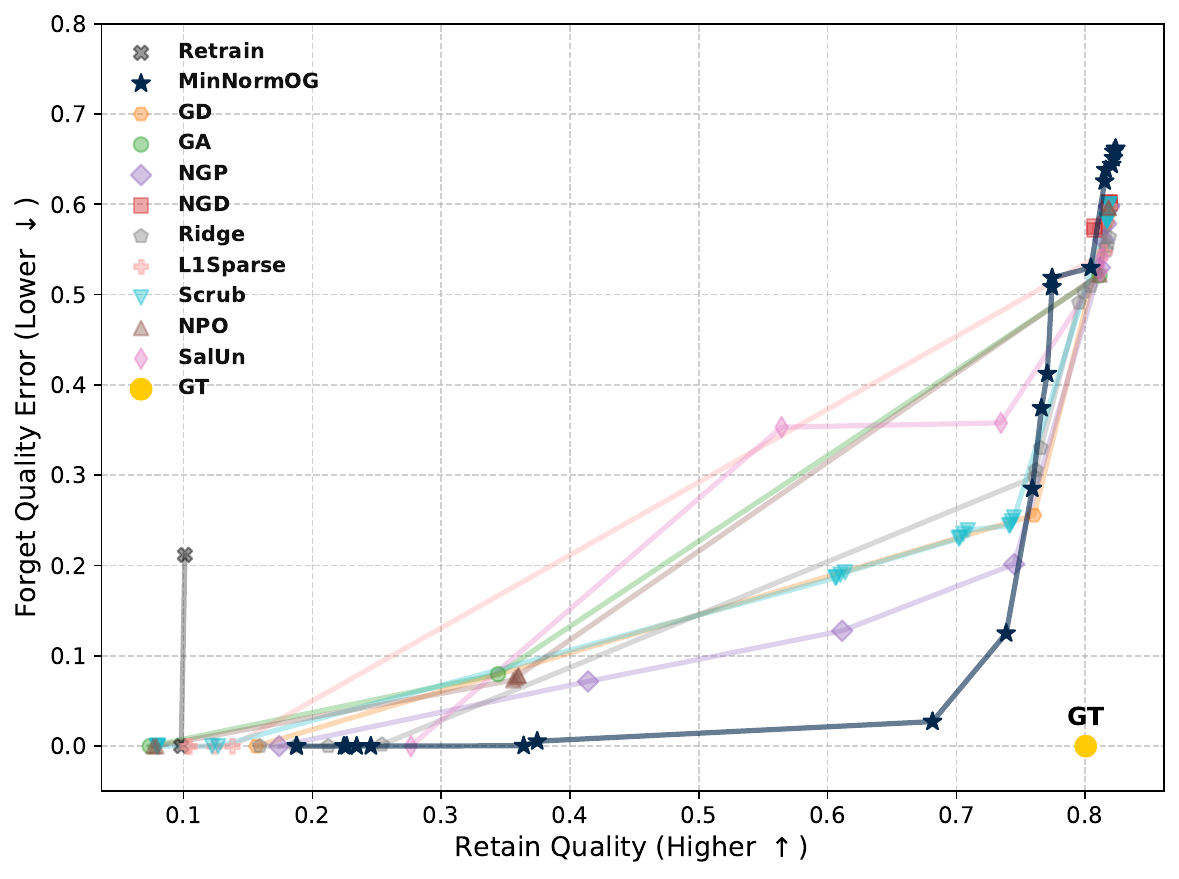}
    \end{minipage}
    \hfill
    \begin{minipage}{0.49\linewidth}
        \centering
        \includegraphics[width=\linewidth, height=5.5cm]{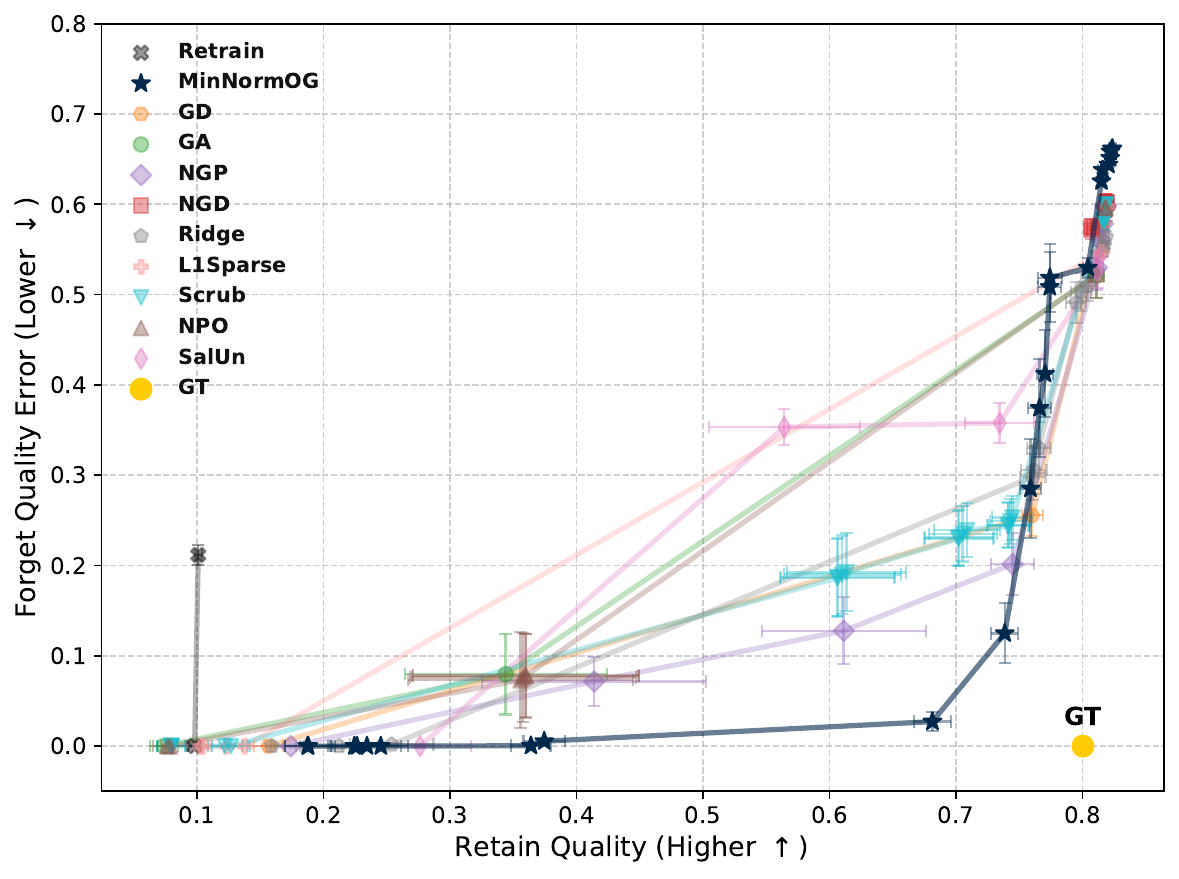}
    \end{minipage}
    \caption{Pareto frontiers for each method across hyperparameter settings in the Multi-Class Label Erasure task on CIFAR-10 with $p_{\text{color}}=.01$, $\pRet = .001$ and $T = 2$. The left panel omits error bars for visual clarity.}
    \label{fig:multi-erase-cifar-pcolor-01-epochs2-ret001-app}
\end{figure}

\begin{table}[h!]
\centering
\caption{Hyperparameter values tested for the results in Figure~\ref{fig:multi-erase-cifar-pcolor-01-epochs5-ret001-app} running the Multi-Label Class Erasure experiment on CIFAR-10 with $p_{\text{color}}=.01$, $\pRet = .001$ and $T = 5$.}
\label{tab:multi-erase-cifar-pcolor-01-epochs5-ret001-app}
\small
\begin{tabular}{ll}
\toprule
\textbf{Hyperparameter} & \textbf{Sweep Values} \\
\midrule
$\eta$  & $\{10^{-3}, 5 \times 10^{-4}, 10^{-4}, 5 \times 10^{-5}, 10^{-5}, 10^{-6} \}$ \\
$\lamGA$  &  $\{10^{-3}, 10^{-2}, 10^{-1}\}$ \\
$\lamReg$  & $\{10^{-3}, 10^{-2}, 5 \times 10^{-2}, 10^{-1}\}$ \\
$\sigma$  & $\{0.1, 1.0, 10.0\}$ \\
$\TGD$ & $\{1,2,3\}$ \\
$\gamReg$  & $\{0.2, 0.4, 0.8 \}$ \\
$\TProj$ & $\{1\}$ \\
$n_{\text{pert}}$ & $\{50\}$ \\
\bottomrule
\end{tabular}
\end{table}

\clearpage

\begin{figure}[h!]
    \centering
    \begin{minipage}{0.49\linewidth}
        \centering
        \includegraphics[width=\linewidth, height=5.5cm]{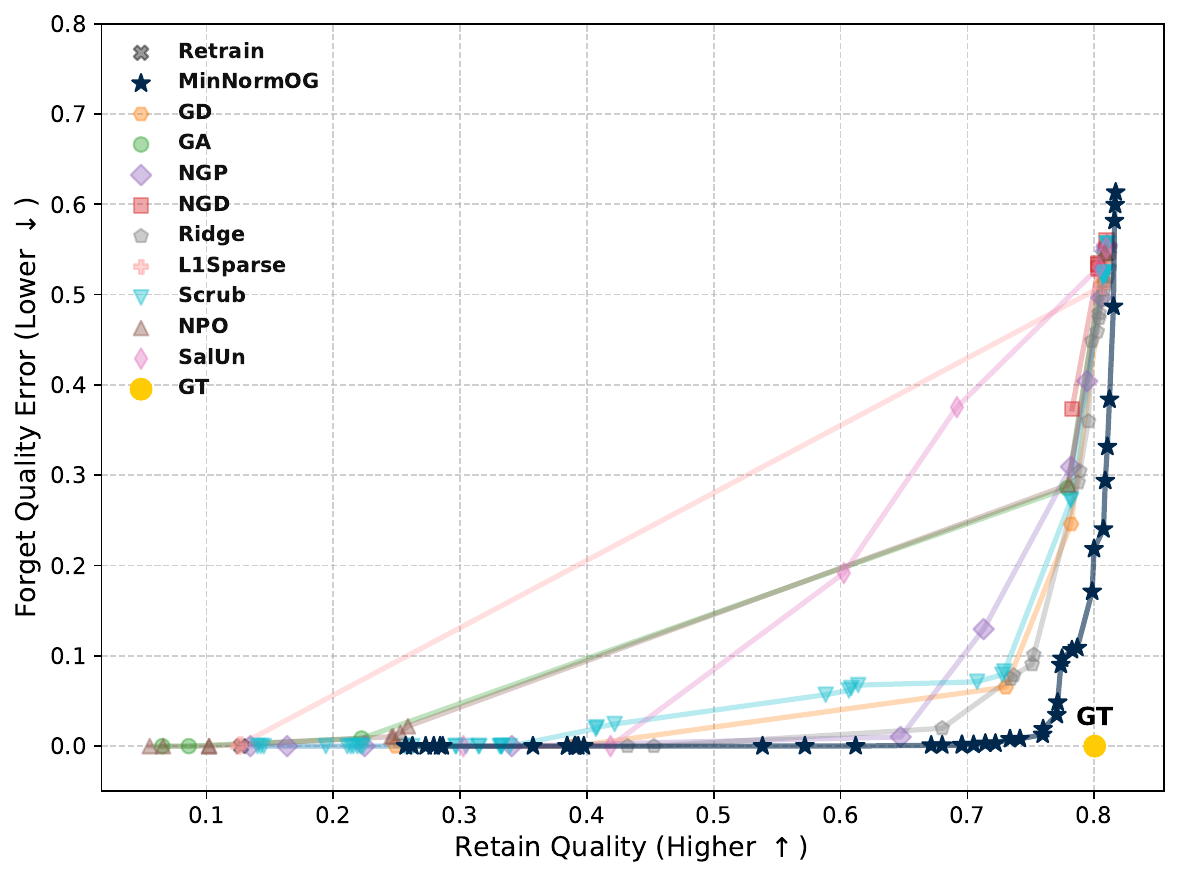}
    \end{minipage}
    \hfill
    \begin{minipage}{0.49\linewidth}
        \centering
        \includegraphics[width=\linewidth, height=5.5cm]{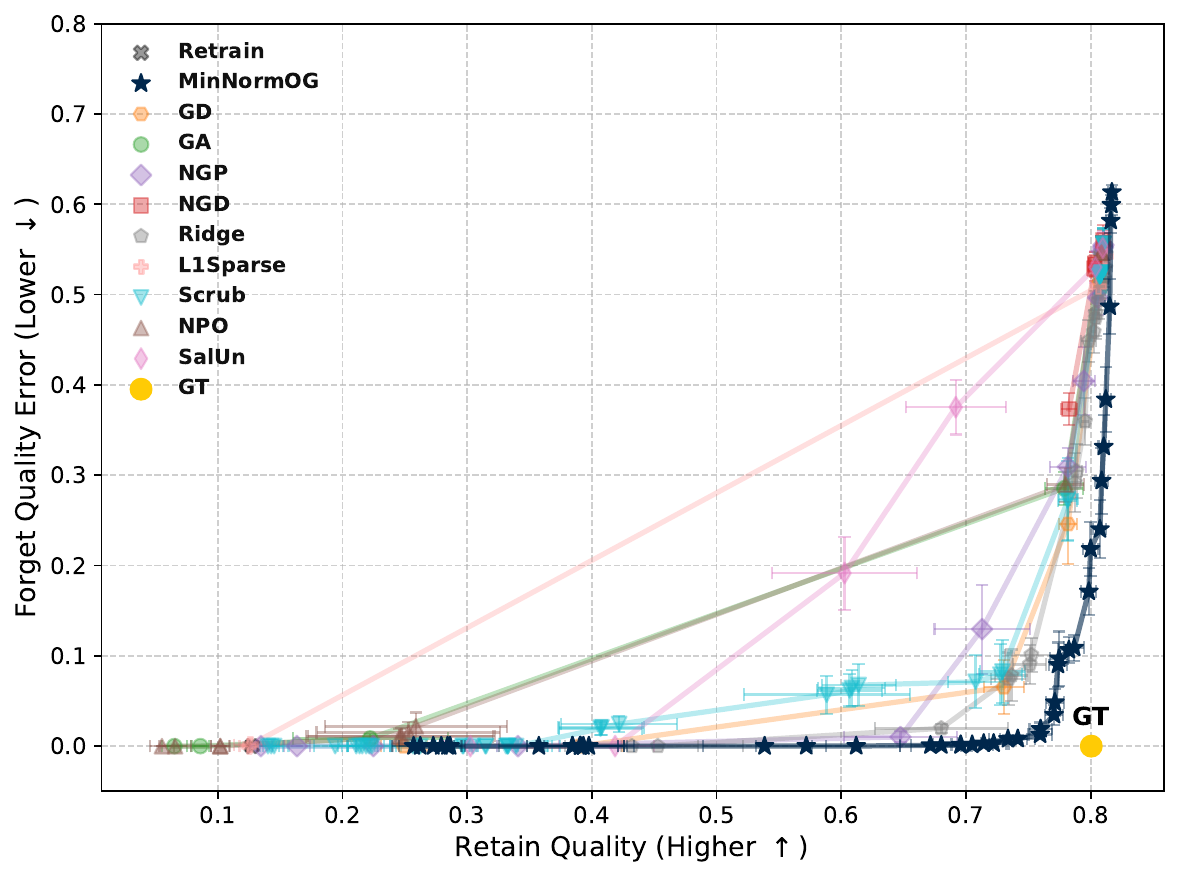}
    \end{minipage}
    \caption{Pareto frontiers for each method across hyperparameter settings in the Multi-Class Label Erasure task on CIFAR-10 with $p_{\text{color}}=.01$, $\pRet = .001$ and $T = 5$. The left panel omits error bars for visual clarity.}
    \label{fig:multi-erase-cifar-pcolor-01-epochs5-ret001-app}
\end{figure}

\begin{table}[h!]
\centering
\caption{Hyperparameter values tested for the results in Figure~\ref{fig:multi-erase-cifar-pcolor-001-epochs10-ret001-app} running the Multi-Label Class Erasure experiment on CIFAR-10 with $p_{\text{color}}=.001$, $\pRet = .001$ and $T = 10$.}
\label{tab:multi-erase-cifar-pcolor-001-epochs10-ret001-app}
\small
\begin{tabular}{ll}
\toprule
\textbf{Hyperparameter} & \textbf{Sweep Values} \\
\midrule
$\eta$  & $\{10^{-3}, 10^{-4}, 10^{-5}, 10^{-6} \}$ \\
$\lamGA$  &  $\{10^{-3}, 10^{-2}, 10^{-1}\}$ \\
$\lamReg$  & $\{10^{-3}, 10^{-2}, 5 \times 10^{-2}, 10^{-1}\}$ \\
$\sigma$  & $\{0.1, 1.0, 10.0\}$ \\
$\TGD$ & $\{0,1,2,5\}$ \\
$\gamReg$  & $\{0.2, 0.4, 0.8 \}$ \\
$\TProj$ & $\{1\}$ \\
$n_{\text{pert}}$ & $\{50\}$ \\
\bottomrule
\end{tabular}
\end{table}

\begin{figure}[h!]
    \centering
    \begin{minipage}{0.49\linewidth}
        \centering
        \includegraphics[width=\linewidth, height=5.5cm]{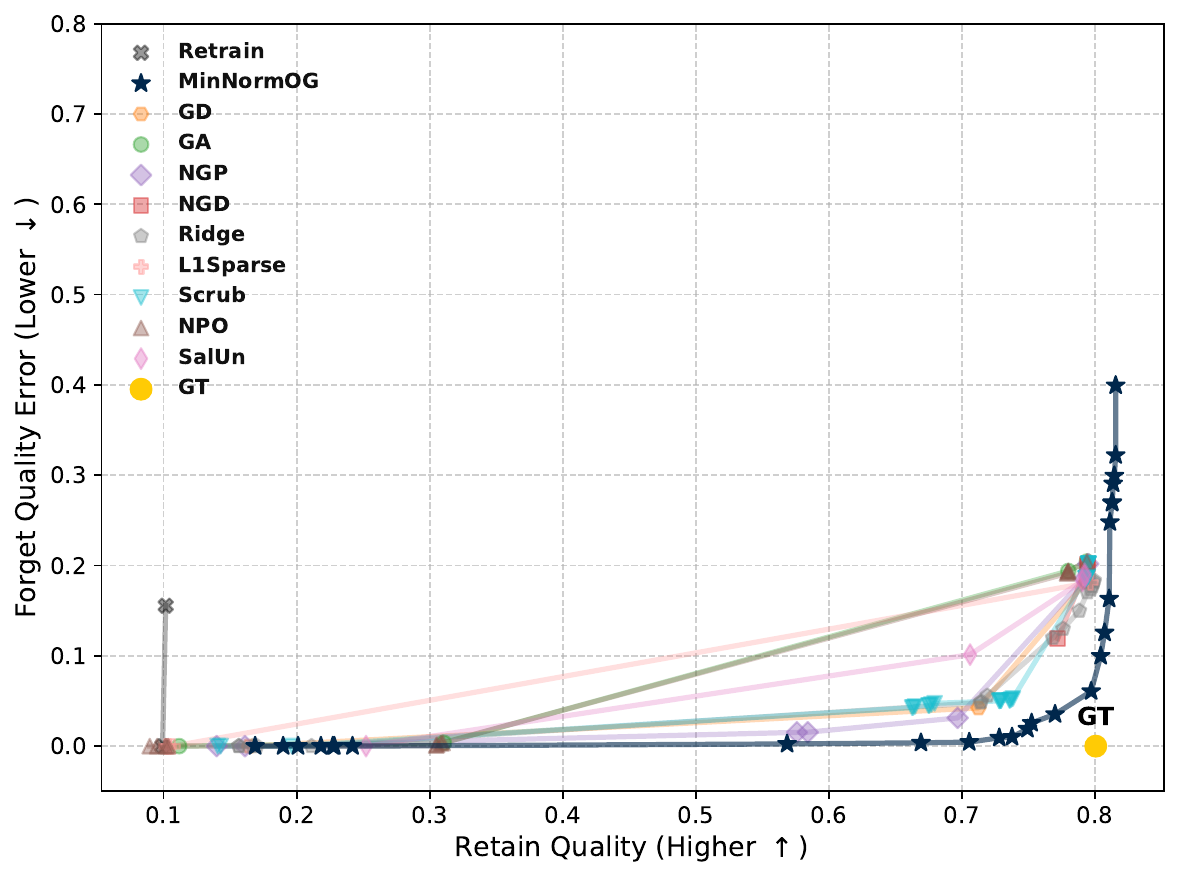}
    \end{minipage}
    \hfill
    \begin{minipage}{0.49\linewidth}
        \centering
        \includegraphics[width=\linewidth, height=5.5cm]{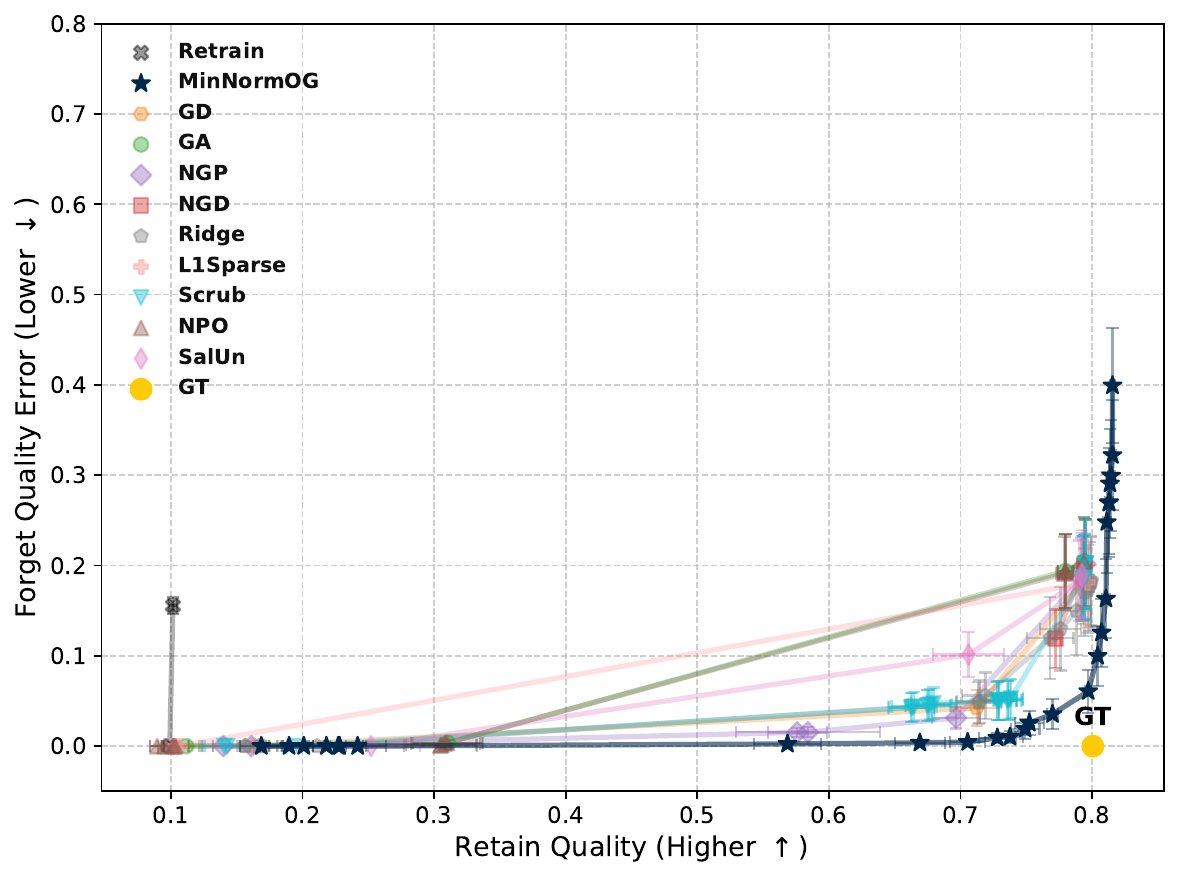}
    \end{minipage}
    \caption{Pareto frontiers for each method across hyperparameter settings in the Multi-Class Label Erasure task on CIFAR-10 with $p_{\text{color}} = .001$, $\pRet = .001$, and $T = 10$. The left panel omits error bars for visual clarity.}
    \label{fig:multi-erase-cifar-pcolor-001-epochs10-ret001-app}
\end{figure}

\clearpage

\begin{table}[h!]
\centering
\caption{Hyperparameter values tested for the results in Figure~\ref{fig:multi-erase-cifar-pcolor-001-epochs10-ret01-app} running the Multi-Label Class Erasure experiment on CIFAR-10 with $p_{\text{color}}=.001$, $\pRet = .01$ and $T = 10$.}
\label{tab:multi-erase-cifar-pcolor-001-epochs10-ret01-app}
\small
\begin{tabular}{ll}
\toprule
\textbf{Hyperparameter} & \textbf{Sweep Values} \\
\midrule
$\eta$  & $\{10^{-3}, 10^{-4}, 10^{-5}, 10^{-6} \}$ \\
$\lamGA$  &  $\{10^{-3}, 10^{-2}, 10^{-1}\}$ \\
$\lamReg$  & $\{10^{-3}, 10^{-2}, 5 \times 10^{-2}, 10^{-1}\}$ \\
$\sigma$  & $\{0.1, 1.0, 10.0\}$ \\
$\TGD$ & $\{0,1,2,5\}$ \\
$\gamReg$  & $\{0.2, 0.4, 0.8 \}$ \\
$\TProj$ & $\{1\}$ \\
$n_{\text{pert}}$ & $\{50\}$ \\
\bottomrule
\end{tabular}
\end{table}

\begin{figure}[h!]
    \centering
    \begin{minipage}{0.49\linewidth}
        \centering
        \includegraphics[width=\linewidth, height=5.5cm]{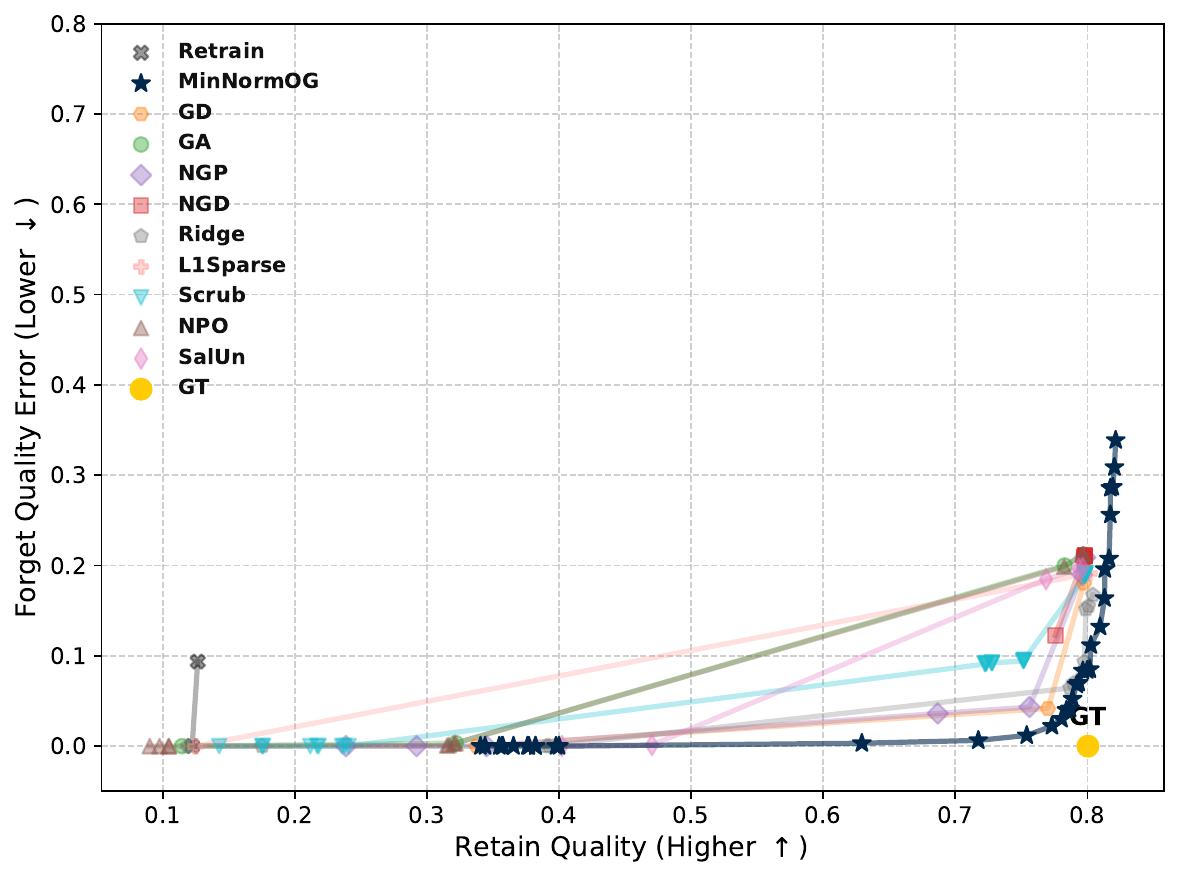}
    \end{minipage}
    \hfill
    \begin{minipage}{0.49\linewidth}
        \centering
        \includegraphics[width=\linewidth, height=5.5cm]{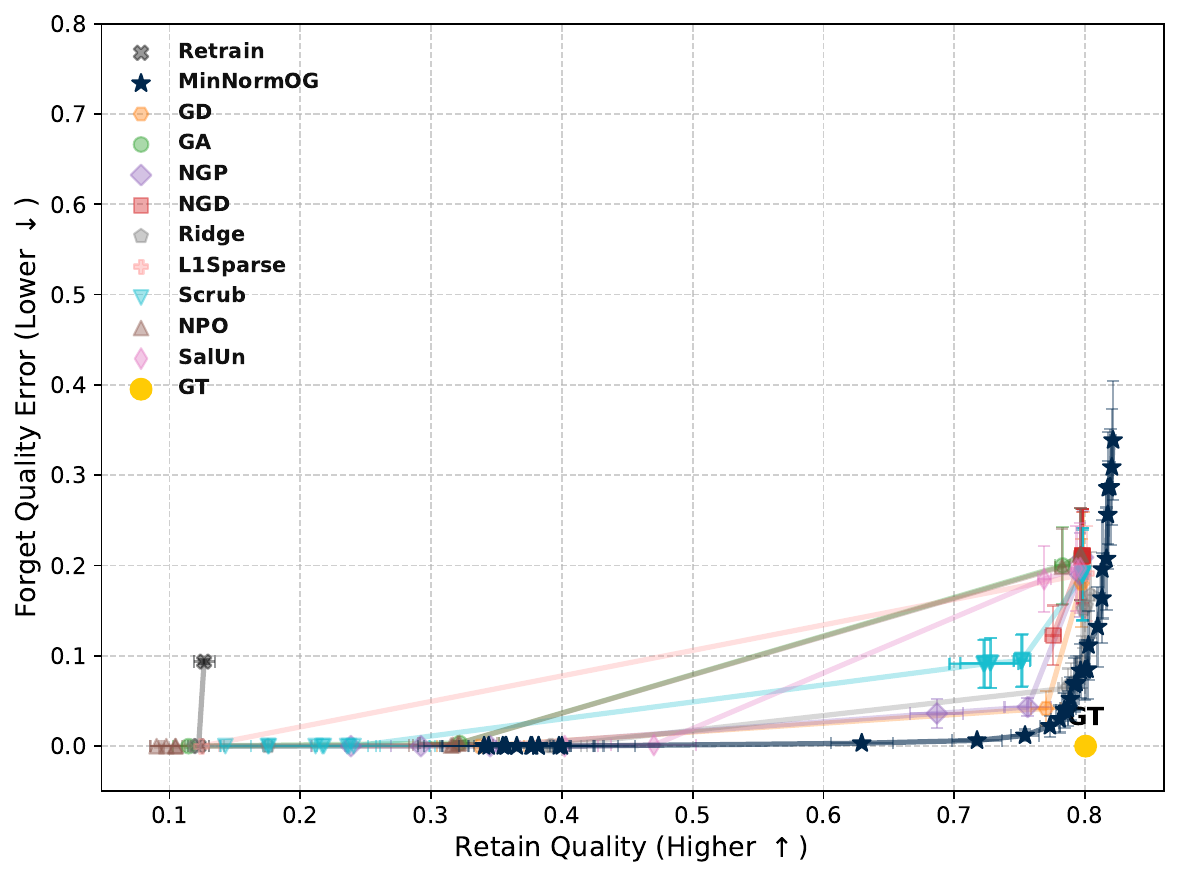}
    \end{minipage}
    \caption{Pareto frontiers for each method across hyperparameter settings in the Multi-Class Label Erasure task on CIFAR-10 with $p_{\text{color}} = .001$, $\pRet = .01$, and $T = 10$. The left panel omits error bars for visual clarity.}
    \label{fig:multi-erase-cifar-pcolor-001-epochs10-ret01-app}
\end{figure}

\begin{table}[h!]
\centering
\caption{Hyperparameter values tested for the results in Figure~\ref{fig:multi-erase-tiny-pcolor-01-epochs10-ret001-app} running the Multi-Label Class Erasure experiment on Tiny ImageNet with $p_{\text{color}}=.01$, $\pRet = .001$ and $T = 10$.}
\label{tab:multi-erase-tiny-pcolor-01-epochs10-ret001-app}
\small
\begin{tabular}{ll}
\toprule
\textbf{Hyperparameter} & \textbf{Sweep Values} \\
\midrule
$\eta$  & $\{10^{-3}, 10^{-4}, 10^{-5}, 10^{-6} \}$ \\
$\lamGA$  &  $\{10^{-3}, 10^{-2}, 10^{-1}\}$ \\
$\lamReg$  & $\{10^{-3}, 10^{-2}, 5 \times 10^{-2}, 10^{-1}\}$ \\
$\sigma$  & $\{0.1, 1.0, 10.0\}$ \\
$\TGD$ & $\{0,1,2\}$ \\
$\gamReg$  & $\{0.2, 0.4, 0.8, 0.9\}$ \\
$\TProj$ & $\{1\}$ \\
$n_{\text{pert}}$ & $\{50\}$ \\
\bottomrule
\end{tabular}
\end{table}

\clearpage

\begin{figure}[h!]
    \centering
    \begin{minipage}{0.49\linewidth}
        \centering
        \includegraphics[width=\linewidth, height=5.5cm]{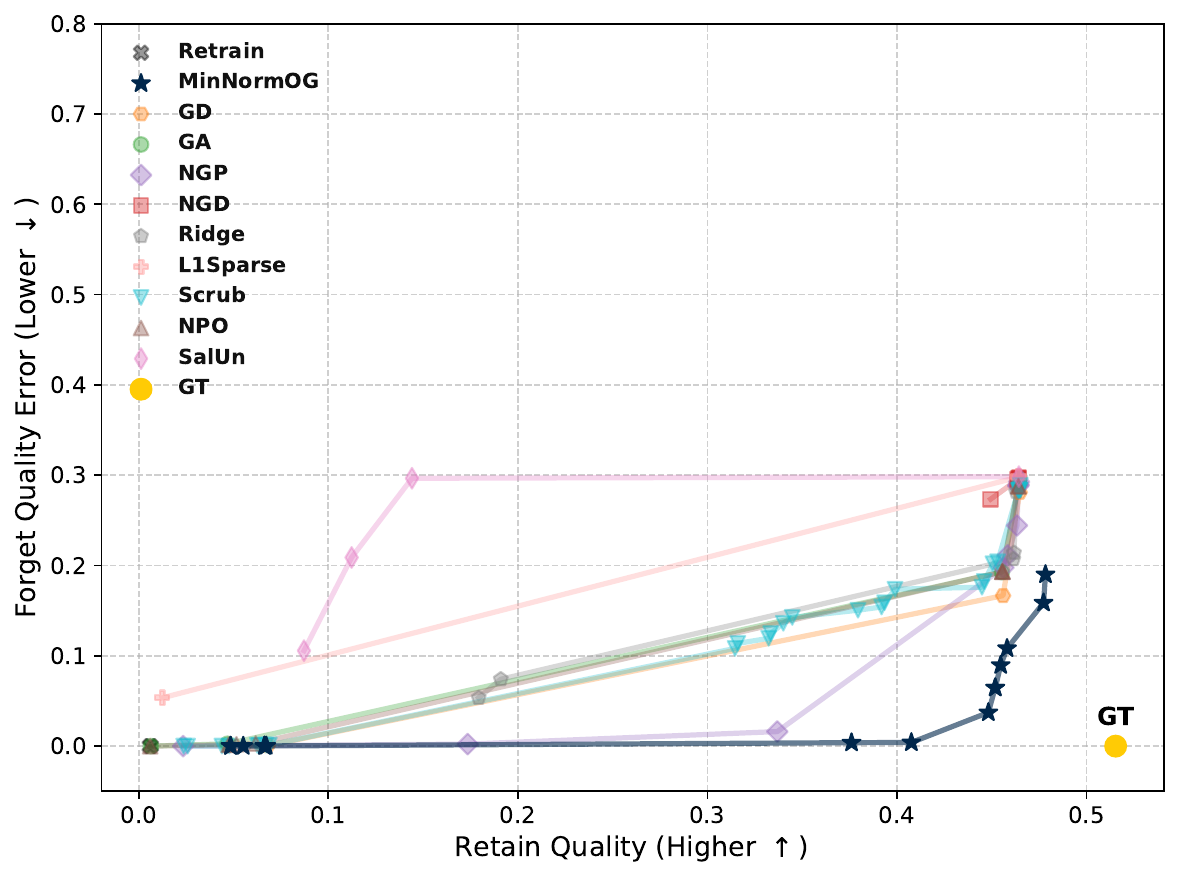}
    \end{minipage}
    \hfill
    \begin{minipage}{0.49\linewidth}
        \centering
        \includegraphics[width=\linewidth, height=5.5cm]{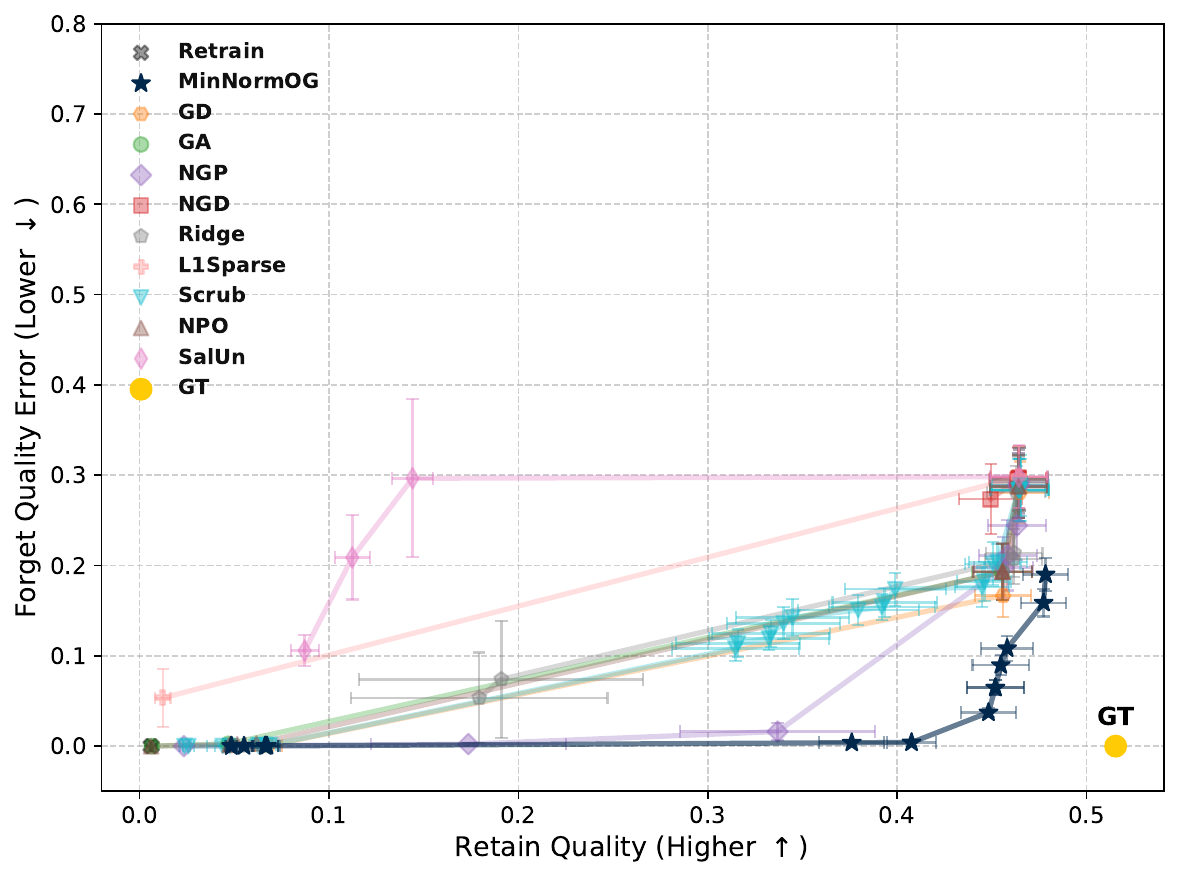}
    \end{minipage}
    \caption{Pareto frontiers for each method across hyperparameter settings in the Multi-Class Label Erasure task on Tiny ImageNet with $p_{\text{color}} = .01$, $\pRet = .001$, and $T = 10$. The left panel omits error bars for visual clarity.}
    \label{fig:multi-erase-tiny-pcolor-01-epochs10-ret001-app}
\end{figure}

\begin{table}[h!]
\centering
\caption{Hyperparameter values tested for the results in Figure~\ref{fig:multi-erase-tiny-pcolor-01-epochs10-ret01-app} running the Multi-Label Class Erasure experiment on Tiny ImageNet with $p_{\text{color}}=.01$, $\pRet = .01$ and $T = 10$.}
\label{tab:multi-erase-tiny-pcolor-01-epochs10-ret01-app}
\small
\begin{tabular}{ll}
\toprule
\textbf{Hyperparameter} & \textbf{Sweep Values} \\
\midrule
$\eta$  & $\{10^{-3}, 10^{-4}, 10^{-5} \}$ \\
$\lamGA$  &  $\{10^{-3}, 10^{-2}, 10^{-1}\}$ \\
$\lamReg$  & $\{10^{-3}, 10^{-2}, 5 \times 10^{-2}, 10^{-1}\}$ \\
$\sigma$  & $\{0.1, 1.0, 10.0\}$ \\
$\TGD$ & $\{0,1,2\}$ \\
$\gamReg$  & $\{0.2, 0.4, 0.8, 0.9\}$ \\
$\TProj$ & $\{1\}$ \\
$n_{\text{pert}}$ & $\{50\}$ \\
\bottomrule
\end{tabular}
\end{table}

\begin{figure}[h!]
    \centering
    \begin{minipage}{0.49\linewidth}
        \centering
        \includegraphics[width=\linewidth, height=5.5cm]{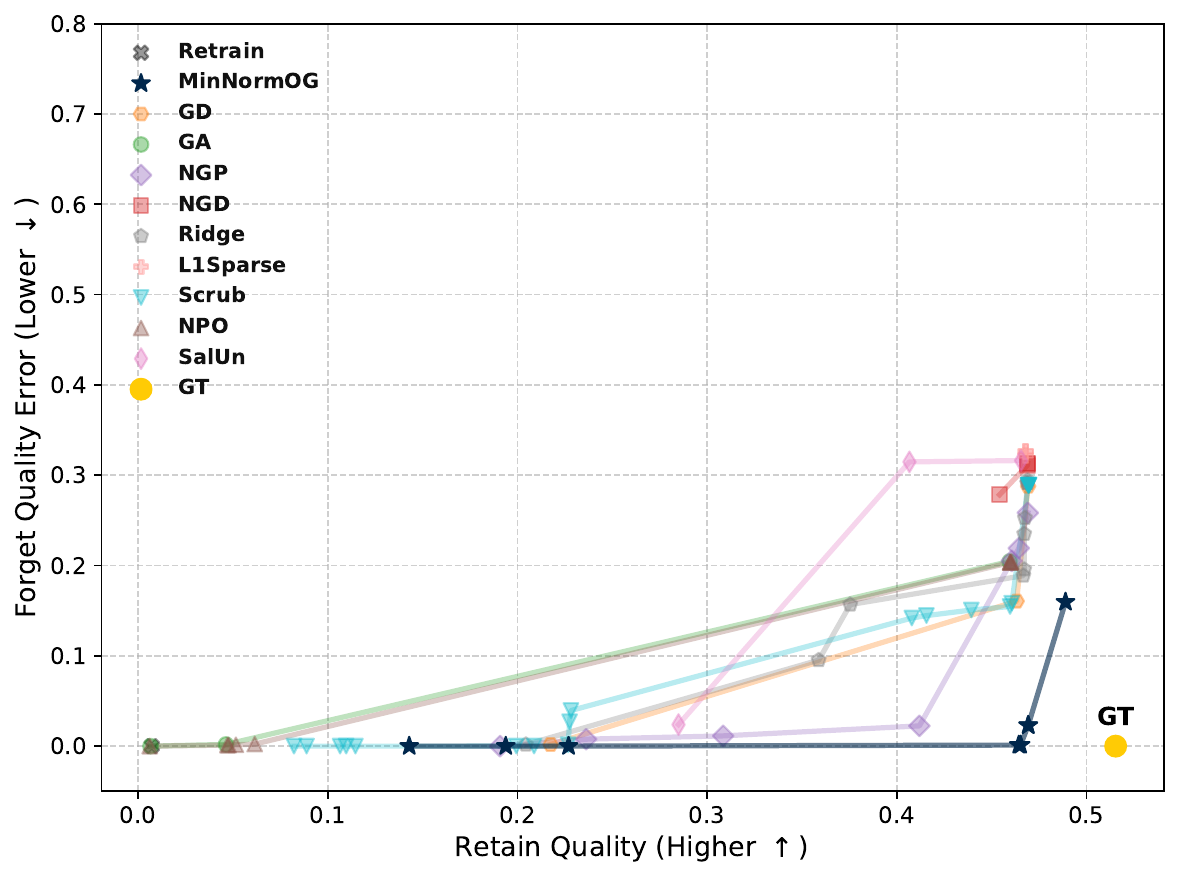}
    \end{minipage}
    \hfill
    \begin{minipage}{0.49\linewidth}
        \centering
        \includegraphics[width=\linewidth, height=5.5cm]{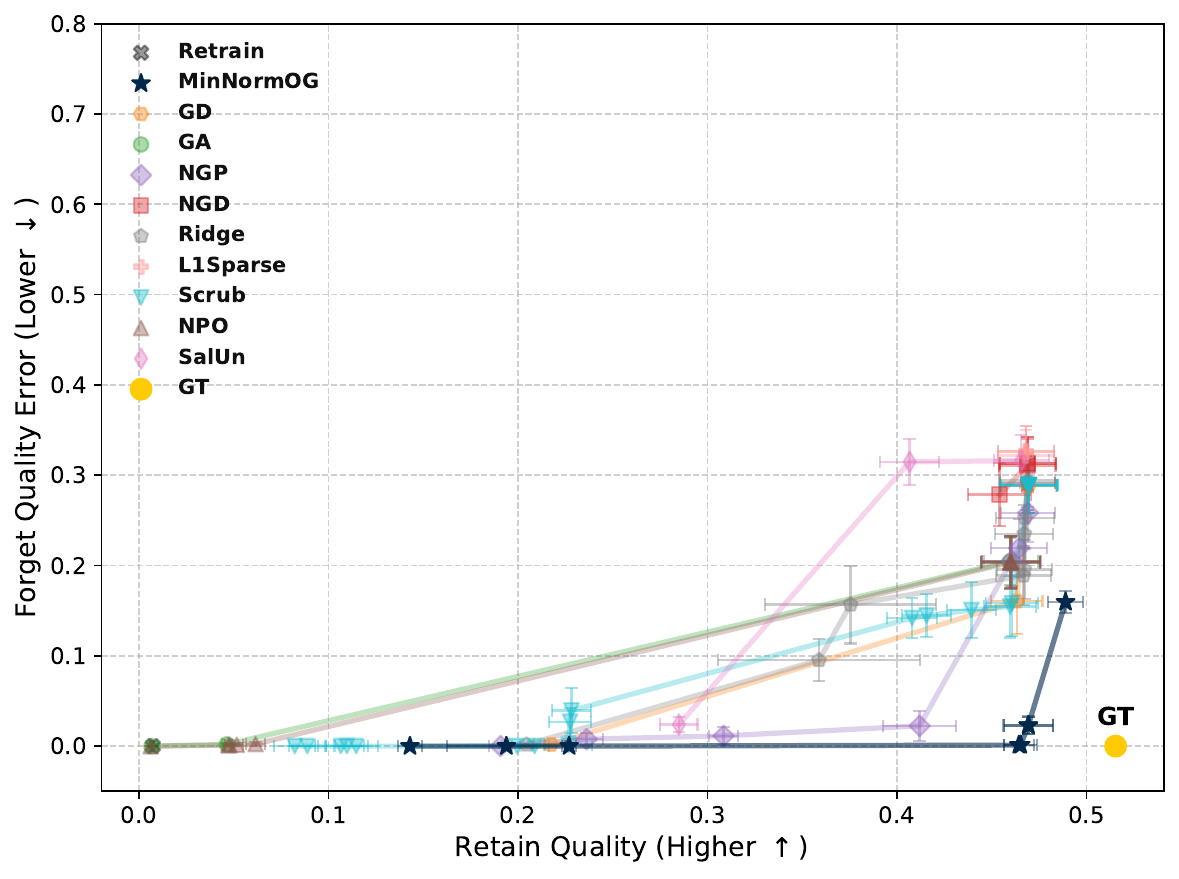}
    \end{minipage}
    \caption{Pareto frontiers for each method across hyperparameter settings in the Multi-Class Label Erasure task on Tiny ImageNet with $p_{\text{color}} = .01$, $\pRet = .01$, and $T = 10$. The left panel omits error bars for visual clarity.}
    \label{fig:multi-erase-tiny-pcolor-01-epochs10-ret01-app}
\end{figure}
\clearpage

\subsection{Representation Collapse}

We again use the CIFAR-10 dataset with a modified ResNet-18 architecture, assigning each of the 10 image classes a unique color. The retain set $\Data_r$ consists of images colored according to their assigned class color, while the forget set $\Data_f$ contains randomly colored images. The ground-truth unlearned model predicts class labels based solely on color, as color perfectly determines the label and is easier to learn than image content. In contrast, models trained on the full dataset $\Data = \Data_r \sqcup \Data_f$ must rely on content features, since color is no longer fully predictive of the label. Thus, for unlearning evaluation, we label heldout test images by color and assess unlearning via color-label accuracy, testing if the unlearning methods can collapse the original model into just a color classifier.

To construct $\Data_f$, we randomly select 1\% of the training images per class and assign each a random color chosen from the remaining nine class colors. For the initial model, we first warm-start by training for 20 epochs on gray images (a color not used in the class assignments) to encourage learning general image structure. We then train for 50 epochs on the combined retain and forget sets, up-weighting the loss contribution of off-colored samples to promote fast color-invariant learning. We used the SGD optimizer with an initial learning rate of $3 \times 10^{-2}$, weight decay of $5 \times 10^{-4}$, momentum of $0.9$, a batch size of 256, and a multiplicative learning rate decay of $0.1$ every 20 epochs. The ground-truth unlearned model is trained for 50 epochs on the retain set using the same optimizer settings. All training was performed on a cluster of NVIDIA H200 GPUs.

We report results mean results over 5 trials in Table~\ref{tab:rep-collapse-app}, which presents the same results as Table~\ref{tab:rep-collapse} along with standard errors for each entry. We test each method under different constraints on the number of unlearning epochs and percentage of accessible retain set samples. Specifically, for each random trial we randomly sample a proportion of $\pRet \in [0,1]$ samples from $\Data_r$ which is accessible to each unlearning method. The ''Retain \%" column Table~\ref{tab:rep-collapse-app} represents $100 \times \pRet$. Just as in the Multi-Class Label Erasure experiment, during each unlearning epoch the algorithms iterate over batches from the forget set and sample a corresponding batch of the same size from the available retained data. The epoch ends once all forget set batches have been processed, regardless of whether there are unused retain set samples remaining. Any unused retain batches are not discarded—they will be sampled in subsequent epochs. Once all available retain set batches have been used at least once, the sampling process begins again from the start of the available retain set samples.

Retrain achieves much higher color classification accuracy than any unlearning method, as image color is an easy feature to learn from scratch. This result is specific to this setup, as our prior experiments demonstrate that Retrain is not a viable unlearning strategy in general. For this reason, we report the Retrain results separately in the leftmost column and highlight the \alg results in bold, as they represent the best performance among the unlearning methods in each row.

For each constraint setting on the number of epochs and the Retain \%, we tested different hyperparameters before reporting the best performance for each method. Tables~\ref{tab:rep-collapse-params-epochs5-ret001-app}, \ref{tab:rep-collapse-params-epochs10-ret001-app}, \ref{tab:rep-collapse-params-epochs15-ret001-app}, \ref{tab:rep-collapse-params-epochs5-ret01-app}, and \ref{tab:rep-collapse-params-epochs10-ret01-app} report the set of hyperparameter values we tested for each row of Table~\ref{tab:rep-collapse-app}.

\begin{table}[t!]
\centering
\caption{Representation Collapse experiment results across constraints on the number of unlearning epochs and percentage of accessible retain set samples. Models are trained on colored images where color perfectly predicts the label in the retain set but not in the full dataset $\Data$. Reported values are mean test accuracies (\%) on test images labeled by color (higher is better), averaged over 5 trials $\pm$ standard error.}
\label{tab:rep-collapse-app}
\resizebox{\textwidth}{!}{
\begin{tabular}{ll|c|cccccccccc}

\toprule
\textbf{Retain \%} & \textbf{Epochs} & \textbf{Retrain} & \textbf{\alg} & \textbf{GD} & \textbf{GA} & \textbf{NGP} & \textbf{NGD} & \textbf{Ridge} & \textbf{$\ell_1$-Sparse} & \textbf{Scrub} & \textbf{NPO} & \textbf{SalUn} \\
\midrule
\multirow{3}{*}{0.1} 
  & 5  & 79.1$\pm$4.5  & \textbf{49.1}$\pm$6.1  & 24.0$\pm$3.2  & 34.5$\pm$4.4  & 45.9$\pm$3.4  & 12.6$\pm$0.2  & 20.9$\pm$3.7  & 12.4$\pm$0.3  & 37.3$\pm$3.7  & 40.6$\pm$3.7  & 29.4$\pm$5.5 \\
  & 10 & 95.5$\pm$1.1  & \textbf{78.7}$\pm$3.4  & 55.2$\pm$4.3  & 38.3$\pm$5.4  & 73.7$\pm$3.8  & 33.0$\pm$2.2  & 60.0$\pm$3.7  & 23.0$\pm$6.5  & 61.9$\pm$3.4  & 43.2$\pm$6.9  & 53.4$\pm$7.9 \\
  & 15 & 96.3$\pm$0.8  & \textbf{93.2}$\pm$1.3  & 78.2$\pm$3.3  & 39.8$\pm$5.1  & 81.5$\pm$1.6  & 46.9$\pm$3.4  & 81.3$\pm$2.7  & 25.1$\pm$4.7  & 74.1$\pm$3.9  & 44.6$\pm$7.6  & 76.3$\pm$5.3 \\
\midrule
\multirow{2}{*}{1}
  & 5  & 92.8$\pm$2.6  & \textbf{59.5}$\pm$6.1  & 40.7$\pm$2.4  & 34.2$\pm$4.7  & 58.3$\pm$6.0  & 32.9$\pm$3.6  & 33.5$\pm$6.1  & 12.5$\pm$0.2  & 47.8$\pm$4.9  & 42.5$\pm$4.1  & 42.5$\pm$2.7 \\
  & 10 & 98.5$\pm$0.1  & \textbf{94.7}$\pm$1.1  & 73.6$\pm$3.0  & 38.2$\pm$5.4  & 92.4$\pm$2.5  & 70.5$\pm$3.2  & 63.4$\pm$5.0  & 31.4$\pm$6.6  & 75.8$\pm$2.8  & 43.5$\pm$6.9  & 68.7$\pm$4.0 \\
\bottomrule

\end{tabular}
}
\end{table}

\begin{table}[h]
\centering
\caption{Hyperparameter values considered for the Representation Collapse Experiment with $T = 5$ and $\pRet = 0.001$.}
\label{tab:rep-collapse-params-epochs5-ret001-app}
\small
\begin{tabular}{ll}
\toprule
\textbf{Hyperparameter} & \textbf{Values} \\
\midrule
$\eta$               & $\{10^{-3},\ 5 \times 10^{-4},\ 10^{-4}\}$ \\
$\lamGA$             & $\{10^{-3},\ 10^{-2},\ 10^{-1}\}$ \\
$\lamReg$            & $\{0.1,\ 0.2,\ 0.5,\ 0.9\}$ \\
$\sigma$             & $\{0.1,\ 1.0,\ 10.0\}$ \\
$\TGD$               & $\{1,\ 2,\ 3\}$ \\
$\gamReg$            & $\{0.2,\ 0.4,\ 0.8\}$ \\
$\TProj$             & $\{1\}$ \\
$n_{\text{pert}}$    & $\{50\}$ \\
\bottomrule
\end{tabular}
\end{table}

\begin{table}[h]
\centering
\caption{Hyperparameter values considered for the Representation Collapse Experiment with $T = 10$ and $\pRet = 0.001$.}
\label{tab:rep-collapse-params-epochs10-ret001-app}
\small
\begin{tabular}{ll}
\toprule
\textbf{Hyperparameter} & \textbf{Values} \\
\midrule
$\eta$               & $\{5 \times 10^{-3},\ 10^{-3},\ 5 \times 10^{-4},\ 10^{-4}\}$ \\
$\lamGA$             & $\{10^{-3},\ 10^{-2},\ 10^{-1}\}$ \\
$\lamReg$            & $\{0.01,\ 0.1,\ 0.5,\ 0.9\}$ \\
$\sigma$             & $\{0.1,\ 1.0,\ 10.0\}$ \\
$\TGD$               & $\{3,\ 5,\ 7\}$ \\
$\gamReg$            & $\{0.2,\ 0.4,\ 0.8\}$ \\
$\TProj$             & $\{1\}$ \\
$n_{\text{pert}}$    & $\{50\}$ \\
\bottomrule
\end{tabular}
\end{table}

\clearpage

\begin{table}[h]
\centering
\caption{Hyperparameter values considered for the Representation Collapse Experiment with $T = 15$ and $\pRet = 0.001$.}
\label{tab:rep-collapse-params-epochs15-ret001-app}
\small
\begin{tabular}{ll}
\toprule
\textbf{Hyperparameter} & \textbf{Values} \\
\midrule
$\eta$               & $\{5 \times 10^{-3},\ 10^{-3},\ 5 \times 10^{-4},\ 10^{-4}\}$ \\
$\lamGA$             & $\{10^{-3},\ 10^{-2},\ 10^{-1}\}$ \\
$\lamReg$            & $\{0.01,\ 0.1,\ 0.5,\ 0.9\}$ \\
$\sigma$             & $\{0.1,\ 1.0,\ 10.0\}$ \\
$\TGD$               & $\{3,\ 5,\ 7\}$ \\
$\gamReg$            & $\{0.2,\ 0.4,\ 0.8\}$ \\
$\TProj$             & $\{1\}$ \\
$n_{\text{pert}}$    & $\{50\}$ \\
\bottomrule
\end{tabular}
\end{table}

\begin{table}[h]
\centering
\caption{Hyperparameter values considered for the Representation Collapse Experiment with $T = 5$ and $\pRet = 0.01$.}
\label{tab:rep-collapse-params-epochs5-ret01-app}
\small
\begin{tabular}{ll}
\toprule
\textbf{Hyperparameter} & \textbf{Values} \\
\midrule
$\eta$               & $\{5 \times 10^{-3},\ 10^{-3},\ 5 \times 10^{-4},\ 10^{-4}\}$ \\
$\lamGA$             & $\{10^{-3},\ 10^{-2},\ 10^{-1}\}$ \\
$\lamReg$            & $\{0.01,\ 0.1,\ 0.5,\ 0.9\}$ \\
$\sigma$             & $\{0.1,\ 1.0,\ 10.0\}$ \\
$\TGD$               & $\{1,\ 2,\ 3\}$ \\
$\gamReg$            & $\{0.2,\ 0.4,\ 0.8\}$ \\
$\TProj$             & $\{1\}$ \\
$n_{\text{pert}}$    & $\{50\}$ \\
\bottomrule
\end{tabular}
\end{table}

\begin{table}[h!]
\centering
\caption{Hyperparameter values considered for the Representation Collapse Experiment with $T = 10$ and $\pRet = 0.01$.}
\label{tab:rep-collapse-params-epochs10-ret01-app}
\small
\begin{tabular}{ll}
\toprule
\textbf{Hyperparameter} & \textbf{Values} \\
\midrule
$\eta$               & $\{5 \times 10^{-3},\ 10^{-3},\ 5 \times 10^{-4},\ 10^{-4}\}$ \\
$\lamGA$             & $\{10^{-3},\ 10^{-2},\ 10^{-1}\}$ \\
$\lamReg$            & $\{0.01,\ 0.1,\ 0.5,\ 0.9\}$ \\
$\sigma$             & $\{0.1,\ 1.0,\ 10.0\}$ \\
$\TGD$               & $\{3,\ 5,\ 7\}$ \\
$\gamReg$            & $\{0.2,\ 0.4,\ 0.8\}$ \\
$\TProj$             & $\{1\}$ \\
$n_{\text{pert}}$    & $\{50\}$ \\
\bottomrule
\end{tabular}
\end{table}

\subsection{Asset Information}
We use the CIFAR-10 \citep{krizhevsky:2009:cifar} and Tiny ImageNet \citep{tinyimagenet} datasets which are publicly available but do not specify an explicit license. Additionally, we use the ResNet-18 and ResNet-50 \citep{he:2016:resnet} architectures and pretrained weights from PyTorch's \texttt{torchvision} library, which are licensed under the BSD 3-Clause License.


\end{document}